\newtheorem{assumption}{Assumption}
\newtheorem{remark}{Remark}
\newtheorem{theorem}{Theorem}
\newtheorem{lemma}{Lemma}
\newtheorem{corollary}{Corollary}
\newcommand{\Av}{\mathbf{A}}
\newcommand{\Sv}{\mathbf{S}}%% DEFINITIONS BY AUTHORS:
\newcommand{\yv}{\mathbf{y}}
\newcommand{\zv}{\mathbf{z}}
\newcommand{\xv}{\mathbf{x}}
\newcommand{\uv}{\mathbf{u}}
\newcommand{\dv}{\mathbf{d}}
\newcommand{\Wve}{\mathbf{W_e}}
\newcommand{\Wv}{\mathbf{W}}
\newcommand{\Lc}{\mathcal{L}}
\newcommand{\hLc}{\hat{\Lc}_A}
\newcommand{\Wlista}{\tilde{\mathbf{W}}_{\mathbf{e}}}
\newcommand{\Slista}{\tilde{\Sv}}
\newcommand{\Lu}[1]{\mathcal{L}_{#1}}
\newcommand{\phiv}{\boldsymbol{\phi}}
\newcommand{\thetav}{\boldsymbol{\theta}}
\newcommand{\RR}{\mathbb{R}}
\newcommand{\norm}[1]{\left\lVert#1\right\rVert}
\newcommand{\argmin}[1]{\text{arg}\min_{#1}}
\newcommand{\iter}[1]{^{(#1)}}
\newcommand{\Tt}{^{\text{T}}}
\newcommand{\vecTwo}[2]{\begin{bmatrix}#1\\#2\end{bmatrix}}
\newcommand{\centerIm}[2]{\raisebox{-.5\height}{\includegraphics[width = #1]{#2}}}
\newcommand{\soft}{\text{soft}}
\newcommand{\E}[1]{\mathbb{E}\left[#1\right]}
\begin{document}
\title{LSALSA: Accelerated Source Separation via Learned Sparse Coding}

%\thanks{Grants or other notes
%about the article that should go on the front page should be
%placed here. General acknowledgments should be placed at the end of the article.}

%\subtitle{Do you have a subtitle?\\ If so, write it here}

%\titlerunning{Short form of title}        % if too long for running head

\author{Benjamin Cowen \and Apoorva Nandini Saridena \and Anna Choromanska
\and
\hspace{-0.5cm}ben.cowen@nyu.edu\and 
\hspace{0.25cm}ans609@nyu.edu \and 
\hspace{1cm}ac5455@nyu.edu }

%\authorrunning{Short form of author list} % if too long for running head

% \institute{Benjamin Cowen \at
%             New York University Tandon School of Engineering \\
%             6 MetroTech Center, Brooklyn, NY  11201, USA\\
%             Tel.: +1-646-997-3600\\
%               \email{ben.cowen@nyu.edu}
% }

\date{Received: date / Accepted: date}
% The correct dates will be entered by the editor

\maketitle

\begin{abstract}
We propose an efficient algorithm for the generalized sparse coding (SC) inference problem. The proposed framework applies to both the single dictionary setting, where each data point is represented as a sparse combination of the columns of one dictionary matrix, as well as the multiple dictionary setting as given in morphological component analysis (MCA), where the goal is to separate a signal into additive parts such that each part has distinct sparse representation within an appropriately chosen corresponding dictionary. Both the SC task and its generalization via MCA have been cast as $\ell_1$-regularized optimization problems of minimizing quadratic reconstruction error. In an effort to accelerate traditional acquisition of sparse codes, we propose a deep learning architecture that constitutes a trainable time-unfolded version of the Split Augmented Lagrangian Shrinkage Algorithm (SALSA), a special case of the Alternating Direction Method of Multipliers (ADMM). We empirically validate both variants of the algorithm, that we refer to as LSALSA (learned-SALSA), on image vision tasks and demonstrate that at inference our networks achieve vast improvements in terms of the running time and the quality of estimated sparse codes on both classic SC and MCA problems over more common baselines. We also demonstrate the visual advantage of our technique on the task of source separation.
Finally, we present a theoretical framework for analyzing LSALSA network: we show that the proposed approach exactly implements a truncated ADMM applied to a new, learned cost function with curvature modified by one of the learned parameterized matrices. We extend a very recent Stochastic Alternating Optimization analysis framework to show that a gradient descent step along this learned loss landscape is equivalent to a modified gradient descent step along the original loss landscape. In this framework, the acceleration achieved by LSALSA could potentially be explained by the network's ability to learn a correction to the gradient direction of steeper descent.
%\keywords{sparse coding \and morphological component analysis \and deep learning}
% \PACS{PACS code1 \and PACS code2 \and more}
% \subclass{MSC code1 \and MSC code2 \and more}
\end{abstract}

\newpage
\section{Introduction}
\label{intro}
In the SC framework, we seek to efficiently represent data by using only a sparse combination of available basis vectors. We therefore assume that an $M$-dimensional data vector $\yv\in\RR^M$ can be approximated as
\vspace{-0.25cm}
\begin{align}\label{model: lasso}
    \yv \approx \Av\xv^*,
\end{align}
where $\xv^*\in\RR^N$ is sparse and  $\Av\in\RR^{M\times N}$ is a dictionary, sometimes referred to as the synthesis matrix, whose columns are the basis vectors. This paper focuses on the generalized SC problem of decomposing a signal into morphologically distinct components. A typical assumption for this problem is that the data is a linear combination of $D$ source signals:
\vspace{-0.25cm}
\begin{align}\label{model: sigSep}
    \yv = \sum_{i=1}^D\yv_i.
\end{align}

The MCA framework~\cite{mca:orig_starck05} for addressing additive mixtures requires that each component $\yv_i$ admits a sparse representation within the corresponding dictionary $\Av_i$, leading to a generalized signal approximation model:
\vspace{-0.25cm}
\begin{align}\label{model: mca} 
\yv \approx \sum_{i=1}^D\Av_i\xv_i^*.
\end{align}
We then seek to recover $x_i^{*}$s given $y$ and dictionaries $A_i$s. We may trivially satisfy (\ref{model: mca}) by setting, for example, $\xv^*_i=0$ for all $i\neq j$, and performing traditional SC using only dictionary $\Av_j$. Thus, MCA further assumes that the dictionaries $\Av_i$s are \textit{distinct} in the sense that each source-specific dictionary allows obtaining sparse representation of the corresponding source signal, while being highly inefficient in representing the other content in the mixture. This assumption is difficult to enforce on harder problems, i.e. when the components $\yv_i$ have similar characteristics and do not admit intuitive \textit{a priori} sparsifying bases. In practice, the $\Av_i$s often have significant overlap in sparse representation, making the problem of jointly recovering the $\xv_i$s highly ill-conditioned.

There exist iterative optimization algorithms for performing SC and MCA. The bottleneck of these techniques is that at inference a sparse code has to be computed for each data point or data patch (as in case of high-resolution images). In the single dictionary setting, ISTA\cite{cvxOpt:daub04} and FISTA~\cite{Beck:2009:FIS:1658360.1658364} are classical algorithmic choices for this purpose. For the MCA problem, the standard choice is SALSA~\cite{Afonso:2011:ALA:2319096.2321825}, an instance of ADMM\cite{cvxOpt:boyd11}. The iterative optimization process is prohibitively slow for high-throughput real-time applications, especially in the case of the ill-conditioned MCA setting. Thus our goal is to provide algorithms performing efficient inference, i.e. algorithms that find good approximations of the optimal codes in significantly shorter time than FISTA or SALSA. 

The first key contribution of this paper is an efficient and accurate deep learning architecture that is general enough to well-approximate optimal codes for both classic SC in a single-dictionary framework and MCA-based signal separation. By accelerating SALSA via learning, we provide a means for fast approximate source separation. We call our deep learning approximator Learned SALSA (LSALSA). The proposed encoder is formulated as a time-unfolded version of the SALSA algorithm with a fixed number of iterations, where the depth of the deep learning model corresponds to the number of SALSA iterations. We train the deep model in the supervised fashion to predict optimal sparse codes for a given input and show that shallow architectures of fixed-depth, that correspond to only few iterations of the original SALSA, achieve superior performance to the classic algorithm.

The SALSA algorithm uses second-order information about the cost function, which gives it an advantage over popular comparators such as ISTA on ill-conditioned problems~\cite{salsa:istaCompare}. Our second key contribution is an empirical demonstration that this advantage carries over to the deep-learning accelerated versions LSALSA and LISTA~\cite{DBLP:conf/icml/GregorL10}, while preserving SALSA's applicability to a broader class of learning problems such as MCA-based source separation (LISTA is used only in the single dictionary setting).
To the best of our knowledge, our approach is the first one to utilize an instance of ADMM unrolled into a deep learning architecture to address a source separation problem

Our third key contribution is a theoretical framework that provides insight into how LSALSA is able to surpass SALSA, namely describing how the learning procedure can enhance the second-order information that is characteristically exploited by SALSA.
In particular, we show that the forward-propagation of a signal through the LSALSA network is equivalent to the application of truncated-ADMM to a new, learned cost function, and present a theoretical framework for characterizing this function in relation to the original Augmented Lagrangian.
To the best of our knowledge, our work is the first to attempt to analyze a learning-accelerated ADMM algorithm.

To summarize, our contributions are threefold:
\begin{enumerate}
	\item We achieve significant acceleration in both SC and MCA: classic SALSA takes up to $100\times$ longer to achieve LSALSA's performance. This opens up the MCA framework to potentially be used in high-throughput, real-time applications.
	\item We carefully compare an ADMM-based algorithm (SALSA) with our proposed learnable counterpart (LSALSA) and with popular baselines (ISTA and FISTA). For a large variety of computational constraints (i.e. fixed number of iterations), we perform comprehensive hyperparameter testing for each encoding method to ensure a fair comparison.
	\item We present a theoretical framework for analyzing the LSALSA network, giving insight as to how it uses information learned from data to accelerate SALSA.
\end{enumerate}

This paper is organized as follows: Section~\ref{sec:rw} provides literature review, Section~\ref{section: background} formulates the SC problem in detail, Section~\ref{section: admm/salsa} shows how to
derive predictive single dictionary SC and multiple dictionary MCA from their iterative counterparts and explains our approach (LSALSA). Section~\ref{sec:analysis} elaborates our theoretical framework for analyzing LSALSA and provides insight into its empirically demonstrated advantages. Section~\ref{section: results} shows experimental results for both the single dictionary setting and MCA. Finally, Section~\ref{sec:conclusions} concludes the paper. We provide an open-source implementation of the sparse coding and source separation experiments presented herein.

\vspace{-0.4cm}
\section{Related Work} 
\label{sec:rw}
A sparse code inference aims at computing sparse codes for given data and is most widely addressed via iterative schemes such as aforementioned ISTA and FISTA. Predicting approximations of optimal codes can be done using deep feed-forward learning architectures based on truncated convex solvers. This family of approaches lies at the core of this paper. A notable  approach in this family known as LISTA~\cite{DBLP:conf/icml/GregorL10} stems from earlier predictive sparse decomposition methods~\cite{koray-psd-08,conf/iccv/JarrettKRL09}, which however were obtaining approximations to the sparse codes of insufficient quality. LISTA improves over these techniques and enhances ISTA by unfolding a fixed number of iterations to define a fixed-depth deep neural network that is trained with examples of input vectors paired with their corresponding optimal sparse codes obtained by conventional methods like ISTA or FISTA. LISTA was shown to provide high-quality approximations of optimal sparse codes with a fixed computational cost.
Unrolling methodology has since been applied to algorithms solving SC with $\ell_0$-regularization~\cite{sparseCoding:wang16} and message passing schemes~\cite{sparseCoding:borgerding16}. In other prior works, ISTA was recast as a recurrent neural network unit giving rise to a variant of LSTM~\cite{Gers:2003:LPT:944919.944925,sparseCoding:zhou18}. Recently, theoretical analysis has been provided for LISTA~\cite{listatheory:chen2018,listatheory:moreau2016}, in which the authors provide convergence analyses by imposing constraints on the LISTA algorithm. This analysis does not apply to the MCA problem as it cannot handle multiple dictionaries. In other words, they would approach the MCA problem by casting it as a SC problem with access to a single dictionary that is a concatenation of source-specific dictionaries, e.g. $[\Av_1,\Av_2,\dots,\Av_D]$. Furthermore these analyses do not address the saddle-point setting as required for ADMM-type methods such as SALSA.

MCA has been used successfully in a number of applications that include decomposing images into textures and cartoons for denoising and inpainting~\cite{mca:elad05,mca:peyre07,mca:peyre10,mca:shoham08,mca:starck05_2,mca:starck05_1}, detecting text in natural scene images~\cite{expSet:mcaText2017}, as well as other source separation problems such as separating non-stationary clutter from weather radar signals~\cite{mca:uysal16}, transients from sustained rhythmic components in EEG signals~\cite{mca:parekh14}, and stationary from dynamic components of MRI videos~\cite{mca:otazo15}. The MCA problem is frequently solved via SALSA algorithm, which constitutes a special case of the ADMM method. 

There exist a few approaches in the literature utilizing highly specialized trainable ADMM algorithms. One such framework~\cite{unrolled:yang16} was demonstrated to improve the reconstruction accuracy and inference speed over a variety of state-of-the-art solvers for the problem of compressive sensing Magnetic Resonance Imaging. A variety of papers followed up on this work for various image reconstruction tasks, such as the Learned Primal-dual Algorithm~\cite{unrolled:adler17}. However, these approaches do not give a detailed iteration-by-iteration comparison of the baseline method versus the learned method, making it difficult to understand the accuracy/speed tradeoff. Another related framework~\cite{unrolled:sprechmann13} was applied to efficiently learn task-specific (reconstruction or classification) sparse models via sparsity-promoting convolutional operators. None of the above methods were applied to the MCA or other source separation problems and moreover it is non-trivial to obtain such extensions of these works. An unrolled nonnegative matrix factorization (NMF) algorithm~\cite{unrolled:LeRoux15} was implemented as a deep network for the task of speech separation. In another work~\cite{unrolled:Wisdom2017DeepRN}, the NMF-based speech separation task was solved with an ISTA-like unfolded network. 

\vspace{-0.5cm}
\section{Problem Formulation}
\label{section: background}
This paper focuses on the inference problem in SC: given data vector $\yv$ and dictionary matrix $\Av$, we consider algorithms for finding the unique coefficient vector $\xv^*$ that minimizes the $\ell_1$-regularized linear least squares cost function:
\vspace{-0.15cm}
\begin{equation}
\label{lasso}
    \xv^* = \argmin{\xv}\left\{E_{\Av}(\xv;\yv) = \tfrac12\norm{\yv-\Av\xv}_2^2 + \alpha\norm{\xv}_1\right\},
\end{equation}
where the scalar constant $\alpha\geq0$ balances sparsity with data fidelity. Since this problem is convex, $\xv^*$ is unique and we refer to it as the optimal code for $\yv$ with respect to $\Av$. The dictionary matrix $\Av$ is usually learned by minimizing a loss function given below~\cite{dictLearning:Olshausen96}
\vspace{-0.25cm}
\begin{align}\label{eqn: dictlearning1D}
    \mathcal{L}_{\text{Dict}}(\Av) = \frac{1}{P}\sum_{p=1}^P E_{\Av}(\xv^{*,p}; \yv^p)
\end{align}
with respect to $\Av$ using stochastic gradient descent (SGD), where $P$ is the size of the training data set, $\yv^p$ is the $p^{th}$ training sample, and $\xv^{*,p}$ is the corresponding optimal sparse code. The optimal sparse codes in each iteration are obtained in this paper with FISTA. When training dictionaries, we require the columns of $\Av$ to have unit norm, as is common practice for regularizing the dictionary learning process~\cite{dictLearning:Olshausen96}, however this is not necessary for code inference.

In the MCA framework, a generalization of the cost function from Equation~\ref{lasso} is minimized to estimate $\xv_1^*,\xv_2^*,\dots,\xv_D^*$ from the model given in Equation~\ref{model: mca}. Thus one minimizes
\vspace{-0.05in}
\begin{equation}\label{eqn: mca}
E_{\Av}(\xv;\yv) = \tfrac12\norm{\yv-\Av\xv}_2^2 + \sum_{i=1}^D\alpha_i\norm{\xv_i}_1,
\end{equation}
%\vspace{-0.05in}
using $\Av \coloneqq [\Av_1,\Av_2,\dots,\Av_D]\in\RR^{M\times N}$ and
\begin{equation}
    \xv \coloneqq
    \left[
        \begin{array}{c}
            \xv_1\\ \xv_2 \\ \vdots\\ \xv_D
        \end{array}
    \right]\in\RR^N,
\end{equation}
where $\xv_i \in \RR^{N_i}$ for $i = \{1,2,\dots,D\}$, $N = \sum_{i=1}^D N_i$, and $\alpha_i$s are the coefficients controlling the sparsity penalties. We denote the concatenated optimal codes with $\xv^* = \argmin{\xv}E_{\Av}(\xv,\yv)$.
To recover the single dictionary case, simply set $\alpha_i=\alpha_j,\ \forall i,j=1,...,D$ and set $\Av_i$ to be partitions of $\Av$.

In the classic MCA works, the dictionaries $\Av_i$s are selected to be well-known filter banks with explicitly designed sparsification properties. 
% Commonly used dictionaries include the short-time Fourier Transform  ~\cite{mca:parekh14,mca:uysal16}, various wavelets and related multi-scale filter banks ~\cite{mca:starck05, mca:elad05, mca:selesnick11}.
Such hand-designed transforms have good generalization abilities and help to prevent overfitting. Also, MCA algorithms often require solving large systems of equations involving $\Av\Tt\Av$ or $\Av\Av\Tt$. An appropriate constraining of $\Av_i$ leads to a banded system of equations and in consequence reduces the computational complexity of these algorithms, e.g.~\cite{mca:parekh14}.
More recent MCA works use learned dictionaries for image analysis~\cite{mca:shoham08,mca:peyre07}. Some extensions of MCA consider learning dictionaries $\Av_i$s and sparse codes jointly~\cite{mca:peyre07,mca:peyre10}. 
\begin{remark} [Learning dictionaries]
In our paper, we learn dictionaries $\Av_is$ independently. In particular, for each $i$ we minimize
\vspace{-0.05in}
\begin{align}\label{eqn: dictlearning}
\mathcal{L}_{\text{Dict}}(\Av_i) = \frac{1}{P}\sum_{p=1}^P E_{\Av_i}(\xv_i^{*,p}; \yv_i^p)
\end{align}
%\vspace{-0.05in}
with respect to $\Av_i$ using SGD, where $\yv_i^p$ is the $i^{th}$ mixture component of the $p^{th}$ training sample and $\xv_i^{*,p}$ is the corresponding optimal sparse code. The columns are constrained to have unit norm. The sparse codes in each iteration are obtained with FISTA. 
\end{remark}

%\vspace{-0.6cm}
\section{From iterative to predictive SC and MCA}
\label{section: admm/salsa}
\vspace{-0.25cm}
\subsection{Split Augmented Lagrangian Shrinkage Algorithm (SALSA)}
\vspace{-0.25cm}
The objective functions used in SC (Equation~\ref{lasso}) and MCA (Equation~\ref{eqn: mca}) are each convex with respect to $\xv$, allowing a wide variety of optimization algorithms with well-studied convergence results to be applied~\cite{cvxOpt:combettes11}. Here we describe a popular algorithm that is general enough to solve both problems called SALSA~\cite{salsa:Afonso09}, which is an instance of ADMM. ADMM~\cite{cvxOpt:boyd11} addresses an optimization problem with the form
\vspace{-0.25cm}
\begin{align}\label{prob: optGen}
\min_{\xv} f_1(\xv) + f_2(\xv)
\end{align}
by re-casting it as the equivalent, constrained problem

\vspace{-0.15in}
\begin{align}\label{prob: admmGen}
\min_{\uv,\xv} f_1(\xv) + f_2(\uv)\:\:\:
\text{such that }\: \xv=\uv. 
\end{align}
\vspace{-0.09in}
ADMM then optimizes the corresponding scaled Augmented Lagrangian,

\vspace{-0.12in}
\begin{align}\label{eqn:augLag}
\mathcal{L}_A= f_1(\xv) + f_2(\uv)+\frac\mu2\norm{\uv-\xv-\dv}_2^2 - \frac\mu2\norm{d}_2^2,
\end{align}
%\vspace{-0.09in}
where $\dv$ correspond to Lagrangian multipliers, one variable at a time until convergence.

SALSA, proposed in~\cite{salsa:Afonso09}, addresses an instance of the general optimization problem from Equation~\ref{prob: admmGen} for which convergence has been proved in~\cite{salsa:Erkstein90}. Namely, SALSA requires that (1) $f_1$ is a least-squares term, and (2) the proximity operator of $f_2$ can be computed exactly. For our most general cost function in Eqn \ref{eqn: mca}, requirement (1) is clearly satisfied, and our $f_2$ is the weighted sum of $\ell_1$ norms. In Supplemental Section~\ref{sec:suppL1}, we show that the the proximity operator of $f_2$ reduces to element-wise soft thresholding for each component, which in scalar form is given by
\begin{equation}\label{eqn: shrink}
    \soft(z;\alpha) =
        \begin{cases}
            z-\alpha, &z>\alpha\\
            0,        &|z|\leq\alpha\\
            z+\alpha, &z<-\alpha
        \end{cases}.
    % We had is as a one-liner before:
    %\sign(z)\max\left(0, |z|-\theta\right),
\end{equation}
When applied to a vector, $\soft(\zv;\alpha)$ performs soft thresholding element-wise. Thus, SALSA is guaranteed to converge for the multiple-dictionary sparse coding problem.

%%%%%%%%%%%%%%%%%%%%%%%%%%%%%%%%%%%
%%%%%%% This version Alg 1 and 2 at least 'works' in the 2 column format;
%%%%%%%    it's based on the ICML submission.

\begin{algorithm}[t]
	\caption{SALSA (Single Dictionary~\cite{salsa:Afonso09})}
	\label{alg: salsa1}
	\begin{algorithmic}[1]
		
		\STATE {\bfseries Input:} $\alpha\geq0, \mu>0$\\
		\hspace{0.41in}$\yv\in\RR^M, \Av\in\RR^{M\times N}$ \\
		\vspace{0.06in}
		\STATE {\bfseries Initialize:} $\xv = \Av\Tt\yv$ and $\dv=0$\\
		\vspace{0.06in}
		\REPEAT
		\STATE $\uv = \soft(\xv+ \dv;\alpha/\mu)$
		\STATE Solve for $\xv$:
		$\hspace{0.2in}\left[\mu \bf{I}+ \Av^T\Av\right]\xv= \Av\Tt\yv + \mu(\uv - \dv)$
		\STATE $\dv =  \dv - \uv + \xv$
		\UNTIL{change in $\xv$ below a threshold}
		
	\end{algorithmic}
\end{algorithm}
%%%%%%%%%%%%%%%%%%%%%%%%%%%%%%%%%%
\begin{algorithm}[t]
	\caption{SALSA (Two Dictionaries~\cite{salsa:ivan})}
	\label{alg: salsa2}
	\begin{algorithmic}[1]
		\STATE {\bfseries Input:} $\alpha_1,\alpha_2\geq0, \mu >0$\\
		\hspace{0.41in}$\yv\in\RR^M,\Av\in\RR^{M\times (N_1+N_2)}$ 
		\STATE {\bfseries Initialize:} $\xv = \Av\Tt\yv, \dv \coloneqq \vecTwo{\dv_1}{\dv_2} = 0$\\
		\REPEAT
		\STATE $\uv %= \begin{bmatrix} \uv_1 \\ \uv_2 \end{bmatrix}
		= \vecTwo{\soft(\xv_1+\dv_1; \alpha_1 / \mu)}{\soft(\xv_2+\dv_2; \alpha_2 / \mu)}$
		\STATE Solve for $\xv$:\\
		$\hspace{0.2in}\left[\mu \bf{I}+ \Av\Tt\Av\right]\xv= \Av\Tt\yv + \mu(\uv - \dv)$
		\STATE $\dv = \dv - \uv + \xv $
		\UNTIL{change in $\xv$ below a threshold}		
	\end{algorithmic}
\end{algorithm}

%%%%%%%%%%%%%%%%%%%%%%%%%%%%%%%%%%

SALSA is given in Algorithms~\ref{alg: salsa1} and~\ref{alg: salsa2} for the single-dictionary case and the MCA case involving two dictionaries\footnote{In this paper we consider the MCA framework with two dictionaries. Extensions to more than two dictionaries are straightforward.}, respectively. Note that in Algorithm \ref{alg: salsa2}, the $\uv$ and $\dv$ updates can be performed with element-wise operations.
The $\xv$-update, however, is non-separable with respect to components $\{\xv_i\}_{i=1}^D$ for general $\Av$; the system of equations in the $\xv$-update cannot be broken down into $D$ sub-problems, one for each component (in contrast, 1st order methods such as FISTA update components independently).
We call this the \textit{splitting step}. 

As mentioned in Section \ref{section: background}, the $\xv$-update is often simplified to element-wise operations by constraining matrix $\Av$ to have special properties. For example: requiring $\Av\Av\Tt=\rho \bf{I}$, $\rho\in\RR_+$, reduces the $\xv$-update step to element-wise division (after applying the matrix inverse lemma). In~\cite{unrolled:yang16}, $\Av$ is set to be the partial Fourier transform, reducing the system of equations of the $\xv$-update to be a series of convolutions and element-wise operations. In our work, as is typical in the case of SC, $\Av$ is a learned dictionary without any imposed structure. 

\begin{figure}[t]
	\centering
	\includegraphics[width = 0.7\textwidth]{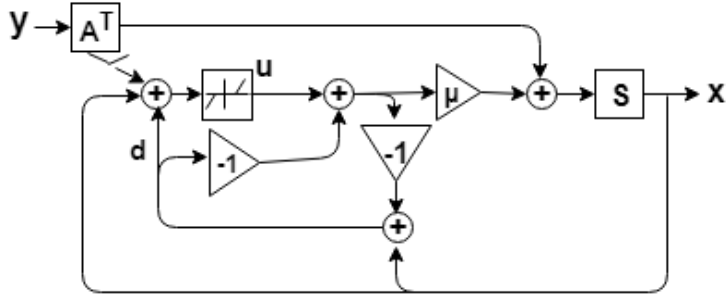}
	\caption{A block diagram of SALSA. The one-time initialization $\xv = \Av\Tt\yv$ is represented by a gate on the left.}
	\label{fig: salsaDiagram}
\end{figure}

Note that one way to solve for $\xv$ in Algorithms~\ref{alg: salsa1} and~\ref{alg: salsa2} is to compute the inverse of regularized Hessian matrix $\mu I + \Av\Tt\Av$. This however needs to be done just once, at the very beginning, as this matrix remains fixed during the entire run of SALSA. We abbreviate the inverted matrix as
\begin{align}
\label{eqn:sDef}
    \Sv = (\mu \bf{I} + \Av\Tt\Av)^{-1}.
\end{align}
We call this matrix a \textit{splitting operator}. Note that the inversion process couples together the dictionary elements (and hence also the dictionaries) in a non-linear fashion. This is an advanced utilization of prior knowledge not seen in the comparator methods of Section \ref{section: results}. The recursive block diagram of SALSA is depicted in Figure \ref{fig: salsaDiagram}.

\vspace{-0.4cm}
\subsection{Learned SALSA (LSALSA)} \label{sec:LSALSA}
\begin{figure*}[!ht]
	\centering
	\includegraphics[width = \textwidth]{LSALSA_unfolded_new.png}
	\caption{The deep learning architecture of LSALSA for $T=3$. The soft-thresholding function, defined in Equation~\ref{eqn: shrink}, is an activation function found in each layer of the network and at the end.}
	\label{fig: lsalsaUnfolded}
\end{figure*}
\vspace{-0.5cm}We now describe our proposed deep encoder architecture that we refer to as Learned SALSA (LSALSA). Consider truncating the SALSA algorithm to a fixed number of iterations $T$ and then time-unfolding it into a deep neural network architecture that matches the truncated SALSA's output exactly. The obtained architecture is illustrated in Figure~\ref{fig: lsalsaUnfolded} for $T=3$, and the formulas for the $t^{th}$ layer w.r.t. the $(t-1)^{th}$ iterates are described via pseudocode in Algorithms \ref{alg: fprop1} and Algorithm \ref{alg: fprop2} for the single-dictionary and MCA cases, respectively. Note that Algorithms~\ref{alg: salsa2} and~\ref{alg: fprop2} are the most general algorithms considered by us whereas Algorithms~\ref{alg: salsa1} and~\ref{alg: fprop1} are their special, i.e. single-dictionary, cases. 

The LSALSA model has two matrices of learnable parameters: $\Sv$ and $\Wve$. We initialize these to achieve an exact correspondence with SALSA:
\vspace{-0.02in}
\begin{align}\label{eqn:initializations}
    \Wve = \Av\Tt \in\RR^{N\times M}\:\:\:\text{and}\:\:\:\Sv  = \left(\mu \bf{I} + \Av\Tt\Av\right)^{-1} \in\RR^{N\times N},
\end{align}
where $N=N_1+N_2$ in the MCA case. All splitting operators $\Sv$ share parameters across the network.
LSALSA's two matrices of parameters can be trained with standard backpropagation. Let $\xv = f_e(\Wv_e,\Sv,\yv)$ denote the output of the LSALSA architecture after a forward propagation of $\yv$. The cost function used for training the model is defined as
\vspace{-0.03in}
\begin{align}
\label{eqn: learningCostFcn}
    \mathcal{L}(\Wv_e,\Sv) = \frac{1}{2P}\sum_{p=1}^P\norm{\xv^{*,p}-f_e(\Wv_e,\Sv,\yv^p)}_2^2.
\end{align} 

\begin{algorithm}[t]
	\caption{LSALSA  Forward Pass (Single Dictionary)}
	\begin{algorithmic}
		\STATE{\bfseries Input:} $\alpha\geq0, \mu > 0,\yv\in \mathbb{R}^M$\\
		\hspace{0.41in}$\xv(0) = \Wve\yv$ , $\dv(0)=0 $\\
		\vspace{0.06in}
		\FOR{$t=1$ {\bfseries to} $T $}
		\STATE $\uv(t) = \soft(\xv(t-1)+\dv(t-1); \alpha / \mu)$
		\STATE $\xv(t) = \Sv(\Wve\yv+\mu(\uv(t)-\dv(t-1)))$
		\STATE $\dv(t) =  \dv(t-1) - \uv(t)+\xv(t)$
		\ENDFOR  
		\vspace{0.06in}
		\STATE {\textbf{Output: } $\soft(\xv(t);\alpha/\mu)$}
	\end{algorithmic}
	\label{alg: fprop1}
\end{algorithm}

\begin{algorithm}[t]
	\caption{LSALSA  Forward Pass (Two Dictionaries):}
	\begin{algorithmic}
		\STATE{\bfseries Input:} $\alpha_1,\alpha_2\geq0, \mu > 0,\yv\in\mathbb{R}^M$\\
		\hspace{0.41in}$\xv(0) = \Wve\yv$ , $\dv(0)=0 $\\
		\vspace{0.06in}
		\FOR{$t=1$ {\bfseries to} $T $}
		\STATE $\uv(t) =\vecTwo{\soft(\xv_1(t-1) + \dv_1(t-1); \alpha_1 / \mu)}{\soft(\xv_2(t-1) + \dv_2(t-1); \alpha_2 / \mu)}$
		\STATE $\xv(t) =\Sv(\Wve\yv+\mu(\uv(t)-\dv(t-1)))$
		\STATE $\dv(t) = \dv(t-1) - \uv(t)+\xv(t)$
		\ENDFOR  
		\vspace{0.06in}
		\STATE {\textbf{Output: } $\vecTwo{\soft(\xv_1(t); \alpha_1 / \mu)}{\soft(\xv_2(t); \alpha_2 / \mu)}$}
	\end{algorithmic}
	\label{alg: fprop2}
\end{algorithm}

To summarize, LSALSA extends SALSA. SALSA is meant to run until convergence, where LSALSA is meant to run for $T$ iterations, where $T$ is the depth of the network. Intuitively, the backpropagation steps applied during training in LSALSA fine-tune the “splitting step” so that $T$ iterations can be sufficient to achieve good-quality sparse codes (those are obtained due to the existence of nonlinearities).
The SALSA algorithm relies on cumulative Lagrange Multiplier updates to “explain away” code components, while separating sources. This is especially important in MCA, where similar atoms from different dictionaries will compete to represent the same segment of a mixed signal.
The Lagrange Multiplier updates translate to a cross-layer connectivity pattern in the corresponding LSALSA network (see the $d$-updates in Figure \ref{fig: lsalsaUnfolded}), which has been shown to be a beneficial architectural feature in e.g.~\cite{skip:greff2016highway,skip:liao2016,skip:orhan2018}. During training, LSALSA is fine-tuning the splitting operator $\Sv$ so that it need not rely on a large number of cumulative updates.
However, we show in Section~\ref{sec:analysis} that even after training, forward propagation through an LSALSA network is equivalent to the application of a truncated ADMM algorithm applied to a new, learned cost function that generalizes the original problem.

%\vspace{-0.7cm}
\section{Analysis of LSALSA}
\label{sec:analysis}
\vspace{-0.25cm}
\subsection{Optimality Property for LSALSA}
\vspace{-0.25cm}
Typically, analyses of ADMM-like algorithms rely on the optimality of each primal update, e.g. that $\xv\iter{k+1}=\argmin{\xv}\mathcal{L}_A(\xv,\uv\iter{k+1};\dv\iter{k})$~\cite{cvxOpt:boyd11,cvxOpt:fadmmGoldstein2014,ncAdmm:Wang2019}. In Theorem \ref{Thm:reparam} we show that LSALSA provides optimal primal updates with respect to a generalization of the Augmented Lagrangian (\ref{eqn:augLag}) parameterized by $\Sv$. The proof is provided in Supplemental Section~\ref{supp:derivReparam}.

\begin{theorem}[LSALSA Optimality]\label{Thm:reparam}
Given a neural network with the LSALSA architecture as described in Section \ref{sec:LSALSA}, there exists an Augmented Lagrangian for which the LSALSA network provides optimal primal updates. In particular, for learned matrices $\Sv$ and $\Wve$, we have 
\begin{align}
    \hLc= \hat{f_1}(\xv;\Sv)+\ell_1(\uv) + \frac\mu2\norm{\uv-\xv-\dv}^2-\frac\mu2\norm{\dv}^2,
\end{align}
where
\begin{align}
    \hat{f_1}(\xv;\Sv) = \frac12\xv\Tt\left[\Sv^{-1}-\mu I\right]\xv -(\Wve \yv)\Tt \xv + \frac12\yv\Tt\yv,
\end{align}
and $\ell_1(\uv)$ represents a sum of L1-terms as in (\ref{eqn: mca}).
\end{theorem}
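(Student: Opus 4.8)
The plan is to verify directly that running the ADMM/SALSA template on the proposed Augmented Lagrangian $\hLc$ reproduces, step-for-step, the LSALSA forward pass in Algorithms~\ref{alg: fprop1}--\ref{alg: fprop2}. The structure of $\hLc$ is deliberately chosen to match the generic scaled Augmented Lagrangian~(\ref{eqn:augLag}) with $f_1$ replaced by $\hat{f_1}(\cdot;\Sv)$ and $f_2$ replaced by $\ell_1$; so the three primal/dual sweeps of ADMM are (i) $\uv\iter{t}=\argmin{\uv}\hLc(\xv\iter{t-1},\uv;\dv\iter{t-1})$, (ii) $\xv\iter{t}=\argmin{\xv}\hLc(\xv,\uv\iter{t};\dv\iter{t-1})$, and (iii) $\dv\iter{t}=\dv\iter{t-1}-\uv\iter{t}+\xv\iter{t}$. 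Step (iii) is literally the $\dv$-update already, and step (i) is the standard fact (proved in Supplemental Section~\ref{sec:suppL1}) that minimizing $\ell_1(\uv)+\tfrac\mu2\norm{\uv-\xv-\dv}^2$ over $\uv$ gives componentwise soft-thresholding $\soft(\xv+\dv;\alpha/\mu)$, which is exactly the $\uv$-update. So the whole content is step (ii).

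For step (ii) I would set the gradient of $\hLc$ in $\xv$ to zero. Since $\hat{f_1}(\xv;\Sv)=\tfrac12\xv\Tt[\Sv^{-1}-\mu I]\xv-(\Wve\yv)\Tt\xv+\tfrac12\yv\Tt\yv$, its gradient is $[\Sv^{-1}-\mu I]\xv-\Wve\yv$ (using symmetry of $\Sv^{-1}-\mu I$, which holds because $\Sv$ is symmetric by construction~(\ref{eqn:initializations}), and this symmetry should be noted as a standing assumption on the learned $\Sv$). The gradient of $\tfrac\mu2\norm{\uv\iter{t}-\xv-\dv\iter{t-1}}^2$ in $\xv$ is $-\mu(\uv\iter{t}-\xv-\dv\iter{t-1})$. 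Summing and setting to zero:
\begin{align}
    [\Sv^{-1}-\mu I]\xv - \Wve\yv - \mu\uv\iter{t} + \mu\xv + \mu\dv\iter{t-1} = 0,
\end{align}
and the $\mu\xv$ terms cancel, leaving $\Sv^{-1}\xv = \Wve\yv + \mu(\uv\iter{t}-\dv\iter{t-1})$, i.e. $\xv\iter{t} = \Sv(\Wve\yv+\mu(\uv\iter{t}-\dv\iter{t-1}))$, which is exactly the $\xv$-update in the LSALSA forward pass. I would also check the second-order condition: the Hessian of $\hLc$ in $\xv$ is $\Sv^{-1}-\mu I+\mu I=\Sv^{-1}$, so convexity (hence that the stationary point is the global minimizer) is equivalent to $\Sv\succ0$; this is another mild assumption on the learned matrix that I would state explicitly, noting it holds at initialization since $\mu I+\Av\Tt\Av\succ0$.

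The main obstacle is not any single calculation — each sweep is a short computation — but rather making precise the sense in which the theorem holds after training: the algebra above uses only that $\Sv$ is a symmetric positive-definite matrix and $\Wve$ is an arbitrary matrix, with no relationship between them required, so the same $\hLc$ works for any learned pair $(\Sv,\Wve)$ subject to $\Sv=\Sv\Tt\succ0$. I would therefore frame the proof as: define $\hat{f_1}$ by the displayed formula, observe $\hLc$ has the ADMM-admissible form (\ref{eqn:augLag}), invoke the soft-threshold minimization for the $\uv$-sweep, do the one-line $\xv$-sweep above, and note the $\dv$-sweep is unchanged; conclude that the ADMM iterates on $\hLc$ coincide with the LSALSA layer recursions, so LSALSA's outputs are exactly optimal primal updates for $\hLc$. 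A secondary point worth a sentence is interpreting $\hat{f_1}$: it is a quadratic whose curvature $\Sv^{-1}-\mu I$ equals $\Av\Tt\Av$ exactly at initialization (so $\hat{f_1}$ reduces to $\tfrac12\norm{\yv-\Av\xv}^2$ there), and after training the curvature is a learned modification of $\Av\Tt\Av$ — this is the conceptual payoff the later analysis builds on.
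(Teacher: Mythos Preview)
Your proposal is correct and follows essentially the same route as the paper: the paper's proof observes that $\hLc$ and $\Lc_A$ share identical dependence on $\uv$ and $\dv$ (so those updates are unchanged) and then verifies the $\xv$-step by setting $\nabla_{\xv}\hLc=0$ and solving to recover the LSALSA formula, exactly as you do. Your additions---the explicit second-order check that the Hessian is $\Sv^{-1}$ and the stated assumption $\Sv=\Sv\Tt\succ0$---are sensible refinements the paper leaves implicit.
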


\begin{remark}[LSALSA as an Instance of ADMM]
    Note that by plugging in the initializations of $\Sv$ and $\Wve$, given in Equations \ref{eqn:initializations}, we recover the original Augmented Lagrangian. Then, from the perspective of Theorem~\ref{Thm:reparam}, \textit{LSALSA at inference is equivalent to applying $T$ iterations of ADMM on a new, learned cost function that generalizes the original problem in Equation~\ref{eqn:augLag}.}
\end{remark}
\begin{remark}[LSALSA Provides Sparse Solutions]
    Since $\hLc$ employs the $\ell_1$-norm in the usual way and LSALSA's $\uv$-update is standard soft-thresholding, we can expect LSALSA to enforce sparsity given sufficient iterations~\cite{cvxOpt:boydbook2004,cvxOpt:l1_2014}.
\end{remark}
% From here, we can ask questions about the convergence of LSALSA with respect to $\hLc$ and compare the properties of the original and learned cost functions: how can a partially optimized $\hLc$ correspond to a fully-optimized $\Lc_A$? How do fixed points of $\hLc$ correspond to those of $\Lc_A$?

We show in Section~\ref{sec:admmAnalysis} that the optimal direction for $\hLc$ is related to the optimal direction for $\Lc_A$, and in Section~\ref{sec:stochAnalysis} we show that gradient descent along $\hLc$ is equivalent to a modified gradient descent along $\Lc_A.$ For simplicity, we consider the case of learned, symmetric $\Sv$ while holding fixed $\Wve\equiv\Av\Tt$.

%%%%%%%%%%%%%%%%%%%%%%%%%%%%%%%%%%%%%%%%%%%%%%%%%%%%%%
\vspace{-0.3cm}
\subsection{Modified descent direction: deterministic framework}\label{sec:admmAnalysis}
\vspace{-0.2cm}
Though $\hLc$'s dependence on $\uv$ and $\dv$ is standard in ADMM settings~\cite{cvxOpt:boyd11}, the learned data-fidelity term $\hat{f_1}$ that commands $\xv$-directions is now a data-driven quadratic form that relies on the weight matrix $\Sv$ that parameterizes LSALSA.
% that evades such intuitive properties as the $\ell_1$-norm and MSE criterion provide.
We will next rewrite the new cost function in terms of the original Augmented Lagrangian:
\begin{equation}\label{eqn: LSrelate}
    \hLc(\xv,\uv,\dv) = \Lc_A(\xv,\uv,\dv) + \hat{f_1}(\xv;\Sv) - \frac12\norm{\yv-\Av\xv}^2_2.
\end{equation}
The optimality condition for $\hLc$ can be written
\begin{align*}
   0 & =\nabla_{\xv}\hLc(\xv^*, \uv, \dv)\\
    &=\nabla_{\xv}\left(\Lc_A(\xv^*, \uv, \dv) +\hat{f_1}(\xv;\Sv) - \frac12\norm{\yv-\Av\xv}^2_2\right)\\
    &=\nabla_{\xv}\Lc_A(\xv^*, \uv, \dv) +\left[\Sv^{-1}-\mu I - \Av\Tt\Av \right]\xv^*.
\end{align*}
Then, using $\nabla_{\xv}^2\Lc_A=\mu I + \Av\Tt\Av$ we can write the LSALSA update as
\begin{align}
\label{eqn: rootFind}
    0&=\nabla_{\xv}\Lc_A(\xv^*, \uv, \dv) + \left[\Sv^{-1} -\nabla_{\xv}^2\Lc_A \right]\xv^*\\
\label{eqn:learnedOptimality}
\Rightarrow    &\left[\Sv^{-1} - \nabla_{\xv}^2\Lc_A \right]\xv^* = -\nabla_{\xv}\Lc_A(\xv^*, \uv, \dv).
\end{align}
The root-finding problem posed in (\ref{eqn: rootFind}) and equivalent system of equations in  (\ref{eqn:learnedOptimality}) resemble a Newton-like update, but using a learned modification of the original Lagrangian's Hessian matrix.
Note that at initialization (using Formula~\ref{eqn:initializations}), the left-hand-side cancels to zero, recovering the optimality condition for the original problem. 
This also admits an intuition that LSALSA is incorporating prior knowledge, learned from the training data, that could be made to balance between optimality of the original problem while maintaining some relationship with the training data distribution.

\vspace{-0.25cm}
\subsection{{Modified descent direction: stochastic framework}}\label{sec:stochAnalysis}
\vspace{-0.25cm}
We will next look at (L)SALSA through the prism of worst-case analysis, i.e. by replacing the optimal primal steps with stochastic gradient descent. This effectively enables us to analyze (L)SALSA as a stochastic alternated optimization approach solving a general saddle point problem, and we show that LSALSA leads to faster convergence under certain assumptions that we stipulate. Our analysis is a direct extension of that in~\cite{sao:beyondBackprop}. We provide the final statement of the theorem below and defer all proofs to the supplement.

\vspace{-0.25cm}
\subsubsection{Problem formulation}
Consider the following general saddle-point problem:
\begin{align}
    \max_{\phi_1,...,\phi_{K_2}}\min_{\theta_1,...,\theta_{K_1}} &\Lu{}(\theta_1,...,\theta_{K_1};\phi_1,...,\phi_{K_2})\\
        &\Updownarrow \nonumber\\
    \max_{\phiv}\min_{\thetav}\ &\Lu{}(\thetav;\phiv),
    %\max_{\phiv}\min_{\thetav} \frac1N\sum_N \Lu{}(y(n), \thetav;\phiv),
    \label{eq:sadp}
\end{align}
using $\thetav = [\theta_1,...,\theta_{K_1}]$ to denote the collection of variables to be minimized, and $\phiv = [\phi_1,...,\phi_{K_2}]$ the variables to be maximized. We denote the entire collection of variables as $\xv=[\thetav, \phiv]\in\mathbb{R}^{K},$ where $K=K_1+K_2$ is the total number of arguments. We denote with $x_d$ the $d^{th}$ entry in $\xv$. For theoretical analysis we consider a smooth function $\Lu{}$ as is often done in the literature (especially for $\ell_1$ problems, as discussed in~\cite{l1smooth:LangeZHV14,l1smooth:schmidt2007}).

Let $(x_1^*,...,x_K^*)$ be the optimal solution of the saddle point problem in (\ref{eq:sadp}), where $\Lu{}$ is computed over global data population (i.e. averaged over an infinite number of samples). For each variable $x_d$, we assume a lower bound on the radii of convergence $r_d>0$. Let $\nabla_d^1 \Lu{}$ denote the gradient of $\Lu{}$ with respect to the $d^{th}$ argument evaluated on a single data sample (stochastic gradient), and $\nabla_d \Lu{}$ to be that with respect to the global data population (i.e. an ``oracle gradient''). 

We analyze an Alternating Optimization algorithm that, at the $d^{th}$ step, optimizes $\Lu{}$ with respect to $x_d$ while holding all other $x_{i\neq d}$ fixed:
\begin{equation}
\label{eqn:propIt}
    x_d^{t+1} = \Pi_d\left(x_d^t \pm \eta^t\nabla_d^1\Lu{x_d}^t\right),
\end{equation}
using the $\pm$ symbol to denote gradient \textit{descent} for $d\leq K_1$ and gradient \textit{ascent} for $d>K_1$. 
$\Pi_d$ is the projection onto the Euclidean-ball $B_2(\frac{r_d}{2},x_d^*),$ with radius $\frac{r_d}{2}$ and centered around the optimal value $x_d^*$: this ensures that for each $d$, all iterates of $x_d$ remain within the $r_d$-ball around $x_d^*$\footnote{this assumption can be potentially eliminated with carefully selected initial stepsizes.}.

%%%%%%%%%%%%%%%%%%%%%%%%%%%%%%%%%%%%%%%%%%%%
\vspace{-0.25cm}
\subsubsection{Assumptions}\label{sec:assumps}
\vspace{-0.25cm}
The following assumptions are necessary for the Theorems in Section~\ref{sec:theoRez}. The mathematical definitions of strong-convexity, strong-concavity, and smoothness follow the standards from~\cite{nesterov2013introductory}.
\begin{assumption}[Convex-Concave]
For each $d\leq K_1$, $\Lu{x_d}^*$ is $\beta_d$-convex, and for each $d>K_1$, $\Lu{x_d}^*$ is $\beta_d$-concave within a ball around the solution $x_d*$ of radius $r_d$. 
\end{assumption}

% each $d\leq K_1$, the function $\Lu{x_d}^*$ is strongly $\beta_d$-convex for all pairs $(x_d,y_d)\in B_2(\frac{r_d}{2},x_d^*)$. For each  $d>K_1$, the function $\Lu{x_d}^*$ is strongly $\beta_d$-concave for all pairs $(x_d,y_d)\in B_2(\frac{r_d}{2},x_d^*)$ in a neighborhood around $x_d^*$. Thus for 
% CONVEXITY:
% \begin{equation} 
%     \Lu{y_d}^*-\Lu{x_d}^* - \langle\nabla\Lu{x_d}^*, y_d-x_d\rangle \geq \frac{\beta_d}{2}\norm{y_d-x_d}^2,\ \forall d\leq K_1,
% \end{equation}
% where $\beta_d>0.$

% \begin{assumption}[Concavity in $\phi$-directions]
% For each $d>K_1$, the function $\Lu{x_d}^*$ is strongly $\beta_d$-concave for all pairs $(x_d,y_d)\in B_2(\frac{r_d}{2},x_d^*)$ in a neighborhood around $x_d^*$:
% CONCAVITY:
% \begin{equation}
%     \Lu{y_d}^*-\Lu{x_d}^* - \langle\nabla\Lu{x_d}^*, y_d-x_d\rangle \leq \frac{-\beta_d}{2}\norm{y_d-x_d}^2,\ \forall d> K_1,
% \end{equation}
% where $\beta_d>0.$
% \end{assumption}

\begin{assumption}[Smoothness]
For all $d\in\{1,...,K\}$, the function $\Lu{x_d}^*$ is $\alpha_d$-smooth.
% \begin{equation}
%     \Lu{y_d}^*-\Lu{x_d}^* - \langle\nabla\Lu{x_d}^*, y_d-x_d\rangle \leq \frac{\alpha_d}{2}\norm{y_d-x_d}^2,\ \forall d\leq K_1,
% \end{equation}
% where $\alpha_d>0.$
\end{assumption}

% \begin{assumption}[Smoothness in $\phi$-directions] For each $d>K_1$, the function $\Lu{x_d}^*$ is strongly $\alpha_d$-smooth for all pairs $(x_d,y_d)$:
% \begin{equation}
%     \Lu{y_d}^*-\Lu{x_d}^* - \langle\nabla\Lu{x_d}^*, y_d-x_d\rangle \geq \frac{-\alpha_d}{2}\norm{y_d-x_d}^2,\ \forall d> K_1,
% \end{equation}
% where $\alpha_d>0.$
% \end{assumption}

In summary, for every $d=1,...,K$, $\Lu{x_d}^*$ is either $\beta_d$-convex or concave in a neighborhood around the optimal point, and $\alpha_d$-smooth.
% Note that, for the convex directions, we have
% \begin{equation}
%     \alpha_d I \succeq \nabla^2 \Lu{x_d}^* \succeq \beta_d I,\ d\leq K_1.
% \end{equation}
Next we assume two standard properties on the gradient of the cost function.

\begin{assumption}[Gradient Stability $GS(\gamma_d)$]\label{asspn:GS}
We assume that for each $d=1,...,K,$ the following gradient stability condition holds for $\gamma_d\geq0$ over the Euclidean ball $x_d\in B_2(r_d,x_d^*)$:
\begin{equation}
    \norm{\nabla_d\Lu{x_d}^* - \nabla_d\Lu{x_d}} \leq \gamma_d\sum_{i\neq d}\norm{x_i-x_i^*}.
\end{equation}
\end{assumption}

\begin{assumption}[Assumption A.6: Bounded Gradient]\label{asspn:BG}
We assume that the expected value of the gradient of our objective function $\Lc$ is bounded by $\sigma = \sqrt{\sum_{d=1}^K \sigma_d^2}$, where:
\begin{equation}
\label{gradBound}
    \sigma_d = \sup\left\{\E{\norm{\nabla_d\Lu{x_d}}^2}: x_d\in B_2(r_d,x_d^*),\ \forall d=1,...,K\right\}.
\end{equation}
\end{assumption}

\subsubsection{Convergence statement}\label{sec:theoRez}
Denote with $\Delta_d^t=x_d^t-x_d^*$ the error of the $t^{th}$ estimate of $d^{th}$ element of the global optimizer $\xv^*$. Define the following:
\begin{equation}
    \mathcal{E}_{\textsf{SALSA}}(\beta)=\left(\frac{2}{t+3}\right)^{\frac32}\E{\sum_{d=1}^K\norm{\Delta_d^0}^2} + \frac{9\sigma^2}{[2\xi(\beta)-\gamma(2K-1)]^2(t+3)},
\end{equation}
where $\xi(\beta)$ increases monotonically with increasing $\beta.$

\begin{theorem}[Convergence of SALSA and LSALSA]\label{thm:chgCvxMod}
    Suppose that cost functions underlying SALSA $\Lc_A$ and LSALSA $\hLc$ satisfy the Assumptions in Section~\ref{sec:assumps} with convexity modulii $\beta$ and $\hat{\beta}$ (the latter is implicitly learned from the data). Assume also that the
    deep model representing LSALSA had enough capacity to learn $\hat{\beta}$ such that $\hat{\beta}>\beta,$ while keeping the same location of the global optimal fixed point, $\xv^*$\footnote{LSALSA is trained to keep the same global fixed point, see Equation~\ref{eqn: learningCostFcn}.}.
    
    Then, using the Stochastic Alternating Optimization scheme in Equation~\ref{eqn:propIt} on $\Lc_A$ and $\hLc$  such that the requirements from Theorem~\ref{thm:mainResult} are satisfied, starting from the same initial point, the error satisfies the following:\\
    for SALSA:
    \begin{equation}
        \sum_{d=1}^K\norm{\Delta_d^{t+1}}^2 \leq
            \mathcal{E}_{\textsf{SALSA}}(\beta),
    \end{equation}
    and for LSALSA:
    \begin{equation}
        \sum_{d=1}^K\norm{\Delta_d^{t+1}}^2 \leq
            \mathcal{E}_{\textsf{LSALSA}}(\hat{\beta}) = \mathcal{E}_{\textsf{SALSA}}(\beta) - \Delta_{\beta},
    \end{equation}
    where
    \begin{equation}
        \Delta_{\beta} = \mathcal{O}
            \left(\frac{\hat{\beta}^2 - \beta^2}{(2\beta\hat{\beta})^2}\right).
    \end{equation}
\end{theorem}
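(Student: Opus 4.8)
The plan is to reduce the comparison between $\mathcal{E}_{\textsf{SALSA}}(\beta)$ and $\mathcal{E}_{\textsf{LSALSA}}(\hat\beta)$ to a single scalar statement: that the error bound, as a function of the convexity modulus, is monotone decreasing in $\beta$, and to quantify the gap it produces when $\beta$ is replaced by the larger learned value $\hat\beta$. First I would invoke Theorem~\ref{thm:mainResult} (the master convergence result imported from~\cite{sao:beyondBackprop}) to obtain, for any cost function satisfying the Assumptions of Section~\ref{sec:assumps} with convexity/concavity modulus $\beta$, the bound
\begin{equation*}
    \sum_{d=1}^K\norm{\Delta_d^{t+1}}^2 \leq \left(\frac{2}{t+3}\right)^{\frac32}\E{\sum_{d=1}^K\norm{\Delta_d^0}^2} + \frac{9\sigma^2}{[2\xi(\beta)-\gamma(2K-1)]^2(t+3)},
\end{equation*}
which is exactly $\mathcal{E}_{\textsf{SALSA}}(\beta)$. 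Applying the same theorem to $\hLc$ — legitimate because Theorem~\ref{Thm:reparam} guarantees $\hLc$ is a bona fide Augmented Lagrangian with standard $\uv$- and $\dv$-dependence, and by hypothesis it satisfies the Section~\ref{sec:assumps} assumptions with modulus $\hat\beta$ and the \emph{same} optimal point $\xv^*$ — yields $\sum_d\norm{\Delta_d^{t+1}}^2 \leq \mathcal{E}_{\textsf{LSALSA}}(\hat\beta)$, the right-hand side being the identical expression with $\beta$ replaced by $\hat\beta$. Because the first (initial-error) term does not depend on $\beta$ and the starting point is the same, the two bounds differ only through the denominator $[2\xi(\cdot)-\gamma(2K-1)]^2$ of the variance term.

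The next step is to show that this difference is nonnegative and to extract its order. Since $\xi$ is monotonically increasing in $\beta$ and $\hat\beta>\beta$, we have $2\xi(\hat\beta)-\gamma(2K-1) \geq 2\xi(\beta)-\gamma(2K-1) > 0$ (the positivity being part of the requirement that Theorem~\ref{thm:mainResult} applies), so the LSALSA variance term is no larger; hence $\mathcal{E}_{\textsf{LSALSA}}(\hat\beta) \leq \mathcal{E}_{\textsf{SALSA}}(\beta)$ and we may define $\Delta_\beta = \mathcal{E}_{\textsf{SALSA}}(\beta)-\mathcal{E}_{\textsf{LSALSA}}(\hat\beta) \geq 0$. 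To identify $\Delta_\beta$ as $\mathcal{O}\!\left(\frac{\hat\beta^2-\beta^2}{(2\beta\hat\beta)^2}\right)$, I would use the explicit form of $\xi$ from~\cite{sao:beyondBackprop} — in that framework the step-size/contraction analysis produces a term behaving like $\xi(\beta)\sim c\,\beta$ (up to the smoothness constants $\alpha_d$), so that
\begin{equation*}
    \frac{1}{[2\xi(\beta)-\gamma(2K-1)]^2} - \frac{1}{[2\xi(\hat\beta)-\gamma(2K-1)]^2}
\end{equation*}
is, to leading order, $\frac{1}{(2c\beta)^2}-\frac{1}{(2c\hat\beta)^2} = \frac{\hat\beta^2-\beta^2}{(2c)^2\beta^2\hat\beta^2}$, which after absorbing $9\sigma^2/(t+3)$ and the constant $c$ into the $\mathcal{O}(\cdot)$ gives the claimed rate $\mathcal{O}\!\left(\frac{\hat\beta^2-\beta^2}{(2\beta\hat\beta)^2}\right)$.

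The main obstacle I anticipate is verifying that $\hLc$ genuinely falls under the hypotheses of the stochastic framework — in particular that the learned quadratic form $\hat f_1(\xv;\Sv)$, whose Hessian $\Sv^{-1}-\mu I$ need not be positive definite globally, is nonetheless $\hat\beta$-convex in the required neighborhood $B_2(r_d,x_d^*)$ and respects the Gradient Stability and Bounded Gradient assumptions with constants comparable to SALSA's. This is exactly where the theorem's explicit hypothesis ``the deep model had enough capacity to learn $\hat\beta$ such that $\hat\beta>\beta$ while keeping $\xv^*$ fixed'' does the work: it is an assumption, not something proved, and my proof would lean on it rather than attempt to characterize which learned $\Sv$ realize it. A secondary delicate point is that the smoothness constants $\alpha_d$ (and hence $\gamma_d$, $\sigma_d$) could in principle also shift when $\Sv$ changes; I would handle this by stating the comparison under the (mild, and implicitly assumed) condition that these are held fixed or bounded, so that the only operative change between the two bounds is $\beta \mapsto \hat\beta$, isolating the clean $\Delta_\beta$ expression above.
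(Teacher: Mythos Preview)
Your proposal is correct and follows essentially the same route as the paper: invoke Theorem~\ref{thm:mainResult} for both $\Lc_A$ and $\hLc$, observe that the initial-error term is identical, and reduce everything to comparing the variance term $9\sigma^2/\big([2\xi-\gamma(2K-1)]^2(t+3)\big)$ at $\beta$ versus $\hat\beta$. The one place where you and the paper diverge slightly is in how $\xi$ is tied to $\beta$. You write $\xi(\beta)\sim c\beta$ as a leading-order ansatz and then subtract $1/(2c\beta)^2-1/(2c\hat\beta)^2$. The paper instead uses the exact expression $\xi=\min_d\frac{2\alpha_d\beta_d}{\alpha_d+\beta_d}$ from Theorem~\ref{thm:mainResult}, expands $1/\xi^2=\frac{(\alpha+\beta)^2}{(2\alpha\beta)^2}=\frac{\alpha^2+\beta^2}{(2\alpha\beta)^2}+\frac{1}{2\alpha\beta}$, drops the cross term $1/(2\alpha\beta)$, and then computes the difference $p(\alpha,\beta)-p(\alpha,\hat\beta)$ with $p(\alpha,\beta)\coloneqq\frac{\alpha^2+\beta^2}{(2\alpha\beta)^2}$; this evaluates \emph{exactly} to $\frac{\hat\beta^2-\beta^2}{(2\beta\hat\beta)^2}$ without any asymptotic approximation. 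Your linearization $\xi\sim c\beta$ is not quite right in general (since $\xi=2\alpha\beta/(\alpha+\beta)$ is not linear in $\beta$), so if you want the constant in the $\mathcal O(\cdot)$ to come out cleanly you should use the exact form of $\xi$ as the paper does; otherwise your argument and its handling of the auxiliary assumptions match the paper's.
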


The above theorem states that, given enough capacity of the deep model, LSALSA can learn steeper descent direction than SALSA. We provide below an intuition for that. Consider the gradient descent step (or its stochastic approximation) for $\hLc$ in the $\xv$-direction as given below
\begin{align}
    \xv\iter{k+1}&=\xv\iter{k}-\eta^k\nabla_{\xv}\hLc(\xv\iter{k}, \uv\iter{k+1}, \dv\iter{k}) \nonumber\\
    &=\xv\iter{k}-\eta^k\nabla_{\xv}\left(\Lc_A^k
        +\phi(\xv;\Sv) - \frac12\norm{\yv-\Av\xv}^2_2\right) \nonumber\\
    &=\xv\iter{k}-\eta^k\nabla_{\xv}\Lc_A^k 
        -\eta^k\left[\Sv^{-1}-\mu I - \Av\Tt\Av \right]\xv\iter{k} \nonumber\\
    &=\underbrace{\xv\iter{k}-\eta^k\nabla_{\xv}\Lc_A^k}_{\text{unlearned descent step}}
        -\eta^k\left[\Sv^{-1} -\nabla_{\xv}^2\Lc_A \right]\xv\iter{k} \nonumber\\
\label{eqn:leakyGradDescent}
    &=\left[I- \eta^k P\right]\xv\iter{k}-\eta^k\nabla_{\xv}\Lc_A^k,
\end{align}
where $P:=\Sv^{-1} -\nabla_{\xv}^2\Lc_A$.

This update can be seen as taking first a gradient descent step and then pushing the optimizer further in the learned direction, which we empirically show is a faster direction of decent

\section{Numerical Experiments} 
\label{section: results} 
We now present a variety of sparse coding inference tasks to evaluate our algorithm's speed, accuracy, and sparsity trade-offs. For each task (including both SC and MCA), we consider a variety of settings of $T$, i.e. the number of iterations, and do a full hyperparameter grid search for each setting. In other words, we ask ``how well can each encoding algorithm approximate the optimal codes, given a fixed number of stages?''. We compare LSALSA, truncated SALSA, truncated FISTA, and LISTA~\cite{DBLP:conf/icml/GregorL10} in terms of their RMSE proximity to optimal codes, sparsity levels, and performance on classification tasks. Both LSALSA and LISTA are implemented as feedforward neural networks. For MCA experiments, we run FISTA and LISTA using the concatenated dictionary $\Av$.

We focus on the inference problem and thus learn the dictionaries off-line as described in Section \ref{section: background}. Dictionary learning is performed only once for each data set, and the resulting dictionaries are held constant across all methods and experiments herein (visualization of the atoms of the obtained dictionaries can be found in Section~\ref{sec:B} in the Supplement). For MCA, the independently-learned dictionaries are still used, creating difficult ill-conditioned problems (because each dictionary is able to at least partially represent both components).

To train the encoders, we minimize Equation \ref{eqn: learningCostFcn} with respect to $\Wve$ and $\Sv$ using vanilla Stochastic Gradient Descent (SGD). We considered the optimization complete after a fixed number of epochs, or when the relative change in cost function fell below a threshold of $10^{-6}$. During hyperparameter grid searches, only 10 epochs through the training data were allowed; for testing, 100 epochs of training were allowed (usually the tolerance was reached before 100 epochs). 
The optimal codes are determined prior to training by solving the convex inference problem with fixed $\alpha^*$ and $\mu^*$, e.g. by running FISTA or SALSA to convergence (details are discussed in each section). In order to set the $\alpha^*,\mu^*$, we fix $\mu^*=10$ and tune $\alpha^*$ to yield an average sparsity of at least 89\%.
We then slowly increase $\alpha*$s until just before the optimal sparse codes' fail to provide recognizable image reconstructions. We take the simplest approach to image reconstruction: simply multiplying the sparse code with its corresponding dictionary. No additional learning was performed to achieve reconstruction: i.e. for LSALSA we have $\Av_i\cdot (f_e(\Wv_e,\Sv,\yv))_i$, where $f_e(\Wv_e,\Sv,\yv))_i$ represents the $i-$th component of the encoder's output. 

We implemented the experiments in Lua using Torch7, and executed the experiments on a 64-bit Linux machine with 32GB RAM, i7-6850K CPU at 3.6GHz, and GTX 1080 8GB GPU. The hyperparameters were selected via a grid search with specific values listed in the Supplement, Section~\ref{sec:AB}.

%\vspace{-0.6cm}
\subsection{Single Dictionary (SC) Case}
\vspace{-0.25cm}
\label{subsec:sdc}
We run SC experiments with four data sets: Fashion MNIST~\cite{data:fmnistxiao2017} ($10$ classes), ASIRRA~\cite{data:asirra} ($2$ classes), MNIST~\cite{data:mnist} ($10$ classes), and CIFAR-10 ~\cite{data:cifar} ($10$ classes). The ASIRRA data set is a collection of natural images of cats and dogs. We use a subset of the whole data set: $4000$ training images and $1000$ testing images as commonly done~\cite{data:catsdogsPAPER}. The results for MNIST and CIFAR-10 are reported in Section~\ref{sec:C} in the Supplement. 

The $32\times32$ Fashion MNIST images were first divided into $10\times10$ non-overlapping patches (ignoring extra pixels on two edges), resulting in $9$ patches per image. Then, optimal codes were computed for each vectorized patch by minimizing the objective from Equation~\ref{lasso} with FISTA for $200$ iterations. The ASIRRA images come in varying sizes. We resized them to the resolution of $224\times224$ via Torch7's bilinear interpolation and converted each image to grayscale. Then we divided them into $16\times16$ non-overlapping patches, resulting in $196$ patches per image. Optimal codes were computed patch-wise as for Fashion MNIST, but taking $700$ iterations to ensure convergence on this more difficult SC problem. For Fashion MNIST we selected $\alpha^*=0.15$ and for ASIRRA, $\alpha^*=0.5.$ using criteria mentioned earlier in the Section.

The data sets were then separated into training and testing sets.  The training patches were used to produce the dictionaries. Visualizations of the dictionary atoms are provided in Section~\ref{sec:B} in the Supplement. An exhaustive hyper-parameter search\footnote{The parameter settings that we explored in all our experiments are provided in the Supplement.} was performed for each encoding method and for each number of iterations $T$, to minimize RMSE between obtained and optimal codes. The hyper-parameter search included $\alpha$ for all methods, $\mu$ for SALSA and LSALSA, as well as SGD learning rates and learning rate decay schedules for LSALSA and LISTA training.

The obtained encoders were used to compute sparse codes on the test set. Those were then compared with the optimal codes via RMSE. The results for Fashion MNIST are shown both in terms of the number of iterations and the wallclock time in seconds used to make the prediction (Figure~\ref{fig: fmnistErr_and_time}). It takes FISTA more than $15$ iterations and SALSA more than $5$ to reach the error achieved by LSALSA in just one. Near $T=100$, both FISTA and SALSA are finally converging to the optimal codes. LISTA outperforms FISTA at first, but does not show much improvement after $T>10$. Similar results for ASIRRA are shown in the same figure. On this more difficult problem, it takes FISTA more than $50$ iterations and SALSA more than $20$ to catch up with LSALSA with a single iteration. LISTA and LSALSA are comparable for $T\leq5$, after which LSALSA dramatically improves its optimal code prediction and, similarly as in case of Fashion MNIST, shows advantage in terms of the number of iterations, inference time, and the quality of the recovered sparse codes over other methods. 

\begin{figure}[!ht]
%\vspace{-0.2in}
	\centering
	\begin{tabular}{cc}
		\hspace{-0.14in}\includegraphics[width = 0.5\textwidth]{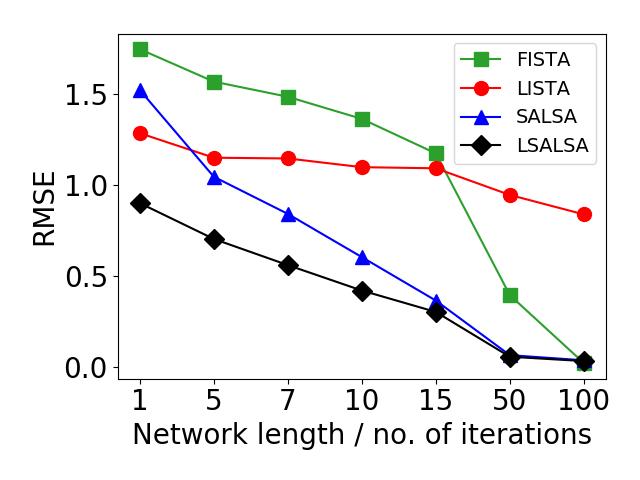}&
		\hspace{-0.21in}\includegraphics[width = 0.5\textwidth]{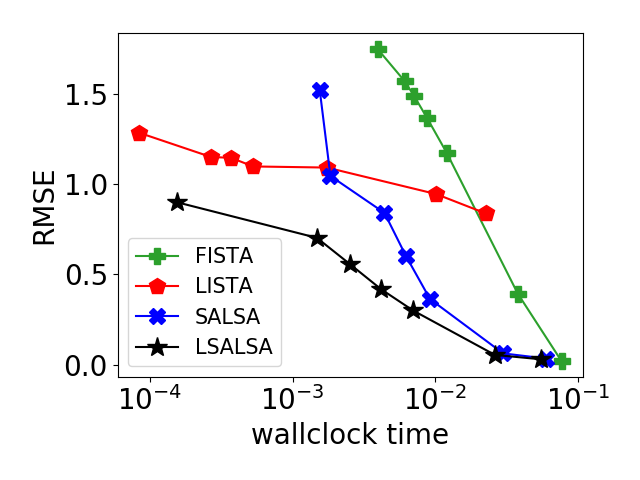}\\
		a) & b) \\
		\hspace{-0.18in}\includegraphics[width = 0.5\textwidth]{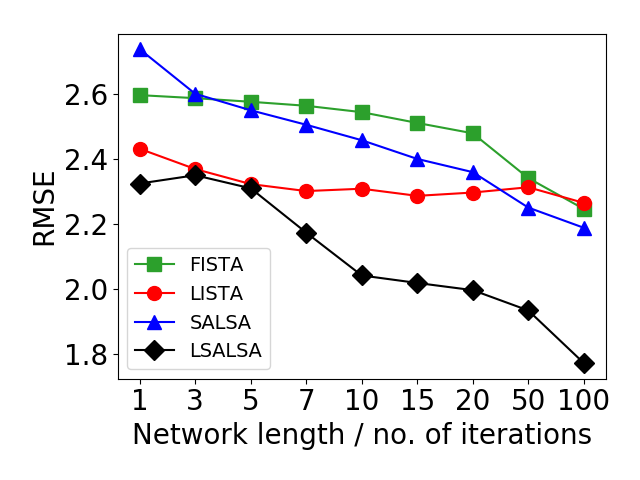}&
		\hspace{-0.26in}\includegraphics[width = 0.5\textwidth]{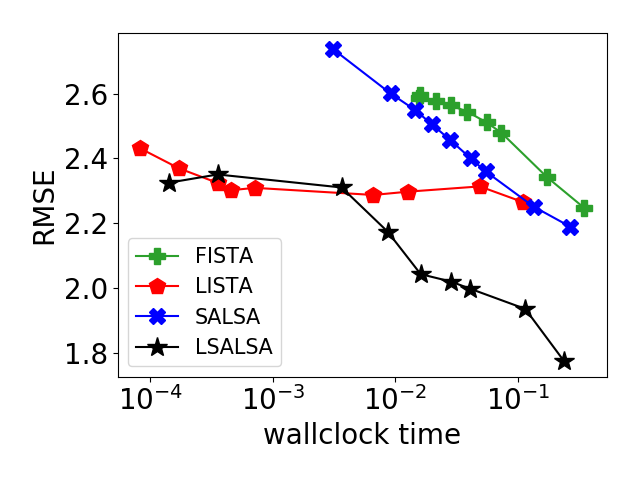}\\ c) & d)\\
	\end{tabular}
	\vspace{-0.1in}
	\caption{Code prediction error as a function (\textbf{a}) iteration count, and (\textbf{b}) inference wallclock time for Fashion MNIST (\textbf{a,b}) and ASIRRA (\textbf{c,d}).}
	\label{fig: fmnistErr_and_time} 
	\vspace{-0.1in}
\end{figure}

We also investigated which method yields better codes in terms of a classification task. We trained a logistic regression classifier to predict the label from the corresponding optimal sparse code, then ask: ``can the classifier still recognize a fast encoder's estimate to the optimal code?''. For Fashion MNIST each image is associated with $9$ optimal codes (one for each patch), yielding a total feature length of $9\times10\times10=900$. The Fashion MNIST classifier was trained until it achieved $0\%$ classification error on the optimal codes. For ASIRRA, each concatenated optimal code had length $196\times16\times16=50176$; to reduce the dimensionality we applied a random Gaussian projection $\mathcal{G}:\RR^{50176}\rightarrow\RR^{500}$ before inputting the codes into the classifier. The classifier was trained on the optimal projected codes of length $500$ until it achieved $0.5\%$ error. The results for Fashion MNIST and ASIRRA are shown in Table~\ref{tab: Fmnist1d} and~\ref{tab: catdog1d}, respectively, in Section~\ref{sec:C} in the Supplement.
\textit{Note}: The classifier was trained on the target test codes so that the resulting classification error is only due to the difference between the optimal and estimated codes.
In conclusion, although the FISTA, LISTA, or SALSA codes may not look that much worse than LSALSA in terms of RMSE, we see in the Tables that the expert classifiers cannot recognize the extracted codes, despite being trained to recognize the optimal codes which the algorithms seek to approximate.

%%%%%%%%%%%%%%%%%%%%%%%%%%%%%%%%%%%%%%%%%%%%%%%%%%%%%%%%%%%%%%%%
\vspace{-0.35cm}
\subsection{MCA: Two-Dictionary Case}
\label{subsec:mca}
\vspace{-0.25cm}
\subsubsection{Data Preparation}
\vspace{-0.25cm}
We now describe the dataset that we curated for the MCA experiments. We address the problem of decoupling numerals (text) from natural images, a topic closely related to text detection in natural scenes~\cite{expSet:mcaText2017,expSet:tian2015text,expSet:uyghur2018}. Following the notation introduced previously in the paper, we set $\yv_1^p$s to be the whole $32\times32$ MNIST images and $\yv_2^p$s to be non-overlapping $32\times32$ patches from ASIRRA (thus we have $49$ patches per image). We obtain $196$k training and $49$k testing patches from ASIRRA, and $60$k training and $10$k testing images from MNIST. We add together randomly selected MNIST images and ASIRRA patches to generate $588$k mixed training images and $49$k mixed testing images. Optimal codes were computed using SALSA (Algorithm \ref{alg: salsa2}) for $100$ iterations, ensuring that each component had a sparsity level greater than $89\%$, while retaining visually recognizable reconstructions. The values selected were $\alpha_1=0.125^*,$ $\alpha_2^*=0.2$, $\mu^*=10$. We also performed MCA experiments on additive mixtures of CIFAR-10 and MNIST images. Those results can be found in Section~\ref{sec:D} in the Supplement.

\subsubsection{Results}
\vspace{-0.25cm}
An exhaustive hyper-parameter search was performed for each encoding method and each number of iterations $T$. The hyper-parameters search included $\alpha$ for FISTA and LISTA, $\alpha_1,\alpha_2,\mu$ for SALSA and LSALSA, as well as SGD learning rates for LSALSA and LISTA training. The code prediction error curves are presented in Figure~\ref{fig: mca2Err_and_Time}. LSALSA steadily outperforms the others, until SALSA catches up around $T=50$. FISTA and LISTA, without a mechanism for distinguishing two dictionaries, struggle to estimate the optimal codes.

\begin{figure}[ht!]
	\centering
	\begin{tabular}{cc}
		\hspace{-0.08in}\includegraphics[width=0.45\textwidth]{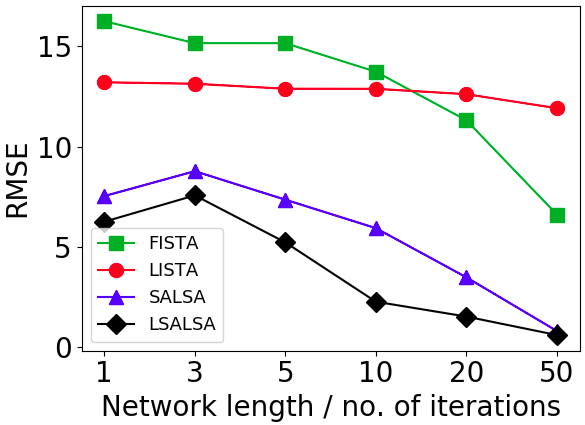}&%2D_Basis_MNIST_CD_boosted.png}\\%\vspace{-0.1in}\\
		\hspace{-0.07in}\includegraphics[width=0.45\textwidth]{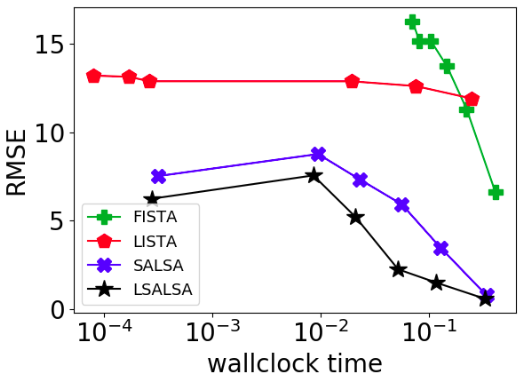}%2D_timeplot_MNIST_CD_new_boosted.png}
	\end{tabular}
	\vspace{-0.1in}
	\caption{ MCA experiment using MNIST + ASIRRA data set. (\textbf{left}) Code prediction errors for varying numbers of iterations. (\textbf{right}) Code prediction error versus inference wallclock time.}
	\label{fig: mca2Err_and_Time}
\end{figure}

\begin{figure}[ht!]
\vspace{-0.15in}
	\centering
	\begin{tabular}{cc}
		\hspace{-0.08in}\includegraphics[width=0.45\textwidth]{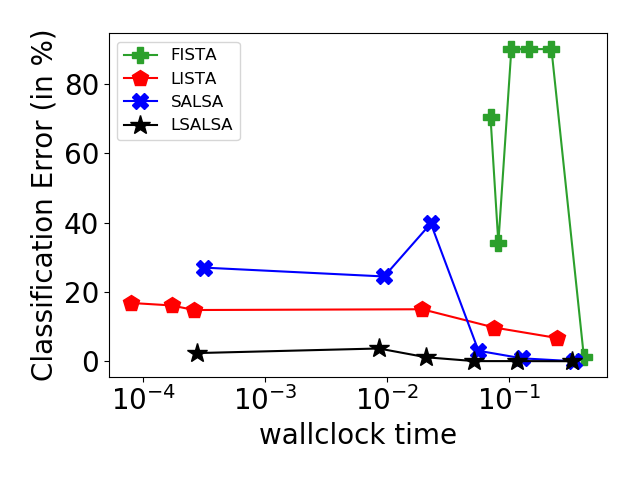}&
		\hspace{-0.07in}\includegraphics[width=0.45\textwidth]{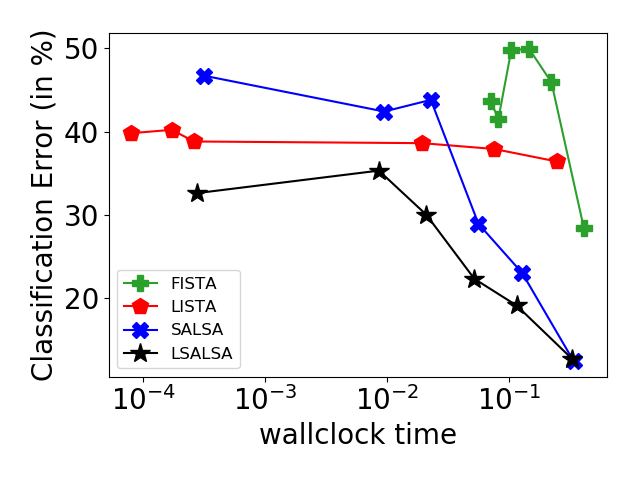}
	\end{tabular}
	\vspace{-0.2in}
	\caption{ MCA experiment separating MNIST + ASSIRA components: The trade-off between the sparse codes classification error Vs their inference time  is captured for different network lengths on (left) for MNIST (right) for ASSIRA.}
	\label{fig: cd_mnist_clFigure }
\end{figure}

\begin{figure}[ht!]\centering
\vspace{-0.15in}
%addthhis\vspace{-0.1in}
	% \begin{tabular}{ccc}
	%\hspace*{\fill}%
	\hspace{-0.01in}\subfigure[$T=1$]{\includegraphics[width=0.33\textwidth]{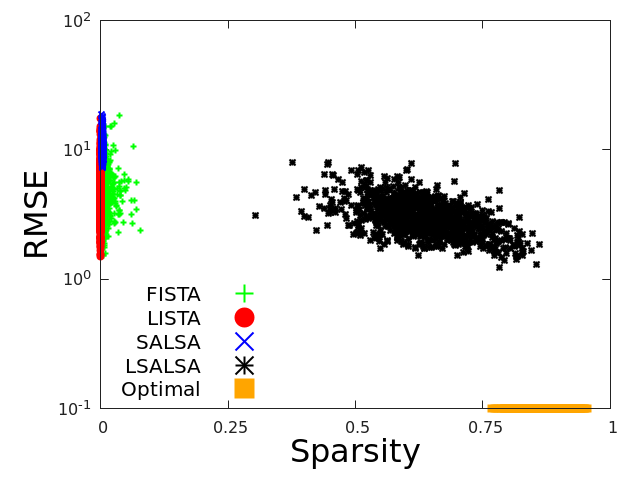}}
	\subfigure[$T=3$]{\includegraphics[width=0.33\textwidth]{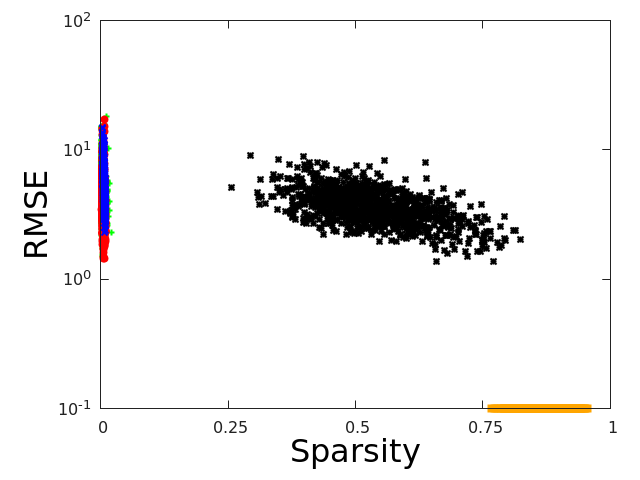}}\hspace{0.1em}%
	\subfigure[$T=5$]{\includegraphics[width=0.33\textwidth]{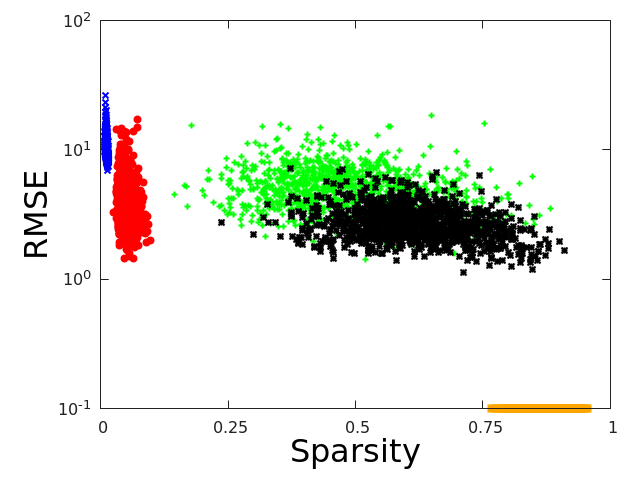}} \hspace*{\fill}%
	%\hspace*{\fill}%
	%addthis\vspace{-0.1in}
    \hspace{-0.07in}\subfigure[$T=10$]{\includegraphics[width=0.33\textwidth]{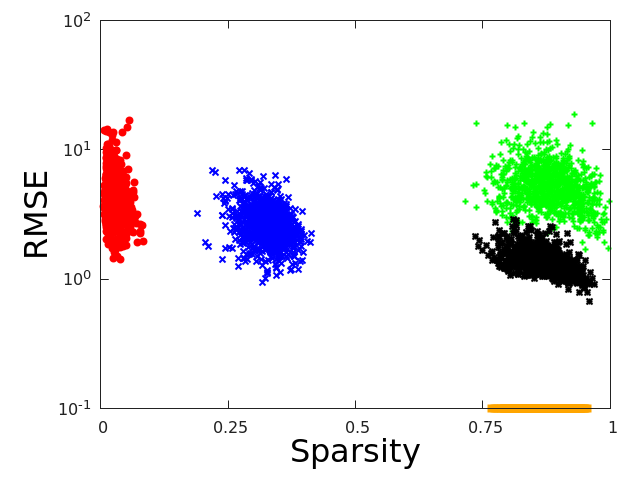}}
	\subfigure[$T=20$]{\includegraphics[width=0.33\textwidth]{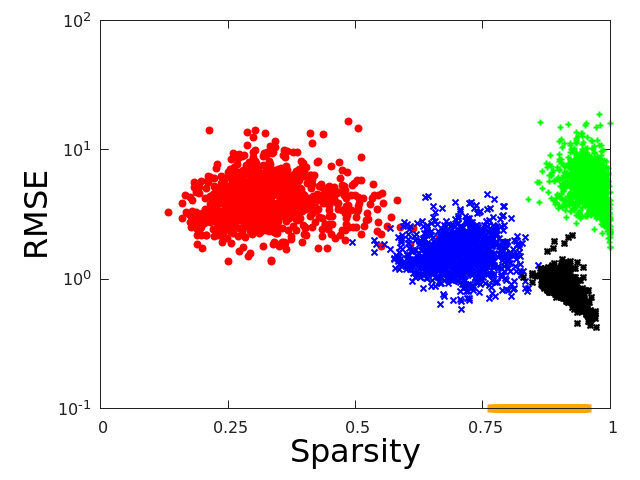}}\hspace{0.1em}%
    \vspace{-0.1in}
	\subfigure[$T=50$]{\includegraphics[width=0.33\textwidth]{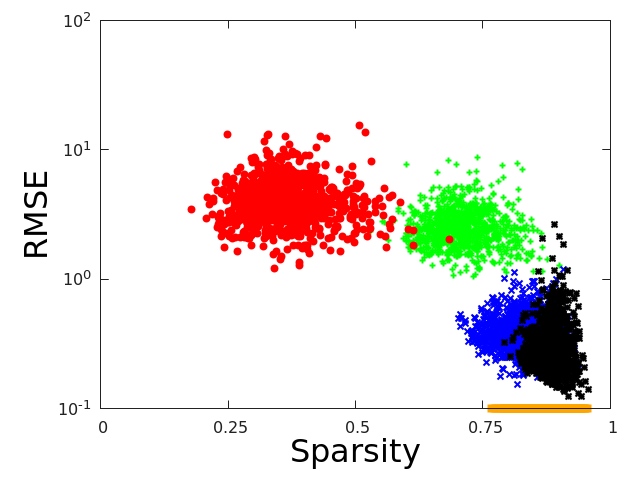}}
	% \end{tabular}
	%addthis\vspace{-0.05in}
	\caption{\label{fig: cdSparsity_multiN} Sparsity/accuracy trade-off analysis for ASIRRA obtained for the source separation experiment with MNIST + ASIRRA data set. Each method corresponds to a colored point cloud, where each point corresponds to one sample from the ASIRRA test data set. LSALSA (black) achieves the higher sparsity and/or lower code estimation error than the other methods for each $T$.}
\end{figure}

In Figure~\ref{fig: cdSparsity_multiN} we illustrate each method's sparsity/accuracy trade-off on the ASIRRA test data set, while varying $T$ (Supplemental Section~\ref{sec:E} contains a similar plot for MNIST). For each data point in the test set, we plot its sparsity vs. RMSE code-error, resulting in a point-cloud for each algorithm. For example, a sparsity value of 0.6 corresponds to 60\% of the code elements being equal to zero. These point clouds represent the tradeoff between sparsity and fidelity to the original targets (eg proximity to the global solution as defined in original the convex problem). For each $T$, the (black) LSALSA point-cloud is generally further to the right and/or located below the other point-clouds, representing higher sparsity and/or lower error, respectively. For example, while FISTA achieves some mildly sparser solutions for $T=10, 20$, it significantly sacrifices RMSE. In this sense, we argue that LSALSA enjoys the best sparsity-accuracy trade-off from among the four methods.

Similarly as before, we performed an evaluation on the classification task. A separate classifier was trained for each data set using the separated optimal codes $\xv_1^{*,p}$ and $\xv_2^{*,p}$, respectively. As before, a random Gaussian projection was used to reduce the ASIRRA codes to the length $500$ before inputting to the classifier. The classification results are depicted in Table \ref{tab: cdmnist_mnistTable} for MNIST and Table \ref{tab: cdmnist_cdTable} for ASIRRA. 

Finally, in Figure~\ref{recon: allmini346} we present exemplary reconstructed images obtained by different methods when performing source separation (more reconstruction results can be found in Section~\ref{sec:F} in the Supplement).
FISTA and LISTA are unable to separate components without severely corrupting the ASIRRA component. LSALSA has visually recognizable separations even at $T=1$, and the MNIST component is almost gone by $T=5$. Recall that no additional learning is employed to generate reconstructions, they are simply codes multiplied by corresponding dictionary matrices.

 \begin{table}[ht]\vspace{-0.5cm}
 	\centering
 	\begin{tabular}{ |c ||c|c|c|c|}
 		\hline
 		\multicolumn{1}{|c||}{ } & \multicolumn{4}{|c|}{Classification Error (in \%)}\\\cline{2-5} 
 		Iter & FISTA & LISTA & SALSA  & LSALSA 	\\[1ex]
 		\hline
 		1   & 70.29   & 16.81   & 27.00   & \textbf{2.37}	\\
 		\hline	
 		3   & 34.09   & 16.10   & 24.45   & \textbf{3.69}	\\
 		\hline	
 		5   & 89.97   & 14.78   & 39.74   & \textbf{1.15}\\
 		\hline		
 		10   & 90.00   & 15.00   & 3.03   & \textbf{0.05}\\
 		\hline		
 		20   & 90.00   & 9.74   & 0.85   & \textbf{0.05}	\\
 		\hline	
 		50   & 1.30   & 6.73   & 0.02   & \textbf{0.02}	\\
 		\hline	
 	\end{tabular}
 	\caption{\label{tab: cdmnist_mnistTable} MNIST classification error obtained after source separation (10 classes). The best performer is in bold.}
 \vspace{-0.2in}
 \end{table}
%%%%%%%%%%%%%% MCA: CATS AND DOGS CLASSIFICATION RESULTS
 \begin{table}[ht]\vspace{-0.5cm}
 \centering
 	\begin{tabular}{ |c ||c|c|c|c|}
 		\hline
 		\multicolumn{1}{|c||}{ } & \multicolumn{4}{|c|}{Classification Error (in \%)}\\\cline{2-5} 
 		Iter & FISTA & LISTA & SALSA  & LSALSA 	\\[1ex]
 		\hline
 		1   & 43.70   & 39.80   & 46.70   & \textbf{32.60}	\\
 		\hline	
 		3   & 41.50   & 40.20   & 42.40   & \textbf{35.30}	\\
 		\hline	
 		5   & 49.80   & 38.80   & 43.80   & \textbf{30.00}	\\
 		\hline	
 		10   & 49.90   & 38.60   & 28.90   & \textbf{22.30}	\\
 		\hline	
 		20   & 45.50   & 37.90   & 23.00   & \textbf{19.10}	\\
 		\hline	
 		50   & 28.40   & 36.40   & \textbf{12.40}   & 12.70	\\
 		\hline	
 	\end{tabular}
 	\caption{\label{tab: cdmnist_cdTable} ASIRRA classification error obtained after source separation(2 classes). The best performer is in bold.}
 \vspace{-0.2in}
 \end{table}

% \textit{Final discussion of parameter choices}: To summarize, the non-learnable parameters $\alpha$s \rr{(L1 weights)} and $\mu$ \rr{(an ADMM parameter)} are used to determine the thresholding cutoff, $\alpha/\mu$. Fixing $\mu,$ larger $\alpha$s lead to higher sparsity. \rr{We mention at the beginning of Section~\ref{section: results} that when generating optimal codes to be used as training targets, we set $\alpha^*$ and $\mu^*$ to yield an average sparsity of at least 89\% on the training dataset. We then hand-tuned the parameters so that optimal sparse codes correspond to visually recognizable image reconstructions.} At inference, we found that longer networks produced sparser codes (see Figure~\ref{fig: cdSparsity_multiN}). The influence of $\mu$ is as follows: larger $\mu$ (a) decreases the threshold, (b) amplifies the signal $(u-d)$ in the $x$-update step, and (c) also changes the initialization of $\Sv$. This is why our experiments focus on an exhaustive search for parameters which yield the best RMSE, and the resulting sparsity is studied. 

\vspace{-0.5cm}
\section{Conclusions}
\label{sec:conclusions}
In this paper we propose a deep encoder architecture LSALSA, obtained from time-unfolding the Split Augmented Lagrangian Shrinkage Algorithm (SALSA). We empirically demonstrate that LSALSA inherits desired properties from SALSA and outperforms baseline methods such as SALSA, FISTA, and LISTA in terms of both the quality of predicted sparse codes, and the running time in both the single and multiple (MCA) dictionary case. In the two-dictionary MCA setting, we furthermore show that LSALSA obtains the separation of image components faster, and with better visual quality than the separation obtained by SALSA. The LSALSA network can tackle the general single and multiple dictionary coding problems without extension, unlike common competitors. 

We also present a theoretical framework to analyze LSALSA. We show that the forward propagation of a signal through the LSALSA network is equivalent to a truncated ADMM algorithm applied to a new, learned cost function that generalizes the original problem.
We show via the optimality conditions for this new cost function that the LSALSA update is related to a ``learned pseudo-Newton'' update down the original loss landscape, whose descent direction is corrected by a learned modification of the Hessian of the original cost function.
Finally, we extend a very recent Stochastic Alternating Optimization analysis framework to show that a \textit{gradient descent step down the learned loss landscape is equivalent with taking a modified gradient descent step along the original loss landscape}. In this framework we provide conditions under which LSALSA's descent direction modification can speed up convergence.
%%%% ZOOMED VERSION
\begin{figure}[!hb]
	\centering
	\begin{tabular}{ccc}
		\centerIm{0.23\textwidth}{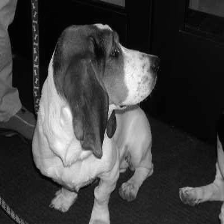}&
		\centerIm{0.23\textwidth}{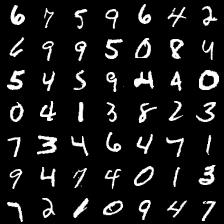}&
		\centerIm{0.23\textwidth}{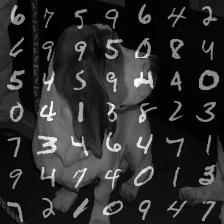}
		\vspace{0.025in}\\
		ASIRRA & MNIST & Mixed Image
	\end{tabular}
	\begin{tabular}{c|c}\hline\\
		\centerIm{0.21\textwidth}{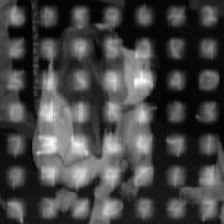}
		\centerIm{0.21\textwidth}{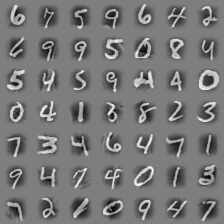}&
		\centerIm{0.21\textwidth}{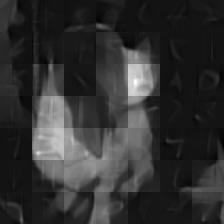}
		\centerIm{0.21\textwidth}{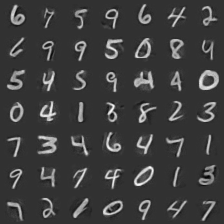}
		\vspace{0.025in}\\
		SALSA $T=1$&LSALSA $T=1$ \\
		\centerIm{0.21\textwidth}{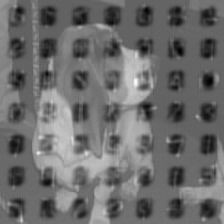}
		\centerIm{0.21\textwidth}{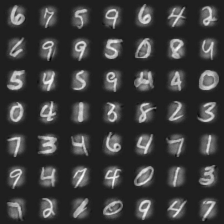}&
		\centerIm{0.21\textwidth}{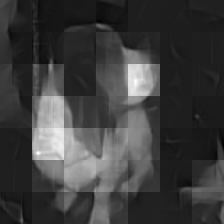}
		\centerIm{0.21\textwidth}{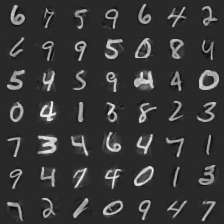}
		\vspace{0.025in}\\
		SALSA $T=5$&LSALSA $T=5$ \\
		\hline
		\\
		\centerIm{0.21\textwidth}{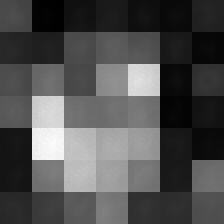}
		\centerIm{0.21\textwidth}{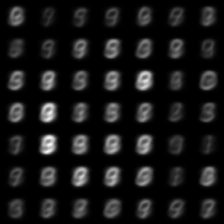}&
		\centerIm{0.21\textwidth}{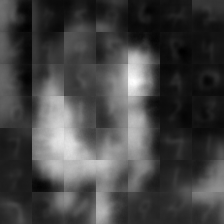}
		\centerIm{0.21\textwidth}{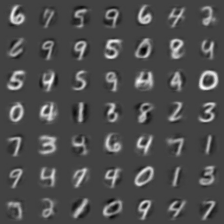}
		\vspace{0.025in}\\
		FISTA $T=1$&LISTA $T=1$ \\
		\centerIm{0.21\textwidth}{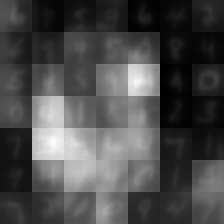}
		\centerIm{0.21\textwidth}{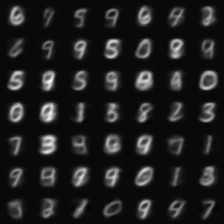}&
		\centerIm{0.21\textwidth}{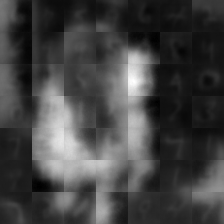}
		\centerIm{0.21\textwidth}{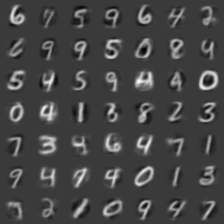}
		\vspace{0.025in}\\
		FISTA $T=5$&LISTA $T=5$ \\ \hline
	\end{tabular}
	\caption{\label{recon:zoombasset} MCA experiment using MNIST + ASIRRA. Image reconstructions obtained by SALSA, LSALSA, FISTA, LISTA for $T = 1,5$. Top row: original data (components and mixed).}
\end{figure}
%%%%%%%%%%%%%%%%%%%%%%% IM 346 (basset hound)---- end

%\begin{acknowledgements}
%If you'd like to thank anyone, place your comments here
%and remove the percent signs.
%\end{acknowledgements}

% BibTeX users please use one of
%\bibliographystyle{spbasic}      % basic style, author-year citations
\bibliographystyle{spmpsci}      % mathematics and physical sciences
%\bibliographystyle{spphys}       % APS-like style for physics
%\bibliography{}   % name your BibTeX data base

\bibliography{ms-bib}

\begin{thebibliography}{10}
\providecommand{\url}[1]{{#1}}
\providecommand{\urlprefix}{URL }
\expandafter\ifx\csname urlstyle\endcsname\relax
  \providecommand{\doi}[1]{DOI~\discretionary{}{}{}#1}\else
  \providecommand{\doi}{DOI~\discretionary{}{}{}\begingroup
  \urlstyle{rm}\Url}\fi

\bibitem{unrolled:adler17}
Adler, J., {\"{O}}ktem, O.: Learned {P}rimal-dual {R}econstruction.
\newblock CoRR \textbf{abs/1707.06474} (2017)

\bibitem{salsa:Afonso09}
Afonso, M., Bioucas-Dias, J., Figueiredo, M.: Fast {I}mage {R}ecovery {U}sing
  {V}ariable {S}plitting and {C}onstrained {O}ptimization.
\newblock {IEEE} Trans. Image Processing \textbf{19}(9), 2345--2356 (2010)

\bibitem{Afonso:2011:ALA:2319096.2321825}
Afonso, M., Bioucas-Dias, J., Figueiredo, M.: An {A}ugmented {L}agrangian
  {A}pproach to the {C}onstrained {O}ptimization {F}ormulation of {I}maging
  {I}nverse {P}roblems.
\newblock Trans. {I}mg. {P}roc. \textbf{20}(3), 681--695 (2011)

\bibitem{cvxOpt:combettes11}
Bauschke, H.H., Combettes, P.L.: Convex {A}nalysis and {M}onotone {O}perator
  {T}heory in {H}ilbert {S}paces, 1st edn.
\newblock Springer Publishing Company (2011)

\bibitem{Beck:2009:FIS:1658360.1658364}
Beck, A., Teboulle, M.: A {F}ast {I}terative {S}hrinkage-{T}hresholding
  {A}lgorithm for {L}inear {I}nverse {P}roblems.
\newblock SIAM {J.} {I}mg. {S}ci. \textbf{2}(1), 183--202 (2009)

\bibitem{sparseCoding:borgerding16}
Borgerding, M., Schniter, P.: Onsager-corrected deep learning for sparse linear
  inverse problems.
\newblock In: GlobalSIP (2016)

\bibitem{cvxOpt:boyd11}
Boyd, S., Parikh, N., Chu, E., Peleato, B., Eckstein, J.: Distributed
  {O}ptimization and {S}tatistical {L}earning via the {A}lternating {D}irection
  {M}ethod of {M}ultipliers.
\newblock Foundations and Trends® in Machine Learning \textbf{3}(1), 1--122
  (2011)

\bibitem{cvxOpt:boydbook2004}
Boyd, S., Vandenberghe, L.: Convex optimization.
\newblock Cambridge university press (2004)

\bibitem{cvxOpt:l1_2014}
Candes, E.J., Wakin, M.B., Boyd, S.P.: Enhancing sparsity by reweighted ℓ 1
  minimization.
\newblock Journal of Fourier analysis and applications \textbf{14}(5-6),
  877--905 (2008)

\bibitem{listatheory:chen2018}
Chen, X., Liu, J., Wang, Z., Yin, W.: Theoretical {L}inear {C}onvergence of
  {U}nfolded {ISTA} and its practical weights and thresholds.
\newblock arXiv preprint arXiv:1808.10038  (2018)

\bibitem{sao:beyondBackprop}
Choromanska, A., Cowen, B., Kumaravel, S., Luss, R., Rish, I., Kingsbury, B.,
  Tejwani, R., Bouneffouf, D.: Beyond backprop: Alternating minimization with
  co-activation memory.
\newblock arXiv preprint arXiv:1806.09077v3  (2019)

\bibitem{cvxOpt:daub04}
Daubechies, I., Defrise, M., De~Mol, C.: {An iterative thresholding algorithm
  for linear inverse problems with a sparsity constraint}.
\newblock Communications on Pure and Applied Mathematics \textbf{57}(11),
  1413--1457 (2004)

\bibitem{salsa:Erkstein90}
Eckstein, J., Bertsekas, D.: On the {D}ouglas-{R}achford splitting method and
  the proximal point algorithm for maximal monotone operators.
\newblock Math. Program. \textbf{55}, 293--318 (1992)

\bibitem{mca:elad05}
Elad, M., Starck, J.L., Querre, P., Donoho, D.L.: Simultaneous cartoon and
  texture image inpainting using morphological component analysis {(MCA)}.
\newblock Applied and {C}omputational {H}armonic {A}nalysis \textbf{19}(3),
  340--358 (2005)

\bibitem{data:asirra}
Elson, J., Douceur, J., Howell, J., Saul, J.: Asirra: a {CAPTCHA} that exploits
  interest-aligned manual image categorization.
\newblock In: ACM CCS (2007)

\bibitem{salsa:istaCompare}
Figueiredo, M., Bioucas-Dias, J., Afonso, M.: Fast {F}rame-{B}ased {I}mage
  {D}econvolution {U}sing {V}ariable {S}plitting and {C}onstrained
  {O}ptimization.
\newblock Proc. IEEE Workshop on Statistical Signal Processing pp. 109--112
  (2009)

\bibitem{Gers:2003:LPT:944919.944925}
Gers, F.A., Schraudolph, N.N., Schmidhuber, J.: Learning {P}recise {T}iming
  with {L}stm {R}ecurrent {N}etworks.
\newblock J. Mach. Learn. Res. \textbf{3}, 115--143 (2003)

\bibitem{cvxOpt:fadmmGoldstein2014}
Goldstein, T., O’Donoghue, B., Setzer, S.: Fast {A}lternating {D}irection
  {O}ptimization {M}ethods.
\newblock SIAM J. Imaging Sciences \textbf{7}, 1588--1623 (2014)

\bibitem{data:catsdogsPAPER}
Golle, P.: Machine {L}earning {A}ttacks {A}gainst the {A}sirra {CAPTCHA}.
\newblock In: ACM CCS (2008)

\bibitem{skip:greff2016highway}
Greff, K., Srivastava, R.K., Schmidhuber, J.: Highway and residual networks
  learn unrolled iterative estimation.
\newblock arXiv preprint arXiv:1612.07771  (2016)

\bibitem{DBLP:conf/icml/GregorL10}
Gregor, K., LeCun, Y.: Learning {F}ast {A}pproximations of {S}parse {C}oding.
\newblock In: ICML (2010)

\bibitem{conf/iccv/JarrettKRL09}
Jarrett, K., Kavukcuoglu, K., Koray, M., LeCun, Y.: What is the best
  multi-stage architecture for object recognition?
\newblock In: ICCV (2009)

\bibitem{koray-psd-08}
Kavukcuoglu, K., Ranzato, M.A., LeCun, Y.: Fast {I}nference in {S}parse
  {C}oding {A}lgorithms with {A}pplications to {O}bject {R}ecognition.
\newblock CoRR \textbf{abs/1010.3467} (2010)

\bibitem{data:cifar}
Krizhevsky, A., Hinton, G.: Learning multiple layers of features from tiny
  images (2009)

\bibitem{l1smooth:LangeZHV14}
Lange, M., Z{\"{u}}hlke, D., Holz, O., Villmann, T.: Applications of lp-norms
  and their smooth approximations for gradient based learning vector
  quantization.
\newblock In: ESANN (2014)

\bibitem{unrolled:LeRoux15}
{Le Roux}, J., Hershey, J.R., Weninger, F.: Deep {NMF} for speech separation.
\newblock In: ICASSP (2015)

\bibitem{data:mnist}
LeCun, Y., Bottou, L., Bengio, Y., Haffner, P.: Gradient-based learning applied
  to document recognition.
\newblock In: Proceedings of the IEEE (2009)

\bibitem{skip:liao2016}
Liao, Q., Poggio, T.: Bridging the gaps between residual learning, recurrent
  neural networks and visual cortex.
\newblock arXiv preprint arXiv:1604.03640  (2016)

\bibitem{expSet:mcaText2017}
Liu, S., Xian, Y., Li, H., Yu, Z.: Text detection in natural scene images using
  morphological component analysis and laplacian dictionary.
\newblock IEEE/CAA Journal of Automatica Sinica \textbf{PP}(99), 1--9 (2017)

\bibitem{listatheory:moreau2016}
Moreau, T., Bruna, J.: Understanding {T}rainable {S}parse {C}oding with
  {M}atrix {F}actorization  (2016)

\bibitem{nesterov2013introductory}
Nesterov, Y.: Introductory lectures on convex optimization: A basic course,
  vol.~87.
\newblock Springer Science \& Business Media (2013)

\bibitem{dictLearning:Olshausen96}
Olshausen, B., Field, D.: Emergence of simple-cell receptive field properties
  by learning a sparse code for natural images.
\newblock Nature \textbf{381}, 607--609 (1996)

\bibitem{skip:orhan2018}
Orhan, E., Pitkow, X.: Skip {C}onnections {E}liminate {S}ingularities.
\newblock In: International Conference on Learning Representations (2018)

\bibitem{mca:otazo15}
Otazo, R., Cand\`es, E., Sodickson, D.K.: Low-rank and {S}parse {M}atrix
  {D}ecomposition for {A}ccelerated {D}ynamic {MRI} with {S}eparation of
  {B}ackground and {D}ynamic {C}omponents.
\newblock Magn Reson Med \textbf{73}(3), 1125--36 (2015)

\bibitem{mca:parekh14}
Parekh, A., Selesnick, I., Rapoport, D., Ayappa, I.: Sleep {S}pindle
  {D}etection {U}sing {T}ime-{F}requency {S}parsity.
\newblock In: {IEEE} {SPMB} (2014)

\bibitem{mca:peyre07}
Peyr\'e, G., J.Fadili, Starck, J.L.: Learning {A}dapted {D}ictionaries for
  {G}eometry and {T}exture {S}eparation.
\newblock In: SPIE Wavelets (2007)

\bibitem{mca:peyre10}
Peyr\'e, G., J.Fadili, Starck, J.L.: Learning the {M}orphological {D}iversity.
\newblock SIAM J. Imaging Sciences \textbf{3}(3), 646--669 (2010)

\bibitem{l1smooth:schmidt2007}
Schmidt, M., Fung, G., Rosales, R.: Fast optimization methods for l1
  regularization: A comparative study and two new approaches.
\newblock In: J.N. Kok, J.~Koronacki, R.L.d. Mantaras, S.~Matwin,
  D.~Mladeni{\v{c}}, A.~Skowron (eds.) ECML (2007)

\bibitem{salsa:ivan}
Selesnick, I.: L1-norm penalized least squares with salsa.
\newblock Connexions p.~66 (2014).
\newblock
  \urlprefix\url{http://cnx.org/contents/e980d3cd-f201-4ef6-8992-d712bf0a88a3@5}

\bibitem{mca:shoham08}
Shoham, N., Elad, M.: Algorithms for {S}ignal {S}eparation {E}xploiting
  {S}parse {R}epresentations, with {A}pplication to {T}exture {I}mage
  {S}eparation.
\newblock In: Proceedings of the IEEE 25th Convention of Electrical and
  Electronics Engineers in Israel (2008)

\bibitem{unrolled:sprechmann13}
Sprechmann, P., Litman, R., Yakar, T., Bronstein, A., Sapiro, G.: Efficient
  {S}upervised {S}parse {A}nalysis and {S}ynthesis {O}perators.
\newblock In: NIPS (2013)

\bibitem{mca:orig_starck05}
Starck, J.L., Elad, M., Donoho, D.: Redundant {M}ultiscale {T}ransforms and
  {T}heir {A}pplication for {M}orphological {C}omponent {S}eparation.
\newblock Advances in {I}maging and {E}lectron {P}hysics \textbf{132}, 287--348
  (2004)

\bibitem{mca:starck05_2}
Starck, J.L., Elad, M., Donoho, D.: Image {D}ecomposition via the {C}ombination
  of {S}parse {R}epresentations and a {V}ariational {A}pproach.
\newblock {IEEE} Trans. {I}mage {P}rocessing \textbf{14}(10), 1570--1582 (2005)

\bibitem{mca:starck05_1}
Starck, J.L., Moudden, Y., J.Bobina, Elad, M., Donoho, D.: Morphological
  {C}omponent {A}nalysis.
\newblock In: Proc. SPIE Wavelets (2005)

\bibitem{expSet:tian2015text}
Tian, S., Pan, Y., Huang, C., Lu, S., Yu, K., Lim~Tan, C.: Text flow: A unified
  text detection system in natural scene images.
\newblock In: Proceedings of the IEEE international conference on computer
  vision, pp. 4651--4659 (2015)

\bibitem{mca:uysal16}
Uysal, F., Selesnick, I., Isom, B.: Mitigation of {W}ind {T}urbine {C}lutter
  for {W}eather {R}adar by {S}ignal {S}eparation.
\newblock {IEEE} {T}rans. {G}eoscience and {R}emote {S}ensing \textbf{54}(5),
  2925--2934 (2016)

\bibitem{ncAdmm:Wang2019}
Wang, Y., Yin, W., Zeng, J.: Global {C}onvergence of {ADMM} in {N}onconvex
  {N}onsmooth optimization.
\newblock Journal of Scientific Computing \textbf{78}(1), 29--63 (2019).
\newblock \doi{10.1007/s10915-018-0757-z}

\bibitem{sparseCoding:wang16}
Wang, Z., Ling, Q., Huang, T.: Learning {D}eep {L}0 {E}ncoders.
\newblock In: AAAI (2016)

\bibitem{unrolled:Wisdom2017DeepRN}
Wisdom, S., Powers, T., Pitton, J., Atlas, L.: Deep recurrent {NMF} for speech
  separation by unfolding iterative thresholding.
\newblock {IEEE} {W}orkshop on {A}pplications of {S}ignal {P}rocessing to
  {A}udio and {A}coustics {(WASPAA)} pp. 254--258 (2017)

\bibitem{data:fmnistxiao2017}
Xiao, H., Rasul, K., Vollgraf, R.: Fashion-{MNIST}: a {N}ovel {I}mage {D}ataset
  for {B}enchmarking machine learning algorithms.
\newblock CoRR \textbf{abs/1708.07747} (2017)

\bibitem{expSet:uyghur2018}
Yan, C., Xie, H., Liu, S., Yin, J., Zhang, Y., Dai, Q.: Effective uyghur
  language text detection in complex background images for traffic prompt
  identification.
\newblock IEEE transactions on intelligent transportation systems
  \textbf{19}(1), 220--229 (2018)

\bibitem{unrolled:yang16}
Yang, Y., Sun, J., Li, H., Xu, Z.: Deep {ADMM-N}et for {C}ompressive {S}ensing
  {MRI}.
\newblock In: NIPS (2016)

\bibitem{sparseCoding:zhou18}
Zhou, J., Di, K., Du, J., Peng, X., Yang, H., S.~Pan, I.T., Liu, Y., Qin, Z.,
  Goh, R.: {SC2N}et: {S}parse {LSTM}s for {S}parse {C}oding.
\newblock In: AAAI (2018)

\end{thebibliography}

%%%%%%%%%%%%%%%%%%%%%%%%%%%%%%%%%%%%%%%%%%%%%%%%%%%%%%%%%%%%%%%%
%%%%%%%%%%%%%%%%%%%%%%%%%%%%%%%%%%%%%%%%%%%%%%%%%%%%%%%%%%%%%%%%
%%%%%%%%%%%%%%%%%%%%%%%%%%%%%%%%%%%%%%%%%%%%%%%%%%%%%%%%%%%%%%%%
%%  SUPPLEMENTARY SECTION! %%%%%%%%%%%%%%%%%%%%%%%%%%%%%%%%%%%%%
%%%%%%%%%%%%%%%%%%%%%%%%%%%%%%%%%%%%%%%%%%%%%%%%%%%%%%%%%%%%%%%%

\clearpage
\newpage

\normalsize
\onecolumn

\appendix
% title:
\vbox{
	\hsize\textwidth\linewidth\hsize
	%\vskip 0.1in
	\centering{
		\Large\bf LSALSA: Accelerated Source Separation via Learned Sparse Coding\\ (Supplementary material) \par
	}
	%\vskip 0.3in minus 0.1in
}
\section{Sum of $\ell_1$-norms}\label{sec:suppL1}
The proximity operator of $f_2$ (the weighted sum of $\ell_1$-norms as given in Equation~\ref{eqn: mca}) is separable with respect to signal components $\xv_i$:
\begin{align}
    \text{prox}_{f_2}(\zv) :&= \argmin{\xv} \left(\sum_i^D \alpha_i\norm{\xv_i}_1\right)  + \tfrac12\norm{\zv-\xv}_2^2\\
                          &= \argmin{\xv} \sum_i^D \left(\alpha_i\norm{\xv_i}_1  + \tfrac12\norm{\zv_i-\xv_i}_2^2\right),
\end{align}
thus simplifying to $D$ element-wise soft thresholding operators:
\begin{align}
    [\text{prox}_{f_2}(\zv)]_i &= \argmin{\xv_i} \alpha_i\norm{\xv_i}_1  + \tfrac12\norm{\zv_i-\xv_i}_2^2\\
                               &= \soft(\zv_i; \alpha_i),
\end{align}
for $i=1,...,D$, where vector-valued soft-thresholding is defined elementwise in Equation~\ref{eqn: shrink}.

%\vspace{-0.8cm}
\section{Recursion Equation Derivation}
\label{sec:A_recursion}
We seek a formula for nonlinearity output $u(t)$ in terms of filtered data $\Wve$ and previous layer outputs $u(j), j<t.$ In this section only, we use the simplified notation $u_t\coloneqq u(t)$ for $u$ at the $t-th$ layer.

\begin{align*}
    x_t + d_t &= x_t+(d_{t-1}-u_t+x_t)\\
              &= 2\Sv\left[\Wve\yv+\mu(u_t-d_{t-1})\right]+d_{t-1}-u_t\\
              &=2\Sv\Wve\yv + (2\mu\Sv-I)u_t + (I-2\mu\Sv)d_{t-1}\\
              &=2\Sv\Wve\yv + (2\mu\Sv-I)u_t + (I-2\mu\Sv)\left[d_{t-2}-u_{t-1}+x_{t-1}\right]\\
              &=2\Sv\Wve\yv + (2\mu\Sv-I)u_t + (I-2\mu\Sv)\left[d_{t-2}-u_{t-1}\right]\\
              &\hspace{3.4cm}+(I-2\mu\Sv)\Sv\left[\Wve\yv+\mu(u_{t-1}-d_{t-2})\right],
\end{align*}
where we have expanded $x_{t-1}$ in the last line. Collecting like terms:
\begin{align*}
    x_t + d_t = &(2I+[I-2\mu\Sv])\Sv\Wve\yv \\
    &+ (2\mu\Sv-I)u_t + (I-2\mu\Sv)(\mu\Sv-I)u_{t-1}\\
    &+(I-2\mu\Sv)(I-\mu\Sv)d_{t-2}.
\end{align*}
Expanding just the last term and re-collecting terms once again:
\begin{align*}
    x_t+d_t = &(2I+[I-2\mu\Sv]+[I-2\mu\Sv][I-\mu\Sv])\Sv\Wve\yv\\
              &+ (I-2\mu\Sv)\left[ -u_t + (\mu\Sv-I)u_{t-1} + (I-\mu\Sv)(\mu\Sv-I)u_{t-2}\right]\\
              &+(I-2\mu\Sv)(I-\mu\Sv)^2
                    \underbrace{\left[
                                (I-\mu\Sv)d_{t-4}+(\mu\Sv-I)u_{t-3}+\Sv\Wve\yv
                                \right]}_{d_{t-3}}\\
            = &(2I+[I-2\mu\Sv]+[I-2\mu\Sv][I-\mu\Sv]+[I-2\mu\Sv][I-\mu\Sv]^2)\Sv\Wve\yv\\
            &+(I-2\mu\Sv)[ -u_t + (\mu\Sv-I)u_{t-1}\\
            &\hspace{1cm}+ (I-\mu\Sv)(\mu\Sv-I)u_{t-2}+ (I-\mu\Sv)^2(\mu\Sv-I)u_{t-3}]\\
            &+(I-2\mu\Sv)(I-\mu\Sv)^3d_{t-4}.
\end{align*}
The pattern has emerged. Let $M=[I-\mu\Sv]$, and after expanding the $d_j,x_j$ as done above $p-1$ times, i.e. for $j=t,...,(t-p+1)$ we have:
\begin{align*} 
    x_t+d_t = &\left(2I+[I-2\mu\Sv]\sum_{n=0}^{p-2}M^n\right)\Sv\Wve\yv\\
              &+(I-2\mu\Sv)\left[ -u_t + Mu_{t-1} -\sum_{n=2}^{p-1} M^n u_{t-n}\right]\\
              &+(I-2\mu\Sv)(I-\mu\Sv)^{p-1} d_{t-p}.
\end{align*}
To complete the expansion, let $p=t-1$:
\begin{align*} 
    x_t+d_t = &\left(2I+[I-2\mu\Sv]\sum_{n=0}^{t-3}M^n\right)\Sv\Wve\yv\\
              &+(I-2\mu\Sv)\left[ -u_t + Mu_{t-1} -\sum_{n=2}^{t-2} M^n u_{t-n}\right]\\
              &+(I-2\mu\Sv)(I-\mu\Sv)^{t-2}
                                \underbrace{\left[
                                    -Mu_1+\Sv\Wve\yv
                                \right]}_{d_1}
\end{align*}
where $d_0=0$. Collecting like terms we can absorb the final term; then using the definition of $u_t:$
\begin{align*}
    u_t &= \soft\{x_t+d_t\}\\
        &= \soft\left\{
                   \left(2I+[I-2\mu\Sv]\sum_{n=0}^{t-2}M^n\right)\Sv\Wve\yv
                    +(I-2\mu\Sv)\left[ -u_t + Mu_{t-1} -\sum_{n=2}^{t-1} M^n u_{t-n}\right]
                \right\}
\end{align*}

Clearly, in the case of SALSA, the non-linearity output $\uv(t+1)$ has a complex dependence on \textit{all} of the previous layers' outputs. This dependence comes from the auxiliary variable $\dv$, i.e. the Lagrangian multipliers term. In contrast, the recursion formula for ISTA follows directly from its two-step definition~\cite{cvxOpt:daub04}:
\begin{align}
    \uv(t+1)=\soft(\tfrac1L\Av\Tt\yv+\Slista\uv(t)),
\end{align}
where $\tilde{\Sv}:=I-\tfrac1L A\Tt A$.
Compared to the corresponding equation of SALSA given above, the matrix $\tilde{\Sv}$ plays a limited role in the ISTA/LISTA update. The difference between LSALSA and LISTA is a direct consequence of a different nature of their maternal algorithms, SALSA and ISTA respectively. ISTA is a proximal gradient method that solves the optimization problem of Equation~\ref{lasso} by iteratively applying gradient descent step followed by soft thresholding. SALSA on the other hand is a second-order method that recasts the problem in terms of constrained optimization and optimizes the corresponding Augmented Lagrangian. Consequently, LISTA has a simple structure such that each layer depends only on the previous layer and re-injection of the filtered data $\Wlista\yv$. LSALSA has cross-layer connections resulting from the existence of the Lagrangian multiplier update (the $\dv$-step) in the SALSA algorithm, which allows for learning dependencies between non-adjacent layers.

\section{Derivation of Reparameterization}\label{supp:derivReparam}
We suppose that $\xv$ minimizes \textit{some} Augmented Lagrangian that has the same regularizer and constraint as our original problem, but with a new data-fidelity term $\hat{f_1}(\xv)$:
\begin{align}
\label{optCond1}
    \xv\iter{k+1} &= \argmin{x}\ \ell^A(x,u\iter{k+1};d\iter{k}),\\
    \ell^A(\xv,\uv;\dv) &= \hat{f_1}(\xv) + f_2(\uv) + \frac\mu2\|\uv-\xv-\dv\|_2^2\\
\label{nabPhiStruct}
    \Rightarrow \nabla_{\xv}&\ell^A(\xv,\uv;\dv) = \nabla_{\xv}\hat{f_1}(\xv) + \mu(\xv+\dv-\uv)
\end{align}
The optimality condition for the minimizing $\ell^A$ w.r.t. $\xv$ is:
\begin{align}
\label{optCond2}
    0 &= \nabla \ell(\xv\iter{k+1}, \uv\iter{k+1}; \dv\iter{k}).
\end{align}
And recall the formula for $\xv\iter{k+1}$:
\begin{align}
    \xv\iter{k+1} = &\Sv\left(
                        \Wve\yv+\mu(\uv\iter{k+1}-\dv\iter{k})
                       \right)\\
\label{defxL}
    \Rightarrow 0 =  &\Sv^{-1} \xv\iter{k+1} - \Wve\yv -\mu\left[
                                                                \uv\iter{k+1}-\dv\iter{k}
                                                            \right],
\end{align}
Where we multiplied both sides by $\Sv^{-1}$ and subtracted the right-hand-side from both sides of the equation. Combining Equations \ref{defxL} and \ref{optCond2} yields:
\begin{equation}
\label{GradientDefn}
    \nabla \ell(\xv\iter{k+1}, u\iter{k+1}; d\iter{k})
    =  \Sv^{-1} \xv\iter{k+1} - \Wve y -\mu\left(u\iter{k+1}-d\iter{k}\right),
\end{equation}
And then we plug in the imposed structure from Equation \ref{nabPhiStruct} to discover a formula for $\hat{f_1}$:
\begin{align}
     \nabla_x\hat{f_1}(\xv\iter{k+1}) + \mu(\xv\iter{k+1}+d\iter{k+1}-u\iter{k})
      &=  \Sv^{-1} \xv\iter{k+1} - \Wve \yv +\mu\left(u\iter{k+1}-d\iter{k}\right) \nonumber\\
\Rightarrow \nabla_x\hat{f_1}(x) &= 
      [\Sv^{-1}-\mu I] \xv\iter{k+1}-\Wve \yv
\end{align}
And integrating both sides gives us:
\begin{align}
    \hat{f_1}(\xv) = \xv\Tt[\Sv^{-1}-\mu I]\xv -(\Wve \yv)\Tt \xv + c_0,
\end{align}
where $c_0$ is an arbitrary constant. Setting $c_0=\yv\Tt\yv$ gives us the generalization we seek.

\begin{proof}[Theorem \ref{Thm:reparam}]
Note that $\hLc$ and $\Lc_A$ have identical dependence on $\uv$ and $\dv$, and thus LSALSA and SALSA share optimal update formulas in these directions. This is illustrated in Equation~\ref{eqn: LSrelate} (i.e., $\hLc$ only alters the loss landscape along the $\xv$ directions, using $\Sv$). Thus, we only need to show optimality for the new $\xv$-update. The optimality condition for $\xv\iter{k+1}$ with respect to $\hLc$ is:
\begin{align*}
    0 &= \nabla_{\xv} \hLc(\xv\iter{k+1},\uv\iter{k+1};\dv\iter{k})\\
      &= \left[\Sv^{-1}-\mu I\right]\xv\iter{k+1}-\Wve \yv + \mu(\xv\iter{k+1}-\uv\iter{k+1}+\dv\iter{k})\\
      &= \Sv^{-1}\xv\iter{k+1}-\left[\Wve \yv + \mu(\uv\iter{k+1}-\dv\iter{k})\right],
\end{align*}
then solving for $\xv\iter{k+1}$ yields the $\xv$-update in Algorithm \ref{alg: fprop2}, completing the proof.
% $\square$
\end{proof}

\section{Convergence Statement}\label{supp:convAnalyz}
We use the following notation to abbreviate evaluation of the function $\Lu{}$ at the $d^{th}$ step of the algorithm, within the $t^{th}$ iteration:
\vspace{-0.25cm}
\begin{equation}
    \Lu{x_d}^t= \Lu{}(x_1^{t+1},...,x_{d-1}^{t+1}, x_d^t, x_{d+1}^t,...,x_K^t) ,
\end{equation}
i.e. where $\{x_i\}_{i=1}^{d-1}$ have been updated and $\{x_i\}_{i=d}^{K}$ have not. Finally let us denote the cost function evaluated at optimal points everywhere except the $d^{th}$ argument with:
\begin{equation}
    \Lu{x_d}^* = \Lu{}(x_1^*, x_2^*, ..., x_{d-1}^*, x_d, x_{d+1}^*, ..., x_{K}^*).
\end{equation}
In general, if no superscript is present, $\Lu{x_d}$ indicates that we are holding all $x_{i\neq d}$ fixed and only considering a function of the $d^{th}$ argument only.

%%%%%%%%%%%%%%%%%%%%%%%%%%%%%%%%%%%%%%%%%%%%%%%
%%%%%%%%%%%%%%%%%%%%%%%%%%%%%%%%%%%%%%%%%%%%%%%
\vspace{-0.25cm}
\subsection{Statement of Theorems and Results}
\vspace{-0.25cm}
Given the Assumptions listed in~\ref{sec:assumps}, the following theorems can be derived from the results in~\cite{sao:beyondBackprop}. Though our Theorem~\ref{thm:recursion} closely follows the steps of Theorem 3.1 from~\cite{sao:beyondBackprop}, there are some minor differences where we allow ascent steps in addition to descent steps, so a proof is provided in Supplementary Section~\ref{supp:convAnalyz}. Our Theorem~\ref{thm:mainResult} follows directly from Theorem 3.2 in~\cite{sao:beyondBackprop}, so we do not provide a proof.

\begin{theorem}\label{thm:recursion}
    Given the Stochastic Alternating Optimization scheme in Equation~\ref{eqn:propIt} for solving the saddle point problem of Equation~\ref{eq:sadp} and a decaying step size $\{\eta^t\}_{t = 0}^{\infty}$, the error at the $(t+1)^{th}$ iteration, denoted by $\Delta_d^{t+1}\coloneqq x_d^{t+1}-x_d^*$, satisfies the following recursion:
    \begin{equation}
        \sum_{d=1}^K\norm{\Delta_d^{t+1}}^2 \leq (1-q^t)\sum_{d=1}^K\norm{\Delta_d^t}^2+\frac{(\eta^t\sigma)^2}{1-\eta^t\gamma_d(K-1)},
    \end{equation}
    where $q^t=1-\frac{1-2\eta^t\xi+\eta^t\gamma(2K-1)}{1-\eta^t\gamma K}\in(0,1)$ for $\gamma<\frac{2\xi}{3K-1}$.
\end{theorem}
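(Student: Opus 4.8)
To prove Theorem~\ref{thm:recursion} the plan is to adapt the one-step analysis of~\cite{sao:beyondBackprop} to the present saddle-point setting, under the Assumptions of Section~\ref{sec:assumps}; the only genuinely new element is that the $K_2$ coordinates with $d>K_1$ are updated by gradient \emph{ascent} rather than descent. Fix $t$ and a coordinate $d$. Since the constraint set $B_2(\tfrac{r_d}{2},x_d^*)$ is a convex ball centered at the optimum, $\Pi_d(x_d^*)=x_d^*$, and projections onto convex sets are non-expansive, so
\[
  \norm{\Delta_d^{t+1}}^2=\norm{\Pi_d\!\big(x_d^t\pm\eta^t\nabla_d^1\Lu{x_d}^t\big)-\Pi_d(x_d^*)}^2\le\norm{\Delta_d^t\pm\eta^t\nabla_d^1\Lu{x_d}^t}^2 .
\]
Expanding the square and taking the expectation conditioned on the iterates produced so far, using unbiasedness $\E{\nabla_d^1\Lu{x_d}^t}=\nabla_d\Lu{x_d}^t$ and the second-moment bound of Assumption~\ref{asspn:BG} (applicable because the projection keeps every iterate inside $B_2(r_d,x_d^*)$), gives
\[
  \E{\norm{\Delta_d^{t+1}}^2}\le\norm{\Delta_d^t}^2\pm 2\eta^t\langle\Delta_d^t,\nabla_d\Lu{x_d}^t\rangle+(\eta^t)^2\sigma_d^2 .
\]

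The heart of the argument is to bound $\pm 2\eta^t\langle\Delta_d^t,\nabla_d\Lu{x_d}^t\rangle$ by $-2\eta^t\beta_d\norm{\Delta_d^t}^2$ plus an error controlled by the other coordinates, uniformly over $d\le K_1$ and $d>K_1$. Split $\nabla_d\Lu{x_d}^t=\nabla_d\Lu{x_d}^*+\big(\nabla_d\Lu{x_d}^t-\nabla_d\Lu{x_d}^*\big)$. For the first piece, when $d\le K_1$ the map $\Lu{x_d}^*$ is $\beta_d$-convex and minimized at $x_d^*$, so $\nabla_d\Lu{x_d}^*(x_d^*)=0$ and monotonicity of the gradient gives $\langle\Delta_d^t,\nabla_d\Lu{x_d}^*\rangle\ge\beta_d\norm{\Delta_d^t}^2$; when $d>K_1$ the same reasoning applied to the $\beta_d$-convex map $-\Lu{x_d}^*$ gives $\langle\Delta_d^t,\nabla_d\Lu{x_d}^*\rangle\le-\beta_d\norm{\Delta_d^t}^2$, and in each case the sign of the update step cancels the sign of the inner product, so this piece contributes $\le-2\eta^t\beta_d\norm{\Delta_d^t}^2$. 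For the second piece, Cauchy--Schwarz together with the gradient-stability bound $GS(\gamma_d)$ (Assumption~\ref{asspn:GS}), noting that at step $d$ the coordinates $i<d$ already carry their $(t{+}1)$-st values, yields the error term. Combining, for every $d$,
\[
  \E{\norm{\Delta_d^{t+1}}^2}\le(1-2\eta^t\beta_d)\norm{\Delta_d^t}^2+2\eta^t\gamma_d\norm{\Delta_d^t}\Big(\sum_{i<d}\norm{\Delta_i^{t+1}}+\sum_{i>d}\norm{\Delta_i^t}\Big)+(\eta^t)^2\sigma_d^2 .
\]

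Finally I would sum this over $d=1,\dots,K$, bound $\beta_d\ge\xi$ (so that $\xi(\beta)$ is monotone in the convexity modulus, as used elsewhere), $\gamma_d\le\gamma$, and $\sum_d\sigma_d^2=\sigma^2$, and apply Young's inequality to each mixed product $\norm{\Delta_d^t}\norm{\Delta_i^{(\cdot)}}$. A careful count of how many times each $\norm{\Delta_i^t}^2$ and each $\norm{\Delta_i^{t+1}}^2$ occurs in the resulting double sum (as in~\cite[Thm.~3.1]{sao:beyondBackprop}) produces, after collecting terms, a right-hand side of the form $\big(1-2\eta^t\xi+\eta^t\gamma(2K-1)\big)\sum_d\norm{\Delta_d^t}^2+\eta^t\gamma K\sum_d\norm{\Delta_d^{t+1}}^2+(\eta^t\sigma)^2$. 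Moving the $(t{+}1)$-st terms to the left and dividing by $1-\eta^t\gamma K>0$ gives exactly the claimed recursion with $1-q^t=\frac{1-2\eta^t\xi+\eta^t\gamma(2K-1)}{1-\eta^t\gamma K}$; that $q^t\in(0,1)$ reduces to $1-2\eta^t\xi+\eta^t\gamma(2K-1)>0$ and $1-2\eta^t\xi+\eta^t\gamma(2K-1)<1-\eta^t\gamma K$, the latter rearranging to $\gamma(3K-1)<2\xi$, i.e. the hypothesis $\gamma<\tfrac{2\xi}{3K-1}$, while the former holds for all $t$ once the step size is small enough, which a decaying schedule guarantees.

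The conceptual gap beyond~\cite{sao:beyondBackprop} is small: only the per-coordinate step for the ascent indices needs its own (routine) justification, the point being that flipping the update sign and flipping convexity to concavity cancel, so ascent coordinates obey the same recursion as descent ones. The step I expect to be the most delicate is the bookkeeping in the last paragraph — tracking the exact multiplicities of the cross terms so that the constants come out to $2K-1$ and $K$ — since an off-by-one there changes $q^t$ and hence the threshold on $\gamma$ below which the recursion is a genuine contraction.
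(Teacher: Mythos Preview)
Your overall strategy matches the paper's almost step for step: projection non-expansiveness, expand the square, take expectations, isolate the inner-product term, handle descent and ascent indices symmetrically, control the cross terms by $GS(\gamma_d)$ and Young's inequality, then sum and rearrange. The bookkeeping you flag as delicate is indeed done in the paper exactly as you outline, with the crude bound $\sum_d\sum_{i<d}\norm{\Delta_i^{\cdot}}^2\le K\sum_d\norm{\Delta_d^{\cdot}}^2$ producing the constants $2K-1$ and $K$.

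The one genuine gap is how you obtain the contraction constant. Your split $\nabla_d\Lu{x_d}^t=\nabla_d\Lu{x_d}^*+(\nabla_d\Lu{x_d}^t-\nabla_d\Lu{x_d}^*)$ and the strong convexity/concavity argument give only $\langle\pm\Delta_d^t,\nabla_d\Lu{x_d}^*\rangle\le-\beta_d\norm{\Delta_d^t}^2$, hence a per-coordinate factor $(1-2\eta^t\beta_d)$. You then assert ``bound $\beta_d\ge\xi$'' to match the stated recursion, but this inequality is false in general: since $\beta_d\le\alpha_d$, the harmonic-mean quantity satisfies $\xi_d=\tfrac{2\alpha_d\beta_d}{\alpha_d+\beta_d}\ge\beta_d$, and $\xi=\min_j\xi_j$ need not be dominated by every $\beta_d$ (take $\beta_1=1,\alpha_1=3$, $\beta_2=\alpha_2=10$: then $\xi=1.5>\beta_1$). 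Your route therefore proves a valid but strictly weaker recursion with $\min_d\beta_d$ in place of $\xi$, and the threshold $\gamma<\tfrac{2\xi}{3K-1}$ would have to be tightened accordingly. The paper avoids this by \emph{not} splitting the gradient directly; instead it rewrites the inner-product term via the gradient operator $G_d^{t,\pm}=x_d^t\pm\eta^t\nabla_d\Lu{x_d}^t$, applies Cauchy--Schwarz to $\langle G_d^{t,\pm}-x_d^*,\Delta_d^t\rangle$, and then invokes the contractivity bound $\norm{G_d^{*,\pm}-x_d^*}\le(1-\eta\xi_d)\norm{x_d-x_d^*}$ (the classical Nesterov-type estimate that uses \emph{both} $\alpha_d$-smoothness and $\beta_d$-strong convexity) together with $GS(\gamma_d)$. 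That is where the factor $\xi$ rather than $\beta$ enters. To repair your argument you would need to replace the bare strong-convexity step by this combined smoothness--convexity contraction, which is precisely the content of the paper's Corollary~\ref{corr:annaThmA1}.
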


\begin{theorem}\label{thm:mainResult}
     Given the Stochastic Alternating Optimization scheme in Equation~\ref{eqn:propIt} for solving the saddle point problem of Equation~\ref{eq:sadp} and a decaying step size\\ $\eta^t = \frac{3/2}{[2\xi-\gamma(2K-1)](t+2)+\frac32\gamma K}$ and assuming $\gamma<\frac{2\xi}{3K-1}$, the error at iteration $t+1$ satisfies
    \begin{equation}\label{eqn:officialErrBound}
    \E{\sum_{d=1}^K\norm{\Delta_d^{t+1}}^2}\leq\left(\frac{2}{t+3}\right)^{\frac32}\E{\sum_{d=1}^K\norm{\Delta_d^0}^2} + \frac{9\sigma^2}{[2\xi-\gamma(2K-1)]^2(t+3)},
    \end{equation}
    where $\xi\coloneqq \min_d\frac{2\alpha_d\beta_d}{\alpha_d+\beta_d}$ depends on the convexity, concavity, and smoothness modulii.
\end{theorem}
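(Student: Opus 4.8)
The plan is to treat Theorem~\ref{thm:mainResult} as the closed-form consequence of the one-step recursion established in Theorem~\ref{thm:recursion}. Writing $a^t := \E{\sum_{d=1}^K\norm{\Delta_d^t}^2}$ and taking expectations of that recursion (the step size $\eta^t$ is deterministic, so it passes through the expectation cleanly), the task reduces to analyzing the scalar inequality $a^{t+1}\le(1-q^t)a^t + c^t$, where $c^t=\frac{(\eta^t\sigma)^2}{1-\eta^t\gamma(K-1)}$. The first move is to introduce the shorthand $A:=2\xi-\gamma(2K-1)$, which is strictly positive under the standing hypothesis $\gamma<\frac{2\xi}{3K-1}$ (indeed this gives $A>\gamma K>0$). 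Using the algebraic identity $1-2\eta^t\xi+\eta^t\gamma(2K-1)=1-\eta^t A$, the contraction factor simplifies to $1-q^t=\frac{1-\eta^t A}{1-\eta^t\gamma K}$.

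Next I would substitute the prescribed schedule $\eta^t=\frac{3/2}{A(t+2)+\frac32\gamma K}$. Writing $B_t:=A(t+2)+\frac32\gamma K$ for the denominator, a direct computation gives $1-\eta^t\gamma K=\frac{A(t+2)}{B_t}$ and $1-\eta^t A=\frac{A(t+\frac12)+\frac32\gamma K}{B_t}$, so that
\[
  1-q^t=\frac{A(t+\tfrac12)+\tfrac32\gamma K}{A(t+2)}.
\]
For the noise coefficient I would use $1-\eta^t\gamma(K-1)\ge 1-\eta^t\gamma K=\frac{A(t+2)}{B_t}$ to bound $c^t$ above by $\frac{9\sigma^2}{4A^2(t+2)^2}$, a quantity of order $(t+2)^{-2}$. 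With these two explicit forms in hand, the claim is proved by induction on $t$ with the hypothesis $a^t\le\left(\frac{2}{t+2}\right)^{3/2}a^0+\frac{9\sigma^2}{A^2(t+2)}$; the base case $t=0$ is immediate since $(2/2)^{3/2}=1$, and the target bound \eqref{eqn:officialErrBound} is exactly this hypothesis re-indexed at $t+1$.

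The inductive step splits into a homogeneous part and a noise part. For the homogeneous part one must verify $(1-q^t)\left(\frac{2}{t+2}\right)^{3/2}\le\left(\frac{2}{t+3}\right)^{3/2}$, i.e. $\frac{A(t+\frac12)+\frac32\gamma K}{A(t+2)}\le\left(\frac{t+2}{t+3}\right)^{3/2}$. The exponent $\tfrac32$ is exactly what makes the leading factor $\frac{t+1/2}{t+2}=1-\frac{3/2}{t+2}$ telescope, which is transparent from the exact product $\prod_{s=0}^t\frac{s+1/2}{s+2}=\frac{\Gamma(t+3/2)}{\sqrt\pi\,\Gamma(t+3)}=\Theta\!\big((t+3)^{-3/2}\big)$. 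The convexity of $u\mapsto(1-u)^{3/2}$ (whence $(1-u)^{3/2}\ge 1-\tfrac32 u$ with $u=\frac{1}{t+3}$) delivers the bare inequality. For the noise part, substituting the hypothesis together with the bounds on $1-q^t$ and $c^t$ reduces the required inequality, to leading order, to the elementary polynomial comparison $(t+\tfrac34)(t+3)\le(t+2)^2$, which holds for all $t\ge0$.

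The main obstacle is the bookkeeping around the $\frac32\gamma K$ correction terms that the chosen schedule leaves in the numerator of $1-q^t$ and inside $B_t$: unlike the idealized $\gamma K=0$ case, these corrections enter the homogeneous inequality at order $(t+2)^{-1}$, whereas the tangent-line slack is only of order $(t+3)^{-2}$. Closing the induction therefore requires exploiting the constant-factor gap between $\Gamma(t+3/2)/(\sqrt\pi\,\Gamma(t+3))$ and $(2/(t+3))^{3/2}$ (a factor of roughly $2\sqrt2\,\sqrt\pi$) to absorb the corrections uniformly in $t$ in both the homogeneous and noise parts. Controlling this fractional power while carrying the corrections is precisely the delicate step, and it mirrors the argument of Theorem~3.2 in~\cite{sao:beyondBackprop}, from which the statement follows once $q^t$ and $c^t$ above are matched to that theorem's hypotheses.
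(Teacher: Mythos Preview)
The paper itself does not prove Theorem~\ref{thm:mainResult}: it states that the result ``follows directly from Theorem~3.2 in~\cite{sao:beyondBackprop}'' and gives no argument. Your proposal therefore attempts strictly more than the paper, and the preliminary reductions --- the identity $1-q^t=\frac{A(t+\frac12)+\frac32\gamma K}{A(t+2)}$ with $A=2\xi-\gamma(2K-1)$, and the bound $c^t\le\frac{9\sigma^2}{4A^2(t+2)^2}$ --- are computed correctly from the recursion of Theorem~\ref{thm:recursion}.

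The induction itself, however, has a genuine gap that you only partly diagnose. Because $a^0$ is arbitrary, the two-term inductive hypothesis forces the homogeneous inequality $(1-q^t)\bigl(\tfrac{2}{t+2}\bigr)^{3/2}\le\bigl(\tfrac{2}{t+3}\bigr)^{3/2}$ to hold on its own, and this is \emph{false} for large $t$ whenever $\gamma K>0$: writing $c=\tfrac{3\gamma K}{2A}$, one has $1-q^t=1-\frac{3/2-c}{t+2}$, whereas $\bigl(\tfrac{t+2}{t+3}\bigr)^{3/2}=1-\frac{3/2}{t+3}+O(t^{-2})$, so the required inequality becomes $(\tfrac32-c)(t+3)\ge\tfrac32(t+2)+O(1)$, i.e.\ $ct\le O(1)$, which eventually fails. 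The ``constant-factor gap'' you invoke between the exact product $\prod_{s\le t}(1-q^s)$ and $(2/(t+3))^{3/2}$ is a statement about the \emph{full} product, not about each factor, so it cannot be used to close a step-by-step induction carrying the hypothesis you wrote; one would instead have to unroll the recursion and bound the product and the convolution sum directly. In the end you defer to~\cite{sao:beyondBackprop} for exactly this step, which is the same move the paper makes --- so your proposal is consistent with the paper's treatment, but the self-contained induction you sketch does not close as written.
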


First we mention a classical result~\cite{nesterov2013introductory}, necessary for the following proofs.
\begin{lemma}[Bounded Gradient Near Solution]
Combining all Assumptions from Section~\ref{sec:assumps}, we have for each $d$ that
\begin{equation}
    \norm{G_{x_d}^{*,\pm}-x_d^*}\leq\left(1-\frac{2\eta\alpha_d\beta_d}{\alpha_d+\beta_d}\right)\norm{x_d-x_d^*},
\end{equation}
where
\begin{equation}
    G_{x_d}^{t,\pm} = x_d \pm \eta^t\nabla_d\Lu{x_d}^t
\end{equation}
is the gradient (ascent or descent) operator.
\end{lemma}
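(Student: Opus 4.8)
The plan is to recognize $G_{x_d}^{*,\pm}$ as one step of the standard gradient operator applied to the block restriction $\Lu{x_d}^*$, and to invoke the classical contraction estimate for strongly convex, smooth functions from~\cite{nesterov2013introductory}. First I would unify the ascent and descent cases. For $d\leq K_1$ the update uses the $-$ sign and $\Lu{x_d}^*$ is $\beta_d$-convex, so $G_{x_d}^{*,-}=x_d-\eta\nabla_d\Lu{x_d}^*$ is a gradient descent step whose fixed point is the minimizer $x_d^*$. For $d>K_1$ the update uses the $+$ sign and $\Lu{x_d}^*$ is $\beta_d$-concave; setting $g:=-\Lu{x_d}^*$ makes $g$ both $\beta_d$-strongly convex and $\alpha_d$-smooth, and $G_{x_d}^{*,+}=x_d+\eta\nabla_d\Lu{x_d}^*=x_d-\eta\nabla_d g$. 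Thus in both cases $G_{x_d}^{*,\pm}$ is a descent step on a $\beta_d$-strongly convex, $\alpha_d$-smooth function $f$ whose minimizer is $x_d^*$; in particular $\nabla_d f(x_d^*)=0$.

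Next I would expand the squared distance and apply co-coercivity. Writing $f$ for the reduced convex function and using $\nabla_d f(x_d^*)=0$,
\[
    \norm{G_{x_d}^{*,\pm}-x_d^*}^2 = \norm{x_d-x_d^*}^2 - 2\eta\langle \nabla_d f(x_d), x_d-x_d^*\rangle + \eta^2\norm{\nabla_d f(x_d)}^2 .
\]
The key analytic input is the co-coercivity inequality for functions that are simultaneously $\beta_d$-strongly convex and $\alpha_d$-smooth,
\[
    \langle \nabla_d f(x_d), x_d-x_d^*\rangle \geq \frac{\alpha_d\beta_d}{\alpha_d+\beta_d}\norm{x_d-x_d^*}^2 + \frac{1}{\alpha_d+\beta_d}\norm{\nabla_d f(x_d)}^2 .
\]
Substituting this collects the $\norm{x_d-x_d^*}^2$ terms into the advertised factor $1-\tfrac{2\eta\alpha_d\beta_d}{\alpha_d+\beta_d}$ and leaves the residual $\bigl(\eta^2-\tfrac{2\eta}{\alpha_d+\beta_d}\bigr)\norm{\nabla_d f(x_d)}^2$, which is nonpositive exactly when $\eta\leq\tfrac{2}{\alpha_d+\beta_d}$. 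I would note that the decaying schedule $\eta^t$ of Theorem~\ref{thm:mainResult} keeps $\eta$ within this range near the solution, so the residual can be discarded, yielding the contraction at the level of squared distances; this is precisely the form in which the estimate feeds the recursion over $\sum_d\norm{\Delta_d^t}^2$ in Theorem~\ref{thm:recursion}.

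The main obstacle is a matter of care rather than difficulty, namely the \emph{locality} of the hypotheses: Assumptions 1 and 2 guarantee strong convexity/concavity and smoothness only within the radius-$r_d$ ball around $x_d^*$, so the co-coercivity bound may be applied only for $x_d\in B_2(r_d,x_d^*)$. I would therefore restrict the statement to that ball, which is the same domain on which the gradient-stability and bounded-gradient assumptions are posed, and remark that this is consistent with the projection $\Pi_d$ onto $B_2(\tfrac{r_d}{2},x_d^*)$ that keeps every iterate strictly interior. The remaining subtleties are bookkeeping: checking that the $\pm$ sign is handled uniformly (disposed of by the reduction $g=-\Lu{x_d}^*$ above) and that the block-restriction notation $\Lu{x_d}^*$ is treated as a genuine function of $x_d$ alone, with all other coordinates frozen at their optimal values.
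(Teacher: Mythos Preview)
The paper does not prove this lemma at all: it simply labels it a ``classical result'' and cites \cite{nesterov2013introductory}. Your proposal reproduces exactly that classical argument---reduce the ascent case to descent via $g=-\Lu{x_d}^*$, expand the squared distance, and apply the co-coercivity inequality for $\beta_d$-strongly-convex, $\alpha_d$-smooth functions---so your approach is precisely what the paper is invoking. One small point worth noting: your argument (and Nesterov's) delivers the contraction at the level of \emph{squared} distances, whereas the lemma as written drops the squares; this appears to be a minor imprecision in the paper's statement, and as you observe, the squared-norm form is in any case what feeds the recursion in Theorem~\ref{thm:recursion} and the bound $(1-\eta\xi)\norm{\Delta_d^t}$ used in Corollary~\ref{corr:annaThmA1}.
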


The following is an extension of Theorem A.1 of~\cite{sao:beyondBackprop}. It holds for each $d=1,...,K$ with $r_d>0$ and $x_d\in B_2(r_d,x_d^*).$
\begin{corollary}[Extension of Theorem A.1~\cite{sao:beyondBackprop}]\label{corr:annaThmA1}
For some radius $r_d>0$ and triplet $(\gamma_d, \beta_d, \alpha_d)$ such that $0\leq \gamma_d \leq \beta_d \leq \alpha_d$, suppose Assumptions A.1-A.5 hold ($\beta_d$-convexity for $d\leq K_1$, $\beta_d$-concavity for $d>K_1$, $\alpha_d$-smoothness, and $GS(\gamma_d)$). Then, for a stepsize $0<\eta\leq\min_d\frac{2}{\alpha_d+\beta_d}$ over $d\in\{1,...K\}$, the population gradient operator $G_d^{\pm}$ is contractive over the ball $B_2(\frac{r_d}{2},x_d^*),$ for the function $\Lu{}$. That is,
\begin{align}
    \norm{G_d^{\pm}-x_d^*}\leq (1-\eta\xi)\norm{x_d-x_d^*}+\eta\gamma\sum_{i\neq d} \norm{x_i-x_i^*},
\end{align}
where $\xi\coloneqq \min_d\frac{2\alpha_d\beta_d}{\alpha_d+\beta_d}$, and $\gamma\coloneqq\max_d \gamma_d$ over $d\in\{1,...,K\}$.
\end{corollary}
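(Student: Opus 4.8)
The plan is to derive the corollary by combining the preceding classical Lemma (Bounded Gradient Near Solution) with the Gradient Stability assumption $GS(\gamma_d)$. The Lemma already controls the gradient operator \emph{when every coordinate other than $d$ sits at its optimum}; the only extra work is to account for the fact that, along the algorithm's trajectory, those other coordinates take arbitrary (though near-optimal) values. That gap is exactly what $GS(\gamma_d)$ is designed to measure, so the proof is essentially a single triangle-inequality split followed by two off-the-shelf bounds.

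First I would introduce a comparison operator. Writing the population gradient operator at the current iterate as $G_d^{\pm}=x_d\pm\eta\nabla_d\Lu{x_d}$, I would compare it against $G_{x_d}^{*,\pm}=x_d\pm\eta\nabla_d\Lu{x_d}^*$, which uses the gradient evaluated at $(x_1^*,\dots,x_{d-1}^*,x_d,x_{d+1}^*,\dots,x_K^*)$. Since these two operators differ only through the gradient evaluation point, $G_d^{\pm}-G_{x_d}^{*,\pm}=\pm\eta(\nabla_d\Lu{x_d}-\nabla_d\Lu{x_d}^*)$, and the triangle inequality gives
\[
    \norm{G_d^{\pm}-x_d^*}\leq\norm{G_{x_d}^{*,\pm}-x_d^*}+\eta\norm{\nabla_d\Lu{x_d}-\nabla_d\Lu{x_d}^*}.
\]

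Second, I would bound the two terms separately. For the first term, with all off-diagonal coordinates fixed at their optima, $\Lu{x_d}^*$ is a single-variable function that is $\beta_d$-strongly convex (if $d\leq K_1$) or $\beta_d$-strongly concave (if $d>K_1$) and $\alpha_d$-smooth, with $x_d^*$ as its unique stationary point; the Lemma then delivers $\norm{G_{x_d}^{*,\pm}-x_d^*}\leq(1-\eta\xi_d)\norm{x_d-x_d^*}$ with $\xi_d=\tfrac{2\alpha_d\beta_d}{\alpha_d+\beta_d}$. The second term is precisely what $GS(\gamma_d)$ bounds, namely $\norm{\nabla_d\Lu{x_d}-\nabla_d\Lu{x_d}^*}\leq\gamma_d\sum_{i\neq d}\norm{x_i-x_i^*}$. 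Passing to the uniform modulii $\xi=\min_d\xi_d$ and $\gamma=\max_d\gamma_d$ (so that $1-\eta\xi_d\leq1-\eta\xi$ and $\gamma_d\leq\gamma$) and substituting yields exactly the asserted contraction inequality.

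Two points deserve care, and I expect the second to be the real substance. The first is the stepsize: I must check that $0<\eta\leq\min_d\tfrac{2}{\alpha_d+\beta_d}$ keeps the factor $1-\eta\xi_d$ nonnegative, so that the estimate is a genuine contraction rather than a vacuous bound; this holds because $\tfrac{2}{\alpha_d+\beta_d}\cdot\tfrac{2\alpha_d\beta_d}{\alpha_d+\beta_d}=\tfrac{4\alpha_d\beta_d}{(\alpha_d+\beta_d)^2}\leq1$ by the AM--GM inequality. The second, which is where the statement genuinely extends Theorem A.1 of~\cite{sao:beyondBackprop}, is the ascent branch $d>K_1$: here I would observe that an ascent step on a $\beta_d$-concave, $\alpha_d$-smooth $\Lu{x_d}^*$ coincides with a descent step on the $\beta_d$-convex, $\alpha_d$-smooth function $-\Lu{x_d}^*$, so the classical single-variable contraction is inherited verbatim through the $\pm$ already carried by the Lemma. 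Finally, I would remark that restricting attention to $B_2(\tfrac{r_d}{2},x_d^*)$ rather than the full radius-$r_d$ ball guarantees that the evaluation points and the post-step iterate all remain inside the radius-$r_d$ neighborhood on which strong convexity/concavity, smoothness, and $GS(\gamma_d)$ are assumed to hold.
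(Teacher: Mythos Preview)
Your proposal is correct and follows essentially the same route as the paper: add and subtract $\eta\nabla_d\Lu{x_d}^*$, apply the triangle inequality to split $\norm{G_d^{\pm}-x_d^*}$ into $\norm{G_{x_d}^{*,\pm}-x_d^*}+\eta\norm{\nabla_d\Lu{x_d}-\nabla_d\Lu{x_d}^*}$, then invoke the preceding Lemma on the first term and $GS(\gamma_d)$ on the second, finally passing to the uniform constants $\xi$ and $\gamma$. Your additional remarks on the stepsize constraint, the ascent/descent symmetry, and the half-radius ball are sound elaborations that the paper's proof leaves implicit.
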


%%%%%%%%%%%%%%%%%%%%%%%%%%%%%%%%%%%%%%%%%%%%%%%
\begin{proof}[Corollary \ref{corr:annaThmA1}]
    \begin{align}
    \norm{G_d^{\pm}-x_d^*}
    &= \norm{x_d\pm\eta\nabla_d\Lu{x_d}-x_d^*}\\
    &= \norm{x_d\pm\eta\nabla_d\Lu{x_d}-x_d^* +\eta\nabla_d\Lu{x_d}^*-\eta\nabla_d\Lu{x_d}^*}\\
\label{eqn:triInEq0}
    &\leq \norm{x_d\pm\eta\nabla_d\Lu{x_d}^*-x_d^*}
     +\eta\norm{\pm\nabla_d\Lu{x_d}\mp\nabla_d\Lu{x_d}^*}\\
    &= \norm{G_d^{*,\pm} - x_d^*} + \eta\norm{\nabla_d\Lu{x_d}-\nabla_{d}\Lu{x_d}^*},
\end{align}
where for (\ref{eqn:triInEq0}) we used the triangle inequality, and use the $\pm/\mp$ symbols to indicate that in the $d\leq K_1$ case of gradient descent ($G_d^-$), we bring the $-\eta\nabla_d\Lu{x_d}^*$ term to the left norm; in the $d>K_1$ case we bring the positive version to the left norm. Using Lemma A.1 for the left term and Assumption~\ref{asspn:GS}, $GS(\gamma_d)$, for the right term, we have:
\begin{align}
    \norm{G_d^{\pm}-x_d^*}\leq (1-\eta\xi)\norm{x_d-x_d^*}+\eta\gamma\sum_{i\neq d} \norm{x_i-x_i^*},
\end{align}
where $\xi\coloneqq \min_d\frac{2\alpha_d\beta_d}{\alpha_d+\beta_d}$, and $\gamma\coloneqq\max_d \gamma_d$ over $d\in\{1,...,K\}$. 
%\begin{flushright}$\square$\end{flushright}
\end{proof}
%%%%%%%%%%%%%%%%%%%%%%%%%%%%%%%%%%%%%%%%%%%%%
%%%%%%%%%%%%%%%%%%%%%%%%%%%%%%%%%%%%%%%%%%%%%
%%%%%%%%%%%%%%%%%%%%%%%%%%%%%%%%%%%%%%%%%%%%%
%%%%%%%%%%%%%%%%%%%%%%%%%%%%%%%%%%%%%%%%%%%%%
%%%%%%%%%%%%%%%%%%%%%%%%%%%%%%%%%%%%%%%%%%%%%
%%%%%%%%%%%%%%%%%%%%%%%%%%%%%%%%%%%%%%%%%%%%%
%%%%%%%%%%%%%%%%%%%%%%%%%%%%%%%%%%%%%%%%%%%%%
\begin{proof}[Theorem~\ref{thm:recursion}]
First, we use the fact that the error before the projection to the ball around the solution $B_2(r_d,\xv^*_d)$ is greater than or equal to the error after projection. Let $x_d=\Pi_d(\tilde{x}_d)$:
\begin{align*}
\norm{\Delta_d^{t+1}}^2-\norm{\Delta_d^t}^2
    &\leq\norm{\tilde{\Delta}_d^{t+1}}^2-\norm{\Delta_d^t}^2\\
    &=\norm{\tilde{x}_d^{t+1}-x_d^*}^2-\norm{x_d^t-x_d^*}^2\\
    &= \langle \tilde{x}_d^{t+1}-x_d^t, \tilde{x}_d^{t+1}+x_d^t-2x_d^* \rangle\\
    &=\bigg \langle \left[x_d^t\pm\eta^t\nabla_d^1\Lu{x_d}^t\right] -x_d^t, 
              \left[x_d^t\pm\eta^t\nabla_d^1\Lu{x_d}^t\right]+x_d^t-2x_d^* \bigg\rangle,
\end{align*}
where we use $\pm$ to denote that $\tilde{x}_d^{t+1}$ is determined by a gradient ascent step for $d>K_1$ and by a gradient descent step for $d\leq K_1$; and we use $\nabla^1$ to denote the gradient computed from a single sample. This can be simplified to:
\begin{align*}
    &=\big\langle \pm\eta^t\nabla^1_d \Lu{x_d}^t, \pm\eta^t\nabla^1_d \Lu{x_d}^t + 2(x_d^t-x_d^*) \big \rangle \\
    &= (\eta^t)^2\norm{\nabla^1_d \Lu{x_d}^t}^2 \pm 2\eta^t \langle \nabla^1_d \Lu{x_d}^t, \Delta_d^t \rangle.
\end{align*}

Now we take the expected value of both sides after a rearrangement:
\begin{align}
\label{eqn:lastTermRef}
    \E{\norm{\Delta_d^{t+1}}^2} \leq \E{\norm{\Delta_d^t}^2} + (\eta^t)^2\E{\norm{\hat{W}_d^t}^2} \pm 2\eta^t\E{\langle \hat{W}_d^t, \Delta_d^t \rangle},
\end{align}
using the abbreviation $\hat{W}_d^t=\nabla^1_d \Lu{x_d}^t.$

Now, note that for $d\leq K_1$, the function is convex and thus $(x^*_d - x^t_d) = -\Delta_d^t$ is a \textit{descent direction}. Conversely, for $d>K_1$, the function is concave and $(x^*_d - x^t_d)=-\Delta_d^t$ is an \textit{ascent direction}. This means that
\begin{align}
\label{eqn:descent}
    \langle W_d^*, &\Delta_d^t\rangle \geq 0, \ \text{if } d\leq K_1\ \text{(gradient descent case)}\\
\label{eqn:ascent}
    \langle W_d^*, &\Delta_d^t\rangle \leq 0, \ \text{if } d > K_1\ \text{(gradient ascent case)},
\end{align}
where $W^*_d=\nabla_d \Lu{x_d^*}^*$. We expand the last term of (\ref{eqn:lastTermRef}) for each case separately:\\
%%%%%%%%%%%%%%%%%%%%%%%%%%%%
\textbf{For} $\mathbf{d\leq K_1}$:
\begin{align*}
-\eta^t\E{\langle \hat{W}_d^t, \Delta_d^t \rangle} 
    &\leq -\eta^t\E{\langle \hat{W}_d^t, \Delta_d^t \rangle} + \eta^t\E{\langle W_d^*, \Delta_d^t\rangle}\\
    &=-\eta^t\E{\langle \hat{W}_d^t-W_d^*, \Delta_d^t \rangle},
\end{align*}
using (\ref{eqn:descent}). Note that for $d\leq K_1$, we have $G_d^{t,-}=x_d^t-\eta^t\hat{W}_d^t$, implying that $\eta^t\hat{W}_d^t = -(G_d^{t,-}-x_d^t)$. We use this to continue:
\begin{align}
    -\eta^t\E{\langle \hat{W}_d^t, \Delta_d^t \rangle}  
    &\leq -\eta^t\E{\langle \hat{W}_d^t-W_d^*, \Delta_d^t \rangle} \nonumber\\
    &= -\E{\langle -[G_d^{t,-}-x_d^t]+[G_d^{*,-}-x_d^*], \Delta_d^t \rangle} \nonumber\\
    &= \E{\langle [G_d^{t,-}-x_d^t]-[G_d^{*,-}-x_d^*], \Delta_d^t \rangle} \nonumber\\
    &= \E{\langle [G_d^{t,-}-G_d^{*,-}]-[x_d^t-x_d^*], \Delta_d^t \rangle} \nonumber\\
\label{descentCase}
    &= \E{\langle [G_d^{t,-}-G_d^{*,-}], \Delta_d^t \rangle} - \E{\norm{\Delta_d^t}^2}
\end{align}
%%%%%%%%%%%%%%%%%%%%%%%%%%%%
\textbf{For} $\mathbf{d> K_1}$, we similarly have:
\begin{align*}
+\eta^t\E{\langle \hat{W}_d^t, \Delta_d^t \rangle} 
    &\leq +\eta^t\E{\langle \hat{W}_d^t, \Delta_d^t \rangle} - \eta^t\E{\langle W_d^*, \Delta_d^t\rangle}\\
    &=+\eta^t\E{\langle \hat{W}_d^t-W_d^*, \Delta_d^t \rangle},
\end{align*}
using (\ref{eqn:ascent}). For $d>K_1$ we have $G_d^{t,+}=x_d^t+\eta^t\hat{W}_d^t$, implying that $\eta^t\hat{W}_d^t = G_d^{t,+}-x_d^t$. This leads us to
\begin{align}
    \eta^t\E{\langle \hat{W}_d^t, \Delta_d^t \rangle} 
    &\leq \eta^t\E{\langle \hat{W}_d^t-W_d^*, \Delta_d^t \rangle} \nonumber\\
    &= \E{\langle [G_d^{t,+}-x_d^t]-[G_d^{*,+}-x_d^*], \Delta_d^t \rangle} \nonumber\\
    &= \E{\langle [G_d^{t,+}-G_d^{*,+}]-[x_d^t-x_d^*], \Delta_d^t \rangle} \nonumber\\
\label{ascentCase}
    &= \E{\langle [G_d^{t,+}-G_d^{*,+}], \Delta_d^t \rangle} - \E{\norm{\Delta_d^t}^2}
\end{align}
Combining (\ref{descentCase}) and (\ref{ascentCase}) with where we left off in (\ref{eqn:lastTermRef}) yields:
\begin{align}
    &\E{\norm{\Delta_d^{t+1}}^2} \leq \E{\norm{\Delta_d^t}^2} + (\eta^t)^2\E{\norm{\hat{W}_d^t}^2} \pm 2\eta^t\E{\langle \hat{W}_d^t, \Delta_d^t \rangle}\\
    &\leq \E{\norm{\Delta_d^t}^2} + (\eta^t)^2\E{\norm{\hat{W}_d^t}^2} +2\E{\langle [G_d^{t,\pm}-G_d^{*,+}], \Delta_d^t \rangle}-2\E{\norm{\Delta_d^t}^2}\\
\label{eqn:leftOff2}
    &= -\E{\norm{\Delta_d^t}^2} + (\eta^t)^2\E{\norm{\hat{W}_d^t}^2} +2\E{\langle [G_d^{t,\pm}-G_d^{*,+}], \Delta_d^t \rangle}.
\end{align}
Next, we simplify the last term of (\ref{eqn:leftOff2}) using Cauchy-Schwarz and then Corollary A.1:
\begin{align}
2&\E{\langle [G_d^{t,\pm}-G_d^{*,+}], \Delta_d^t \rangle} 
    \leq 2\norm{G_d^{t,\pm}-x_d^*}\norm{\Delta_d^t}\\
\label{eqn:useLem1}
    &\leq
    2\left[
        (1-\eta^t\xi)\norm{\Delta_d^t}+\eta^t\gamma
        \left(
        \sum_{i< d}\norm{\Delta_i^{t+1}}+\sum_{i> d} \norm{\Delta_i^t}
        \right)
    \right]\norm{\Delta_d^t}\\
    &= 2(1-\eta^t\xi)\norm{\Delta_d^t}^2+
        \eta^t\gamma
        \left(
        \sum_{i< d}2\norm{\Delta_i^{t+1}}\norm{\Delta_d^t}+\sum_{i> d} 2\norm{\Delta_i^t}\norm{\Delta_d^t}
        \right),
\end{align}
Now we apply the fact that $2ab\leq a^2+b^2$ to each component of each sum to get:
\begin{align}
&2\E{\langle [G_d^{t,\pm}-G_d^{*,+}], \Delta_d^t \rangle} \\
    \leq &2(1-\eta^t\xi)\norm{\Delta_d^t}^2+\eta^t\gamma
        \left(
         \sum_{i=1}^{d-1}\left(\norm{\Delta_i^{t+1}}^2+\norm{\Delta_d^t}^2\right)
        +\sum_{i=d+1}^K\left(\norm{\Delta_i^t}^2    +\norm{\Delta_d^t}^2\right)
        \right)\\
    = &2(1-\eta^t\xi)\norm{\Delta_d^t}^2+\eta^t\gamma
        \left( (K-1)\norm{\Delta_d^t}^2
        +\sum_{i=1}^{d-1}\norm{\Delta_i^{t+1}}^2
        +\sum_{i=d+1}^K\norm{\Delta_i^t}^2    
        \right)\\
    = &\left(2-2\eta^t\xi+\eta^t\gamma(K-1)\right)\norm{\Delta_d^t}^2+\eta^t\gamma\left(\sum_{i=1}^{d-1}\norm{\Delta_i^{t+1}}^2
        +\sum_{i=d+1}^K\norm{\Delta_i^t}^2    
        \right).
\end{align}
Combining this result with (\ref{eqn:leftOff2}) gives us:
\begin{equation}
\E{\norm{\Delta_d^{t+1}}^2}
\leq-\E{\norm{\Delta_d^t}^2} + (\eta^t)^2\E{\norm{\hat{W}_d^t}^2} +2\E{\langle [G_d^{t,\pm}-G_d^{*,+}], \Delta_d^t \rangle}
\end{equation}
\begin{equation}
    \begin{aligned}
        &\leq \left(1-2\eta^t\xi+\eta^t\gamma(K-1)\right)\E{\norm{\Delta_d^t}^2} 
                + (\eta^t\sigma_d)^2 \\
        &\hspace{1cm}+\eta^t\gamma\E{
                \left(\sum_{i<d}\norm{\Delta_i^{t+1}}^2
                + \sum_{i>d}\norm{\Delta_i^t}^2    
            \right)},
    \end{aligned}
\end{equation}
where we additionally call on Assumption~\ref{asspn:BG} to statistically bound the gradient with  $$\E{\norm{\hat{W}_d^t}^2}\leq\sigma_d^2.$$

We next expand terms to bound the sum of errors (after moving all $\Delta^{t+1}$ terms to the left-hand-side):
\begin{equation*}
    \begin{aligned}
    \E{\sum_{d=1}^{K}\norm{\Delta_d^{t+1}}^2} - 
    \eta^t\gamma\E{\sum_{d=1}^K\sum_{i<d}\norm{\Delta_i^{t+1}}^2}
    \leq 
    &\left(1-2\eta^t\xi+\eta^t\gamma(K-1)\right)\E{\sum_{d=1}^{K}\norm{\Delta_d^t}^2}\\
    &+\eta^t\gamma\E{\sum_{d=1}^K\sum_{i>d}\norm{\Delta_i^t}^2} + \sum_{d=1}^K (\eta^t\sigma_d)^2.
    \end{aligned}
\end{equation*}
Note that for any subset $S\subseteq\{1,...,K\}$,
\begin{align}
    \sum_{d=1}^K\sum_{i\in S}\norm{\Delta_i^t}^2 &\leq \sum_{d=1}^K\sum_{i=1}^K\norm{\Delta_i^t}^2 = K\sum_{d=1}^K\norm{\Delta_d^t}^2.
\end{align}
And thus, using $S=\{1,...,d-1\}$ on the left-hand-side we get
\begin{equation}
\label{eqn:resultLHS}
    (1-\eta^t\gamma K)\E{\sum_{d=1}^{K}\norm{\Delta_d^{t+1}}^2}
    \leq 
    \E{\sum_{d=1}^{K}\norm{\Delta_d^{t+1}}^2} - 
    \eta^t\gamma\E{\sum_{d=1}^K\sum_{i<d}\norm{\Delta_i^{t+1}}^2},
\end{equation}
and using $S=\{d+1,...,K\}$ on the right-hand-side we get
\begin{equation}
\label{eqn:resultRHS}
\begin{aligned}
    \left(1-2\eta^t\xi+\eta^t\gamma(K-1)\right)&\E{\sum_{d=1}^{K}\norm{\Delta_d^t}^2}
    +\eta^t\gamma\E{\sum_{d=1}^K\sum_{i>d}\norm{\Delta_i^t}^2}\\
    \leq
    \left(1-2\eta^t\xi+\eta^t\gamma(2K-1)\right)&\E{\sum_{d=1}^{K}\norm{\Delta_d^t}^2}
\end{aligned}
\end{equation}
Putting together the results in (\ref{eqn:resultLHS}) and (\ref{eqn:resultRHS}), then dividing by $1-\eta^t\gamma K$ concludes the proof:
\begin{equation}
\begin{aligned}
    \E{\sum_{d=1}^{K}\norm{\Delta_d^{t+1}}^2} &\leq
    \frac{1-2\eta^t\xi+\eta^t\gamma(2K-1)}{1-\eta^t\gamma K}
    \E{\sum_{d=1}^K\norm{\Delta_d^t}^2}+\frac{(\eta^t\sigma)^2}{1-\eta^t\gamma K}\\
    &=
    \left(1-q^t\right)
    \E{\sum_{d=1}^K\norm{\Delta_d^t}^2}+\frac{(\eta^t\sigma)^2}{1-\eta^t\gamma K},
\end{aligned}
\end{equation}
where $q^t=1-\frac{1-2\eta^t\xi+\eta^t\gamma(2K-1)}{1-\eta^t\gamma K}.$
Note that in the last step, we have implicitly assumed that $1-\eta^t\gamma K>1$, implying that $\eta^0<\frac{1}{K\gamma}.$ Finally, for convergence we require 
\begin{align}
    \frac{1-2\eta^t\xi+\eta^t\gamma(2K-1)}{1-\eta^t\gamma K}&<1\\
% \Downarrow\nonumber\\
%     1-2\eta^t\xi+\eta^t\gamma(2K-1) &< 1-\eta^t\gamma K\\
% \Downarrow\nonumber\\
%     2\xi-\gamma(2K-1)&>\gamma K\\
% \Downarrow\nonumber\\
%     2\xi&>\gamma (K + 2K - 1)\\
% \Downarrow\nonumber\\
\Rightarrow
    \frac{2\xi}{3K-1}&>\gamma.
\end{align}
However, we also need to require that this factor is greater than zero (not addressed in previous work), which is equivalent to
\begin{align}
    1-2\eta^t\xi+\eta^t\gamma(2K-1)&>0\\
% \Updownarrow\nonumber\\
%     1-2\eta^t\xi-\eta^t\gamma&>0\\
% \Downarrow\nonumber\\
%     1-\eta^t(2\xi+\gamma)&>0\\
% \Downarrow\nonumber\\
\Rightarrow
    \frac{1}{2\xi+\gamma}&>\eta^t.
\end{align}
So we need $\eta^0<\min\{\frac{1}{K\gamma},\frac{1}{2\xi+\gamma}\}$.
%\begin{flushright}$\square$\end{flushright}
\end{proof}

%%%%%%%%%%%%%%%%%%%%%%%%%%%%%%%%%%%%%%%%%%%%%
%%%%%%%%%%%%%%%%%%%%%%%%%%%%%%%%%%%%%%%%%%%%%
%%%%%%%%%%%%%%%%%%%%%%%%%%%%%%%%%%%%%%%%%%%%%
%%%%%%%%%%%%%%%%%%%%%%%%%%%%%%%%%%%%%%%%%%%%%
%%%%%%%%%%%%%%%%%%%%%%%%%%%%%%%%%%%%%%%%%%%%%

\begin{proof}[Theorem \ref{thm:chgCvxMod}]
The bound in Inequality (\ref{eqn:officialErrBound}) depends on convexity modulus $\beta$ through the factor $\xi$:
\begin{align}
\frac{9\sigma^2}{[2\xi-\gamma(2K-1)]^2(t+3)}
    &= \mathcal{O}\left(\frac{1}{\xi^2}\right)  \\
    &= \mathcal{O}\left(\frac{\alpha^2+\beta^2}{(2\alpha\beta)^2} +
            \frac{1}{2\alpha\beta}\right).
\end{align}
The derivative $\partial_{\beta}(\xi^2)$ is uniformly positive, so we know that the error bound will decrease in response to an increase $\beta$. We ignore the last $\mathcal{O}(1/2\alpha\beta)$ term for simplicity. Then defining $p(\alpha,\beta):=\frac{\alpha^2+\beta^2}{(2\alpha\beta)^2}$ we complete the proof by computing $p(\alpha,\beta)-p(\alpha,\hat{\beta})$.
%\begin{flushright}$\square$\end{flushright}
\end{proof}

\section{Hyper-parameter search}
\label{sec:AB}
For encoder training, we have done an exhaustive hyperparameter search for the following parameters and ranges:
\begin{itemize}
	\item single dictionary case: $\alpha = \{1eN,2.5eN,5eN,7.5eN\}$ for $N={-1,0,1}$, $\mu = \{.1,1,5,10\}$, batch size = $\{1e2,5e2,1e3\}$, learning rate = $\{1e-7, …,1e-2\}$, learning rate decay = $\{.8,.99,1\}$
	\item multiple dictionary case: $\alpha_1$ and $\alpha_2 = \{1eN, 2.5eN, 5eN,7.5eN\}$ for $N = \{-2,-1,0,1\}$, $\mu = \{0.1,1,10\}$, batch size = $\{100,500\}$, and learning rate = $\{1e-9, …,1e-3\}$. This process was repeated for each method and for each fixed number of iterations ($T$). 
\end{itemize}

\section{Dictionary Learning Experiments}
\label{sec:B}
We visualize the learned dictionary atoms for both single (Figure~\ref{fig:s} and~\ref{fig:ss}) and two dictionary (MCA) cases (Figure~\ref{fig:m}, and~\ref{fig:mm}). 

%How do these dictionaries look like?
\subsection{Dictionaries used in single dictionary experiments}\label{supp: dictAtoms}

\begin{figure}[H]%[htp!]
	%\vspace{-0.2in}
	\centering
	\subfigure[]{\includegraphics[width = 0.32\textwidth]{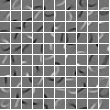}}
	\hfill
	\subfigure[]{\includegraphics[width = 0.32\textwidth]{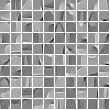}}
	\hfill
	\subfigure[]{\includegraphics[width = 0.32\textwidth]{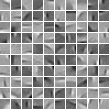}}
	%\vspace{-0.15in}
	\caption{Visualization of dictionary atoms trained on (a) 10$\times$10 MNIST image patches (b) 10$\times$10 Fashion-MNIST image patches (c) 10$\times$10 CIFAR-10 image patches. Each dictionary has $100$ atoms (complete) and each atom is a unit norm vector of length $100$ reshaped to $10\times10$.}
	\label{fig:s}
\end{figure}
\begin{figure}[htp!]
	%\vspace{-0.2in}
	\centering
	\includegraphics[width = 0.4\textwidth]{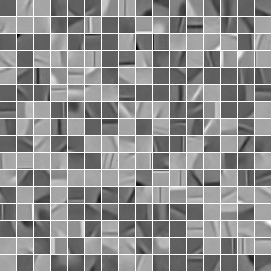}
	%\vspace{-0.05in}
	\caption{Visualization of ASIRRA dictionary trained on 16$\times$16 image patches. The dictionary has $256$ (complete) atoms and each atom is a unit norm vector of length $256$ reshaped to $16\times16$.}
	\label{fig:ss}
\end{figure}

\subsection{Dictionaries used in MCA experiments}
\label{supp: mca}
%\vspace{-0.3in}
In the first set of MCA experiments we performed source separation on $32\times32$ MNIST + ASIRRA images. We used two dictionaries trained independently using whole $32\times32$ MNIST images and $32\times32$ patches of ASIRRA images, after resizing ASIRRA to $224\times224$ (the ASIRRA images come in varying sizes). In the second set of MCA experiments, we performed source separation on spatially added MNIST and CIFAR-10 images (more results of this experiment showed in Section~\ref{sup:AddiMCA_res} of the Supplement). We used same MNIST dictionary as used in MNIST + ASSIRA experiments and trained a CIFAR-10 dictionary on the whole $32\times32$ grayscale CIFAR-10 data set images.These dictionaries have 1024 atoms (complete), all normalized vectors of length 1024 and reshaped to $32\times32$ for visualization. A subset of atoms of the dictionaries used in MCA experiments are visualized in Figure \ref{fig:m} and Figure \ref{fig:mm}. 

% Address:: Why we are learning dictionaries using whole images for MCA experiments (how we think it becomes hard to do separation on image patches).
% Why are we learning dictionaries on full images? 
%To allow our network to do MCA, it is essential that both dictionaries have unique properties
% Dictionaries trained on small patches have general properties mostly gabor like.
% Need for discriminant dictionaries to allow MCA, because it is highly possible that MNIST dictionary trained on small patches reconstruct other components easily. So we decided to use whole inputs and have big dictionary atoms most of them close to data points.      
\begin{figure}[H]
	\centering
	\includegraphics[width = 0.6\textwidth]{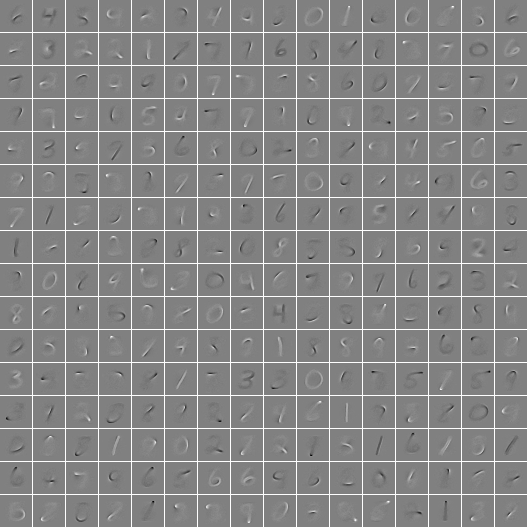}
	\caption{ Visualization of dictionary atoms trained on $32\times32$ MNIST images.}
	\label{fig:m}
\end{figure}

\newpage

\begin{figure}[H]
	\centering
	a) \includegraphics[height = 0.45\textheight]{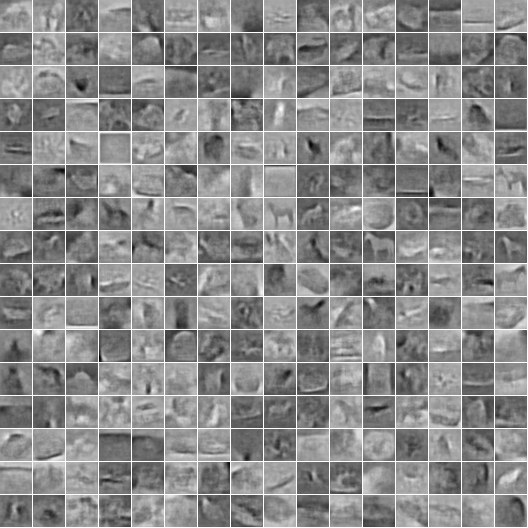}\\
	b) \includegraphics[width = 0.45\textheight]{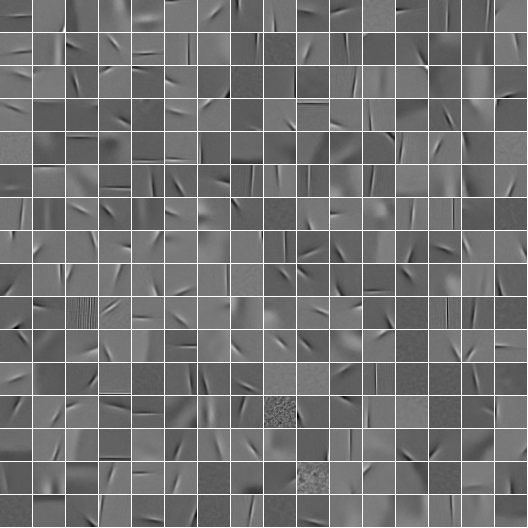}
	\caption{Visualization of dictionary atoms trained on (a)  $32\times32$ CIFAR10 images and (b)  $32\times32$ ASIRRA image patches.}
	\label{fig:mm}
\end{figure}

\FloatBarrier
\section{Additional single dictionary SC experiments}\label{supp: more1Drez}
\label{sec:C}

\subsection{Fashion-MNIST and ASIRRA}\label{supp:fmnistCD}
Here we show classification results from the experiments on Fashion-MNIST and ASIRRA as described in Section~\ref{subsec:sdc}. The classification results are shown in Table~\ref{tab: Fmnist1d} and Figure~\ref{fig: Fmnist1d} for Fashion MNIST, and Table~\ref{tab: catdog1d} and Figure~\ref{fig: catdog1d} for ASIRRA.

 \begin{table}[!h]
 	\centering 
     \begin{tabular}{ |c ||c|c|c|c|}
 		\hline
 		\multicolumn{1}{|c||}{ } & \multicolumn{4}{|c|}{Classification Error (in \%)}\\\cline{2-5} 
 		Iter & FISTA & LISTA & SALSA  & LSALSA\\ [0.5ex]
 		\hline
 		1   & 87.53   & 54.61   & 56.48   & \textbf{11.23}\\
 		\hline
 		5   & 78.46   & 38.13   & 23.61   & \textbf{3.18}	\\
 		\hline
 		7   & 70.25   & 37.16   & 9.20   & \textbf{0.66}	\\
 		\hline
 		10   & 56.06   & 32.90   & 1.59   & \textbf{0.08}	\\
 		\hline
 		15   & 32.00   & 30.45   & \textbf{0.00}   & \textbf{0.00}\\
 		\hline	
 		50   & 0.10   & 14.03   & \textbf{0.00}   & \textbf{0.00}	\\
 		\hline
 		100   & \textbf{0.00}   & 7.95   & \textbf{0.00}   & \textbf{0.00}	\\
 		\hline
 	\end{tabular}
 	%\captionsetup{width=0.9\textwidth}
 	\caption{\label{tab: Fmnist1d} Fashion-MNIST classification results. The best performer is in bold. All methods but LISTA were able to match the optimal codes well enough to get zero percent error by $T=100$.}
 \end{table}

\begin{figure}[!h]% \vspace{-1.5cm}
    \centering
    \includegraphics[width=0.5\textwidth]{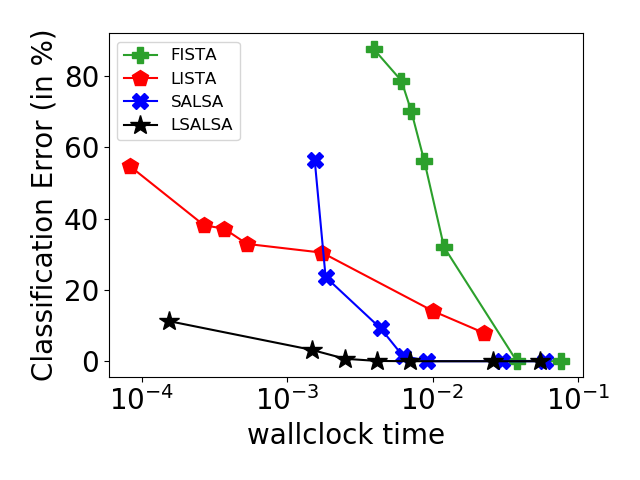}
    \caption{Fashion-MNIST: Classification errors Vs sparse code inference time plotted for different network lengths. All methods but LISTA were able to match the optimal codes well enough to get zero percent error by $T=100$ and LSALSA is from outperforming everyone from beginning.}
    \label{fig: Fmnist1d} 
\end{figure}
	
 \begin{table}[!h] %\vspace{-1cm}
 	\centering
 	\begin{tabular}{ |c ||c|c|c|c|}
 		\hline
 		\multicolumn{1}{|c||}{ } & \multicolumn{4}{|c|}{Classification Error (in \%)}\\\cline{2-5} 
 		Iter & FISTA & LISTA & SALSA  & LSALSA \\
 		\hline
 		1   & 48.90   & 52.40   & 48.80   & \textbf{40.10} \\
 		\hline
 		3   & 49.20   & 52.70   & 46.00   & \textbf{42.80} \\
 		\hline
 		5   & 48.50   & 53.50   & 44.80   & \textbf{35.00}  \\
 		\hline
 		7   & 47.80   & 53.70   & 44.50   & \textbf{35.10}	\\
 		\hline
 		10   & 46.50   & 38.50   & 42.70   & \textbf{34.40}	\\
 		\hline
 		15   & 43.90   & 38.10   & 40.70   & \textbf{33.10}	\\
 		\hline
 		20   & 42.10   & 37.60   & 38.70   & \textbf{31.60}	\\
 		\hline
 		50   & 37.80   & 38.20   & 37.20   & \textbf{31.90}	\\
 		\hline
 		100   & 36.40   & 37.10   & 36.80   & \textbf{30.80}	\\
 		\hline
 	\end{tabular}
 		%\captionsetup{width=0.9\textwidth}
 	\caption{\label{tab: catdog1d} ASIRRA classification results (2 classes). The best performer is in bold.}
 \end{table}
\begin{figure}[!h]
    \centering
    \includegraphics[width=0.6\textwidth]{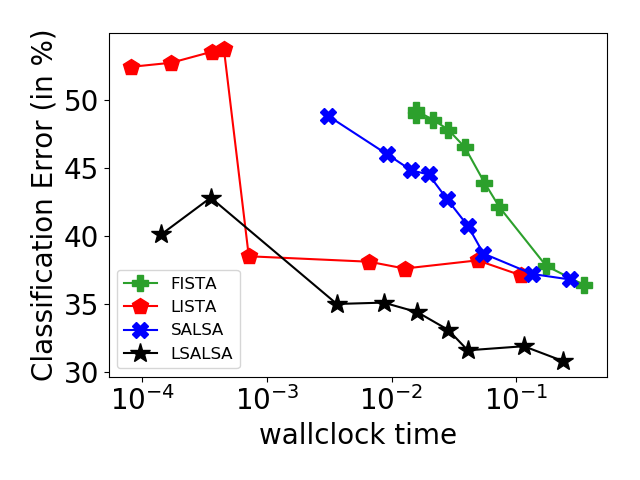}
    \caption{ASIRRA: Classification errors Vs sparse code inference wallclock time plotted for different network lengths.}
    \label{fig: catdog1d} 
\end{figure}

\FloatBarrier
% \vspace{-2cm}
\subsection{MNIST}\label{supp:mnist}
% \vspace{-1cm}
The $32\times32$ MNIST images were first scaled to pixel values in range $[0,1]$ and then divided into $10\times10$ non-overlapping patches (ignoring extra pixels on edges), resulting in $9$ patches per image. Only patches with standard deviation $\geq$ 0.1 were used in training and the remaining ones were discarded (as they are practically all-black). Optimal codes were computed for each vectorized patch by minimizing the objective from Equation~\ref{lasso} by running FISTA for $200$ iterations giving approximately $95\%$ sparse optimal codes, using L1 parameter $\alpha^*=0.15.$
\vspace{-0.2in}
\begin{figure}[H]
	%\vspace{-0.15in}
	\centering 
	\includegraphics[width = 0.49\linewidth]{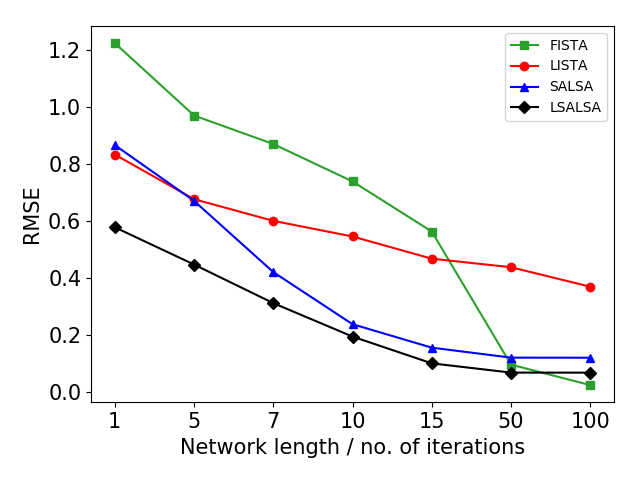}
	\hfill
	\includegraphics[width = 0.49\linewidth]{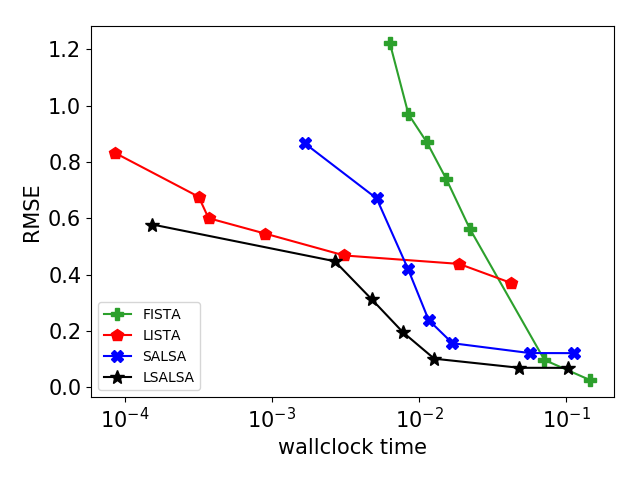}
	\hfill
	%\vspace{-0.1in}
	\caption{\label{supp:MNIST_1D_res}(\textbf{Left}:) MNIST code prediction errors for varying numbers of iterations. FISTA takes $15$ iterations to give error produced by $1$ iteration of LSALSA. FISTA estimates optimal codes better than LISTA for higher $T$. (\textbf{Right}:) MNIST code prediction error as a function of the inference wallclock time.}
	\label{fig:m1}
\end{figure}
\vspace{-0.3in}
 \small{
 	\begin{table}[htp!]
 		\centering
 		%\vspace{-0.1in}
 			\begin{tabular}{ |c ||c|c|c|c|}
 				\hline
 				\multicolumn{1}{|c||}{ } & \multicolumn{4}{|c|}{Classification Error (in \% )}\\\cline{2-5} 
 				Iter & FISTA & LISTA & SALSA & LSALSA\\
 				\hline
 				1   & 40.87   & 4.58   & 20.73   & \textbf{1.91}		\\
 				\hline
 				5   & 6.10   & 4.65   & 4.85   & \textbf{1.78}		\\
 				\hline
 				7   & 3.43   & 2.04   & 0.81   & \textbf{0.46}		\\
 				\hline
 				10   & 2.03   & 1.38   & \textbf{0.10}   & 0.10		\\
 				\hline
 				15   & 1.07   & 0.88   & 0.02   & \textbf{0.01}		\\
 				\hline
 				50   & 0.02   & 0.61   & \textbf{0.00}   & \textbf{0.00}		\\
 				\hline
 				100   & \textbf{0.0000}   & 0.42   & \textbf{0.00}   & \textbf{0.00}		\\
 				\hline	
 		\end{tabular}
 		%\vspace{.05in}
 		\caption{\label{tab: mnist1d} MNIST classification results (10 classes). The best performer is in bold.}
 		\label{tab:m2}
 \end{table}}
\vspace{-0.4in}
\begin{figure}[H]
    \centering
    \includegraphics[width=0.6\textwidth]{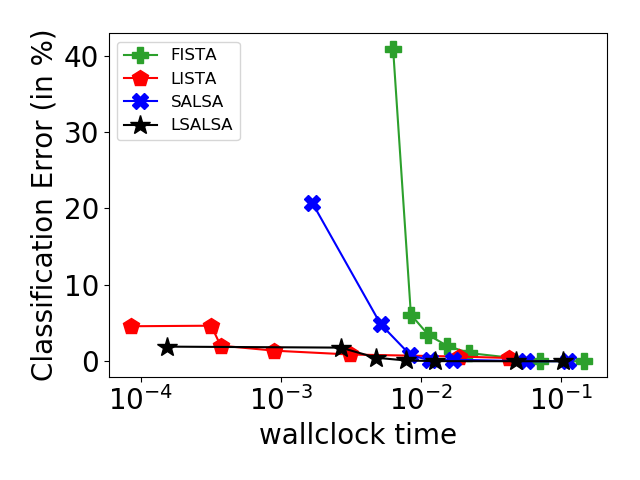}
    \caption{MNIST: Classification errors Vs sparse code inference time plotted for different network lengths.}
    \label{fig: m2} 
\end{figure}

\FloatBarrier
\vspace{-0.5in}
\subsection{CIFAR-10}\label{supp:cifar10}

In CIFAR-10 experiments, $32\times32$ natural images were first converted to grayscale, scaled to values in range $[0, 1]$, and broken  down to $10\times10$ non-overlapping patches. Each image resulted in $9$ patches. Then optimal codes were computed on these patches in similar fashion as described above for MNIST data set, using L1-parameter $\alpha^*=3.0.$
\vspace{-0.2in}
\begin{figure}[H]
	\centering
	\subfigure{\includegraphics[width = 0.49\textwidth]{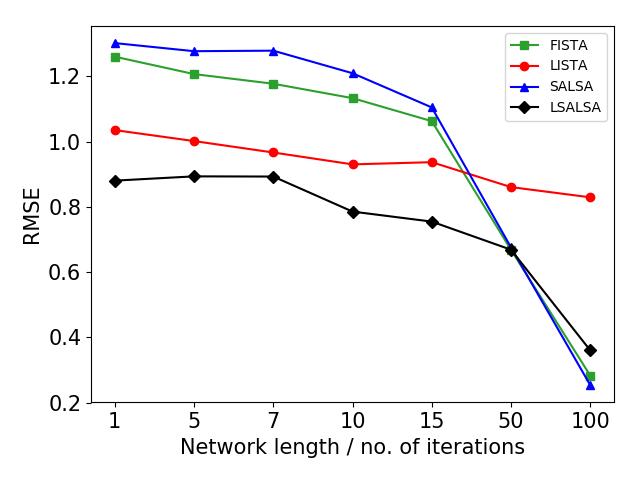}}
	\hfill
	\subfigure{\includegraphics[width = 0.49\textwidth]{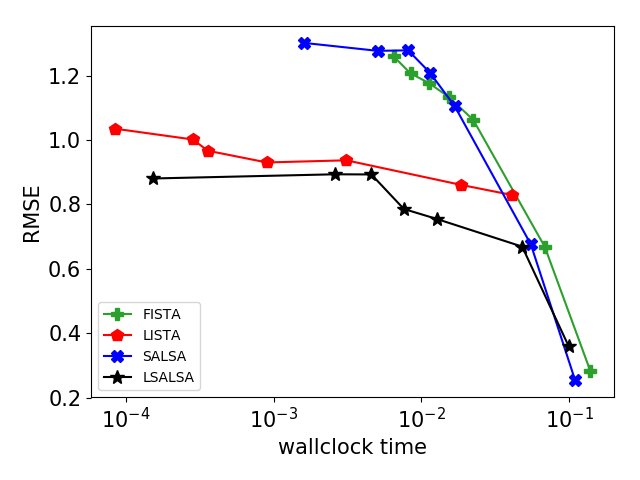}}
	\hfill
	%\vspace{-0.15in}
	\caption{\label{supp:CIFAR_1D_res}(\textbf{Left:}) CIFAR-10 code prediction errors for varying number of iterations. All methods except LISTA are converging fast after $T = 50$ on this data set. LISTA, FISTA, and SALSA took more than $15$ iterations to produce error obtained by LSALSA in only $1$ iteration. (\textbf{Right:}) CIFAR-10 code prediction error as a function of the inference wallclock time.}
	\label{fig:c1}
\end{figure}
\vspace{-0.3in}
\small{
 \begin{table}[ht]
 	\centering
 	{%
 		\begin{tabular}{ |c ||c|c|c|c|}
 			\hline
 			\multicolumn{1}{|c||}{ } & \multicolumn{4}{|c|}{Classification Error (in \%)}\\\cline{2-5} 
 			Iter & FISTA & LISTA & SALSA  & LSALSA 	\\
 			\hline
 			1   & 86.86   & 79.13   & 89.07   & \textbf{64.69} \\
 			\hline	
 			5   & 82.33   & 76.26   & 87.27   & \textbf{66.31} \\
 			\hline	
 			7   & 79.47   & 74.10   & 82.71   & \textbf{64.64} \\	
 			\hline
 			10   & 75.52   & 71.65   & 82.83   & \textbf{54.98} \\	
 			\hline
 			15   & 70.19   & 72.45   & 75.41   & \textbf{54.99} \\	
 			\hline
 			50   & \textbf{43.14}   & 66.34   & 43.61   & 49.41 \\	
 			\hline
 			100   & 67.86   & 60.22   & \textbf{10.48}   & 18.44 \\	
 			\hline
 	\end{tabular}}
	
 	\caption{\label{tab: cifar1d} CIFAR-10 classification results (10 classes). The best performer is in bold.}
 	\label{tab:c2}
 \end{table}}
\vspace{-0.5in}
\begin{figure}[H]
    \centering
    \includegraphics[width=0.6\textwidth]{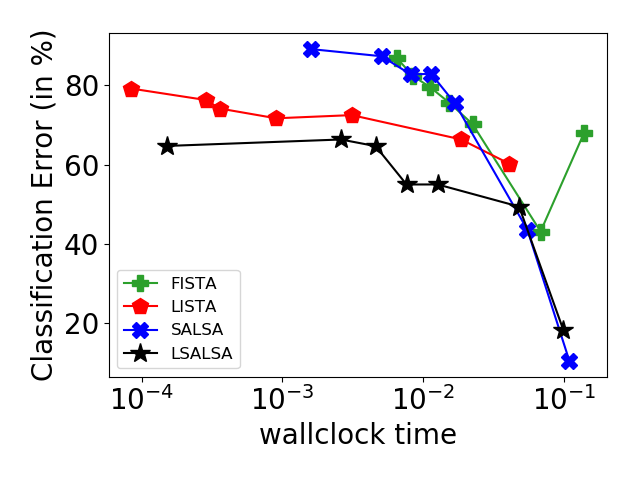}
    \caption{CIFAR-10: Classification errors Vs sparse code inference time plotted for different network lengths.}
    \label{fig: c2} 
\end{figure}

\FloatBarrier
\section{Additional MCA experiments}\label{sup:AddiMCA_res}
\label{sec:D}

\subsection{MNIST + CIFAR}
\vspace{-0.4in}
\begin{figure}[H]
	\centering
	\subfigure[]{\includegraphics[width = 0.49\textwidth]{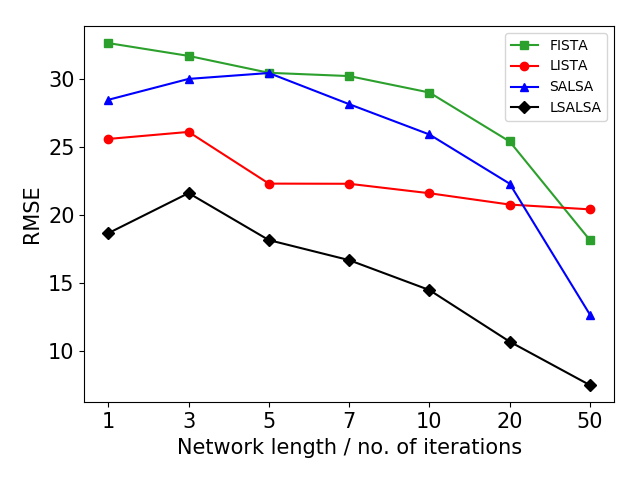}}
	\hfill
	\subfigure[]{\includegraphics[width = 0.49\textwidth]{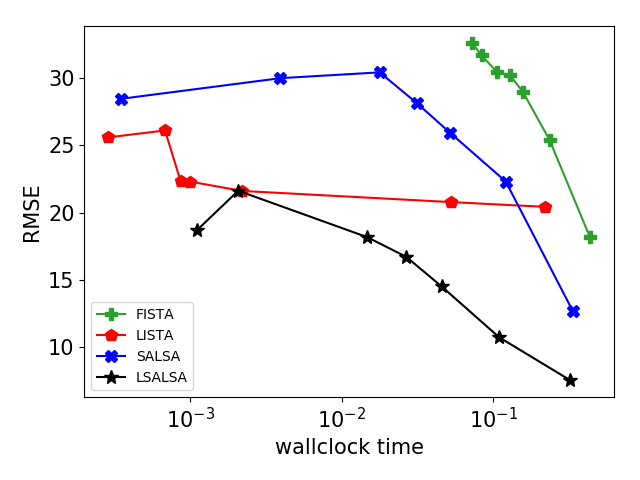}}
	\hfill
	%\vspace{-0.15in}
	\caption{MCA experiments with MNIST + CIFAR data sets. (a) Code prediction errors for varying numbers of iterations. (b) Code prediction error as a function of the inference wallclock time.}
	\label{fig:mc1}
\end{figure}

MNIST + CIFAR-10 MCA experimental results are summarized here. We combined whole $32\times32$ MNIST digits images with grayscale $32\times32$ CIFAR-10 images and performed MCA-based source separation on them. Optimal codes were computed with parameters $\alpha_1^*=1.5$ (for MNIST), $\alpha_2^*=2.5$ (for CIFAR10), and $\mu^*=10$. Code prediction error curves are presented with respect to the number of iterations $T$ and wallclock time used to make predictions in Figure~\ref{fig:mc1}. The classification results are captured in Table~\ref{tab:mc2} for MNIST codes and Table~\ref{tab:mc3} for CIFAR-10 codes.  

\small{
\begin{table}[H]
	
	\centering 
	\begin{tabular}{ |c ||c|c|c|c|}
		\hline
		\multicolumn{1}{|c||}{ } & \multicolumn{4}{|c|}{Classification Error (in \%)}\\\cline{2-5} 
		Iter & FISTA & LISTA & SALSA  & LSALSA 	\\[1ex]
		\hline
		1   & 66.87   & 37.20   & 33.38   & \textbf{5.88}\\
		\hline	
		3   & 90.00   & 33.22   & 60.73   & \textbf{7.31}\\	
		\hline
		5   & 90.00   & 18.04   & 19.29   & \textbf{4.30}\\	
		\hline
		7   & 90.00   & 15.90   & 8.87   & \textbf{3.21}	\\
		\hline
		10   & 90.00   & 13.59   & 5.36   & \textbf{3.20}\\	
		\hline
		20   & 8.44   & 10.20   & 2.86   & \textbf{4.65}	\\
		\hline
		50   & 21.24   & 6.47   & 12.94   & \textbf{2.98}\\	
		\hline
	\end{tabular}
	%\captionsetup{width=0.9\textwidth}

    \caption{MNIST classification error after source separation (10 classes). The best performer is highlighted in bold.}
	\label{tab:mc2}
\end{table}}
\vspace{-0.2in}
\small{
\begin{table}[H]
	\centering
	\begin{tabular}{ |c ||c|c|c|c|}
		\hline
		\multicolumn{1}{|c||}{ } & \multicolumn{4}{|c|}{Classification Error (in \%)}\\\cline{2-5} 
		Iter & FISTA & LISTA & SALSA  & LSALSA 	\\[1ex]
		\hline
		1   & 88.12   & 87.43   & \textbf{83.95}   & 84.27\\	
		\hline
		3   & 88.73   & 88.15   & 84.42   & \textbf{82.55}\\	
		\hline
		5   & 88.63   & 81.99   & 82.59   & \textbf{74.87}\\	
		\hline
		7   & 88.43   & 82.85   & 81.10   & \textbf{68.21}\\	
		\hline
		10   & 88.85   & 80.08   & 79.02   & \textbf{63.93}\\	
		\hline
		20   & 81.30   & 79.16   & 76.24   & \textbf{57.53}\\	
		\hline
		50   & 70.40   & 81.09   & 74.71   & \textbf{52.60}\\
		\hline	
	\end{tabular}
	%\captionsetup{width=0.9\textwidth}
	
	\caption{CIFAR-10 classification error after source separation (10 classes). The best performer is highlighted in bold. }
	\label{tab:mc3} 
	
\end{table}}

\begin{figure}[ht!]
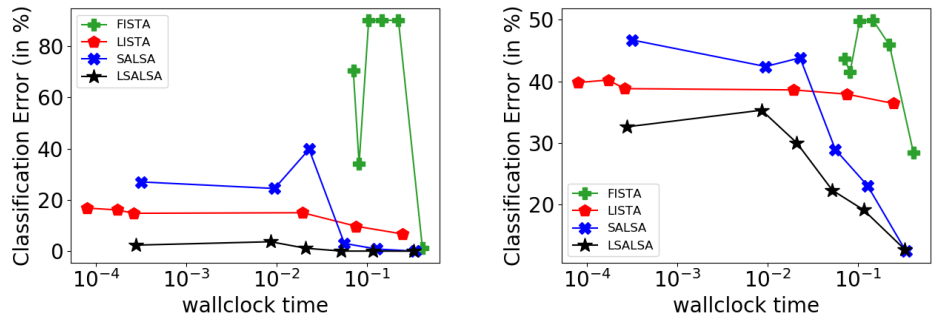

	\centering
	\begin{tabular}{cc}
		\hspace{-0.08in}\includegraphics[width=0.45\textwidth]{MCAimages/2D_time_vs_clerr_MNISTandCD_MNIST.png}&
		\hspace{-0.07in}\includegraphics[width=0.45\textwidth]{MCAimages/2D_time_vs_clerr_MNISTandCD_CD.png}
	\end{tabular}
	\caption{ MCA experiment separating MNIST + CIFAR components: The trade-off between the sparse codes classification error Vs their inference time for different network lengths is captured on (left) for MNIST (right) for CIFAR.}
	\label{fig: mc3 }
\end{figure}

\vspace{-0.5in}
\section{Additional plots: MNIST+ASIRRA}
\label{sec:E}
This Section shows the sparsity/accuracy tradeoff point-cloud plots for MNIST, complementary to Figure~\ref{fig: cdSparsity_multiN} in Section~\ref{subsec:mca}.
\begin{figure}[H]\centering
	\hspace*{\fill}%
	\subfigure[$T=1$]{\includegraphics[width=0.33\textwidth]{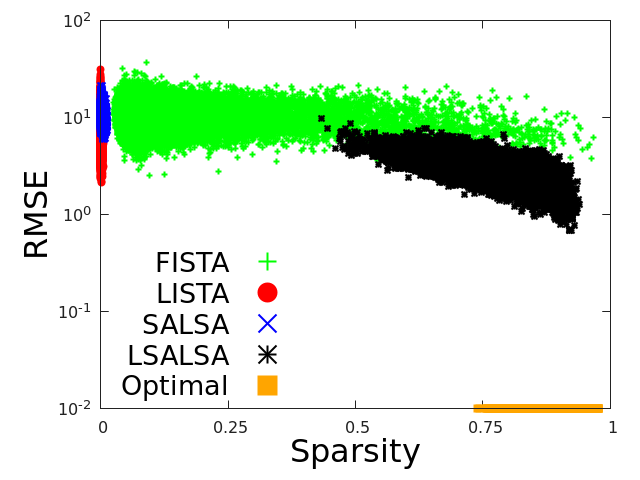}}\hspace{0.1em}%
	\subfigure[$T=3$]{\includegraphics[width=0.33\textwidth]{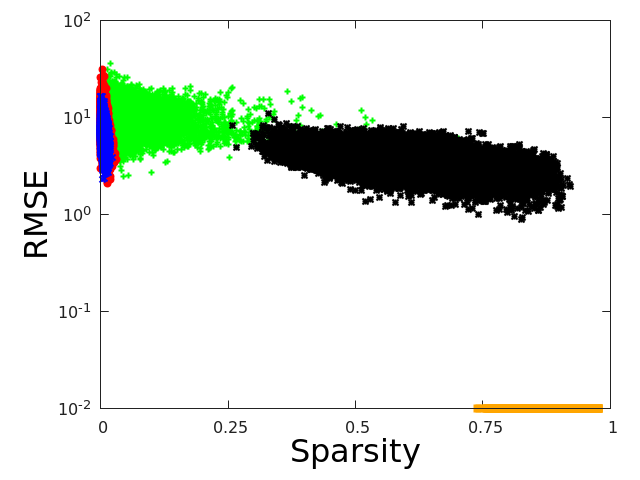}}\hspace{0.1em}%
	\subfigure[$T=5$]{\includegraphics[width=0.33\textwidth]{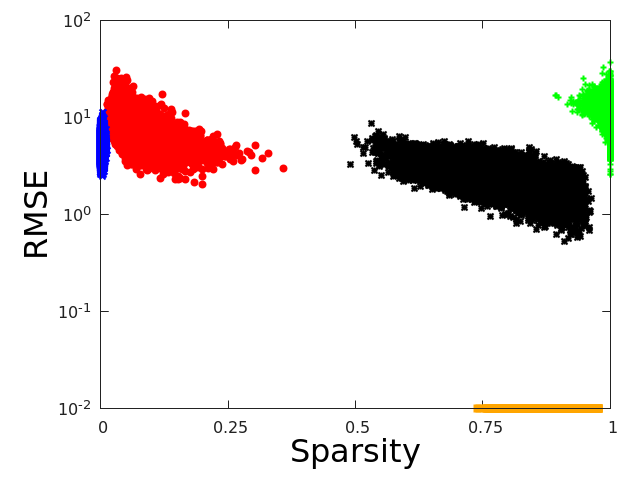}} \hspace*{\fill}%
	\\
	\hspace*{\fill}%
	\subfigure[$T=10$]{\includegraphics[width=0.33\textwidth]{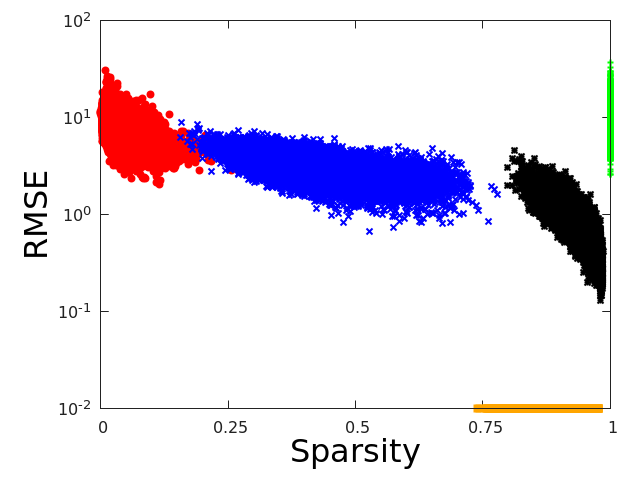}}\hspace{0.1em}%
	\subfigure[$T=20$]{\includegraphics[width=0.33\textwidth]{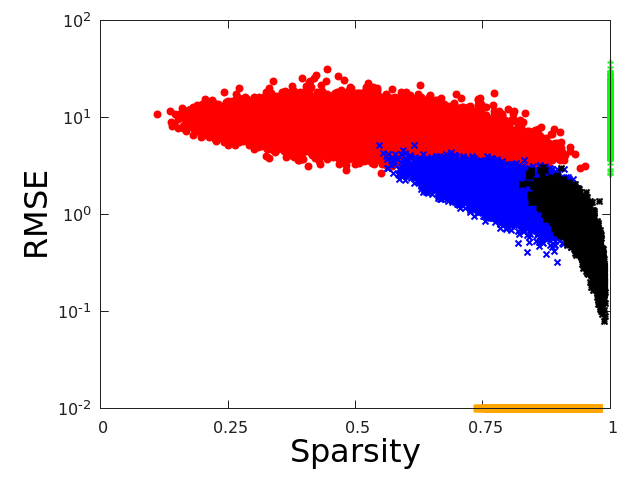}}\hspace{0.1em}%
	\subfigure[$T=50$]{\includegraphics[width=0.33\textwidth]{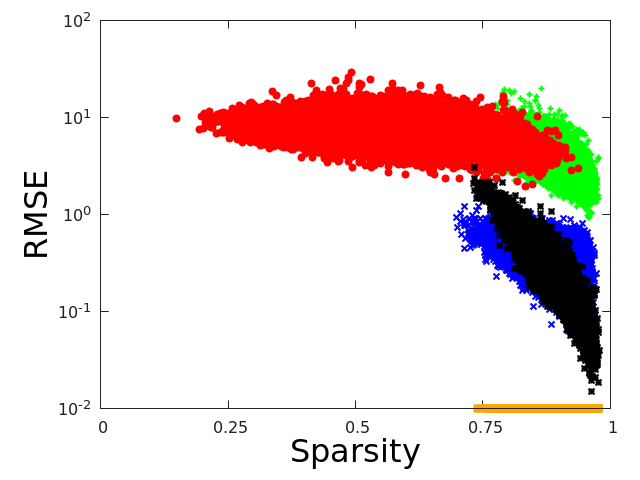}}
	\hspace*{\fill}%
	%\vspace{-0.15in}
	\caption{Sparsity/accuracy trade-off analysis for MNIST obtained for the source separation experiment with MNIST + ASIRRA data set. Each method corresponds to a colored point cloud, where each point corresponds to one sample from the ASIRRA test data set. LSALSA achieves the best sparsity/accuracy trade-off and is faster than other methods.}
	\label{fig: mnistSparsity_multiN}
\end{figure}

\newpage

\section{Source separation: image reconstruction results}
\label{sec:F}

\begin{figure}[!ht]
	\centering
    \includegraphics[width = \textwidth]{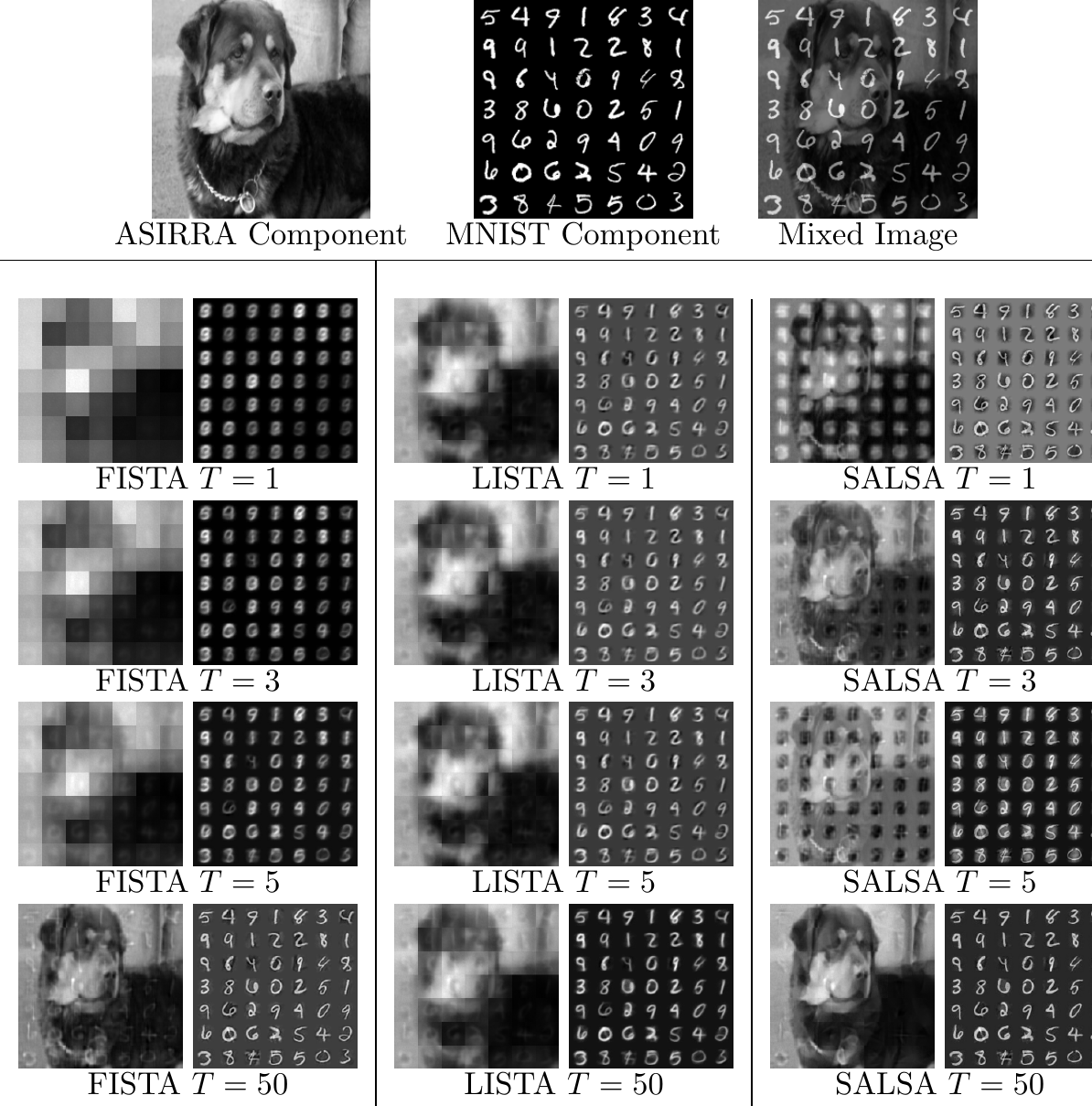}
	\caption{\label{recon: sLs30_SUPP_type2} MCA experiment using MNIST + ASIRRA data set. Image reconstructions obtained by SALSA, LSALSA, FISTA, LISTA for $T = 1,3,5,50$. Top row: original data (components and mixed).}
\end{figure}
% imList = torch.Tensor{30,236,346,352,354,355}--330,364}

\newpage

\begin{figure}[htp!]
	\centering
    \includegraphics[width = \textwidth]{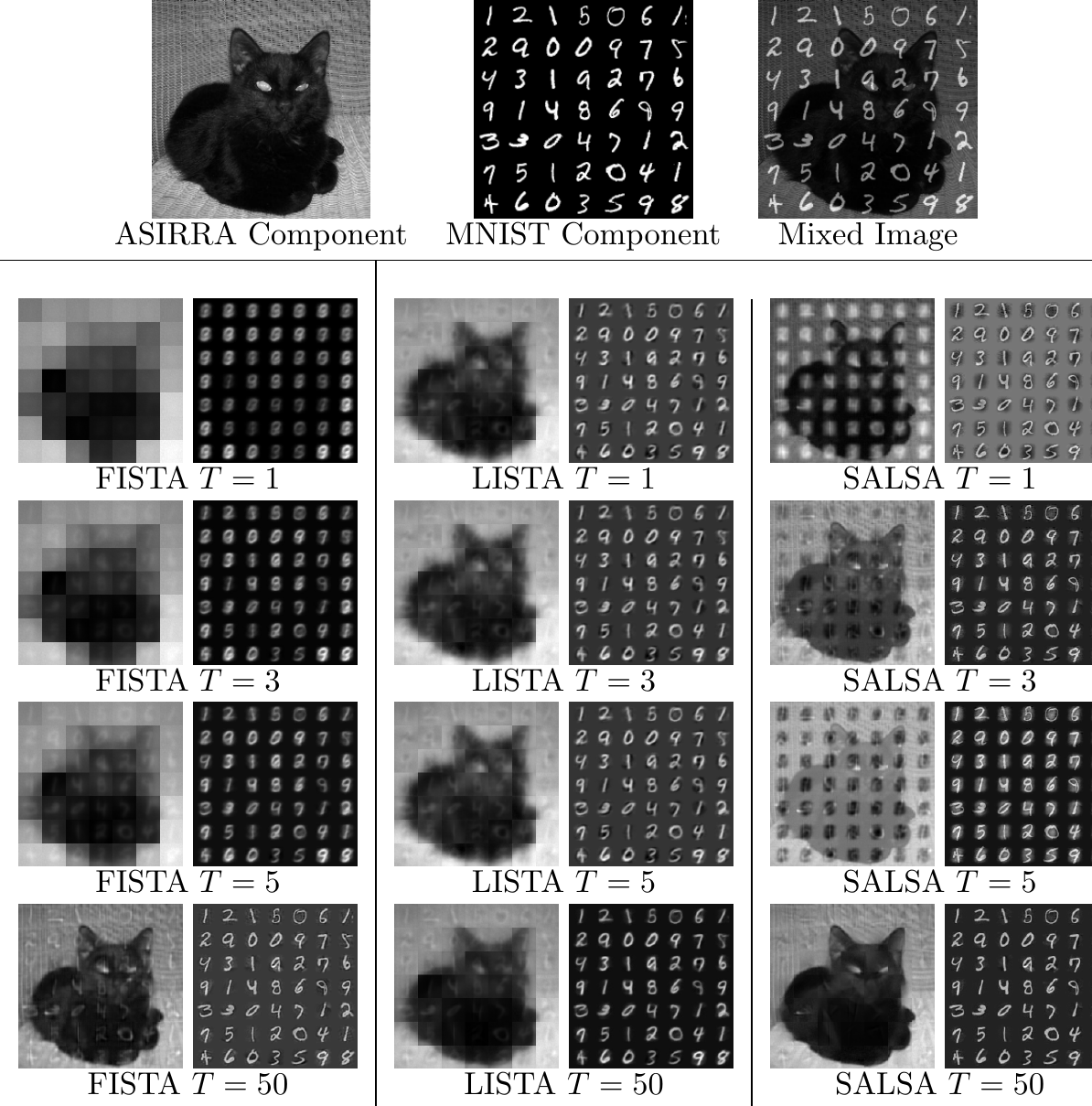}
	\caption{\label{recon: sLs30_SUPP_type2} MCA experiment using MNIST + ASIRRA data set. Image reconstructions obtained by SALSA, LSALSA, FISTA, LISTA for $T = 1,3,5,50$. Top row: original data (components and mixed).}
\end{figure}
% imList = torch.Tensor{30,236,346,352,354,355}--330,364}

\newpage

\begin{figure}[!ht]
	\centering
    \includegraphics[width=0.8\textwidth]{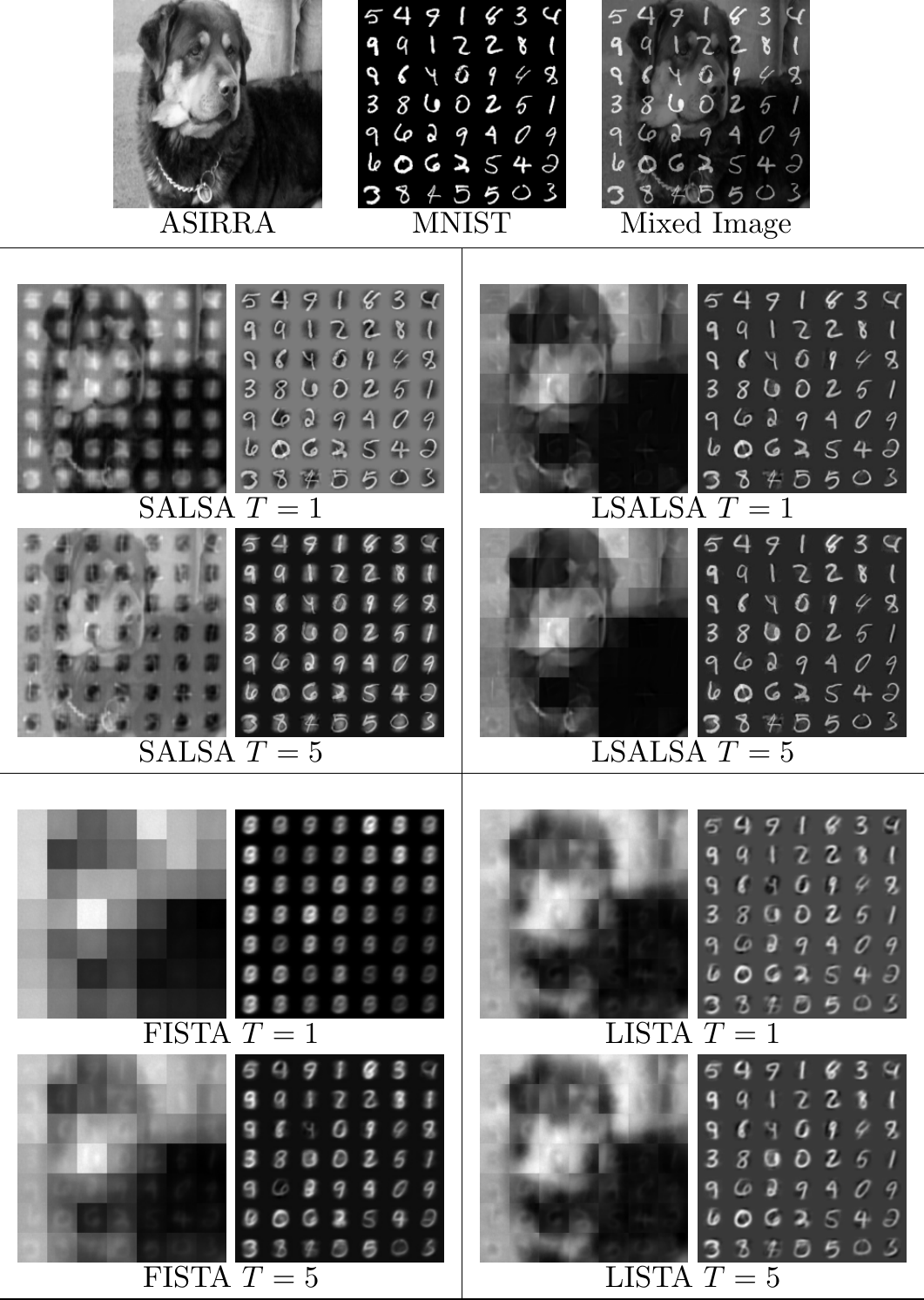}
	\caption{\label{recon: sLs30_SUPP} MCA experiment using MNIST + ASIRRA data set. Image reconstructions obtained by SALSA, LSALSA, FISTA, LISTA for $T = 1,5$. Top row: original data (components and mixed).}
\end{figure}

\newpage

\begin{figure}[!ht]
	\centering
    \includegraphics[width=0.8\textwidth]{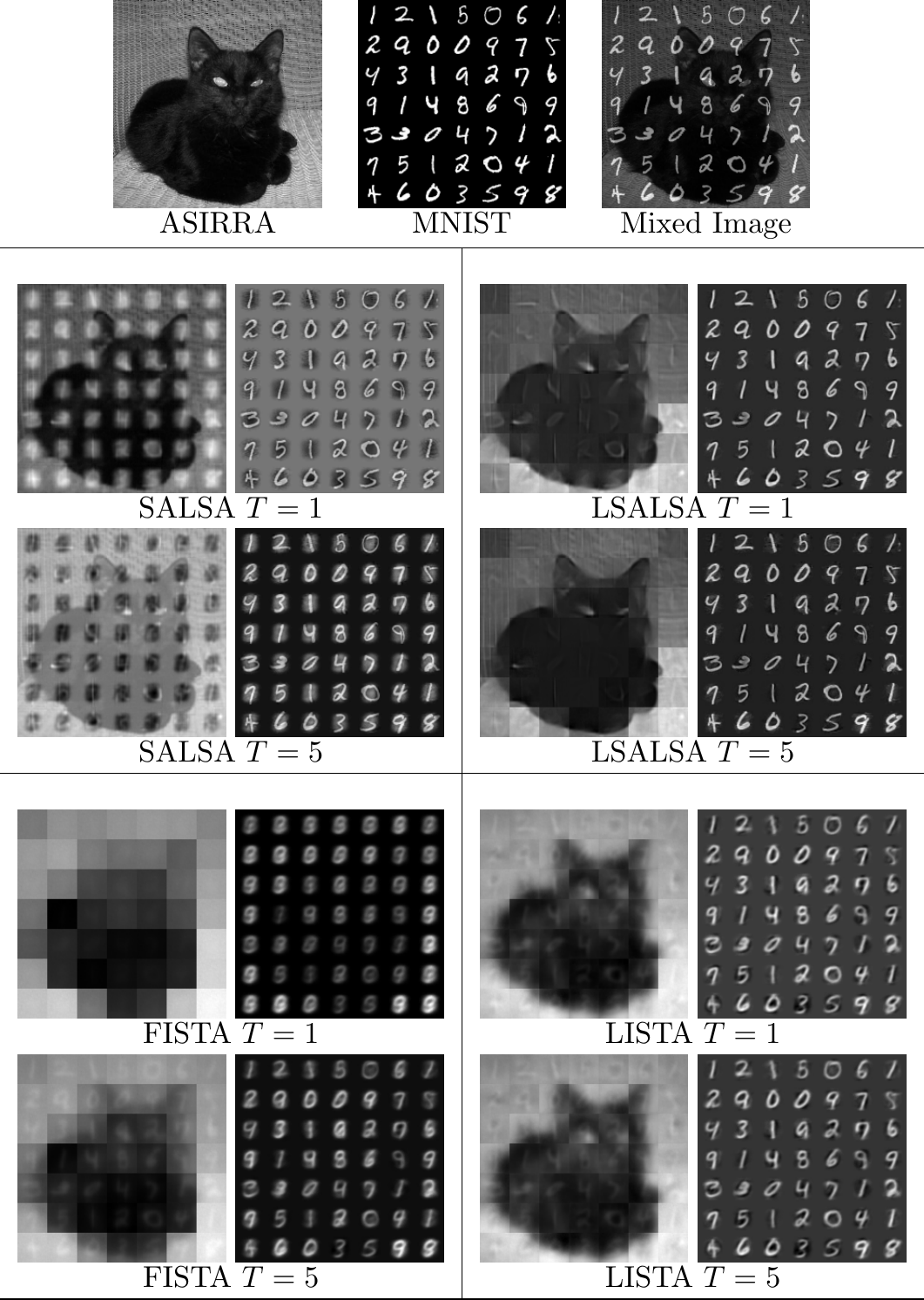}
	\caption{\label{recon: sLs30_SUPP} MCA experiment using MNIST + ASIRRA data set. Image reconstructions obtained by SALSA, LSALSA, FISTA, LISTA for $T = 1,5$. Top row: original data (components and mixed).}
\end{figure}

%%%%%%%%%%%%%%%%%%%%%%% IM 346 (basset hound)---- begin
%%%%% SHORT VERSION (manual concatentation below)
\begin{figure}[htp!]
	\centering
  \includegraphics[height=0.95\textheight]{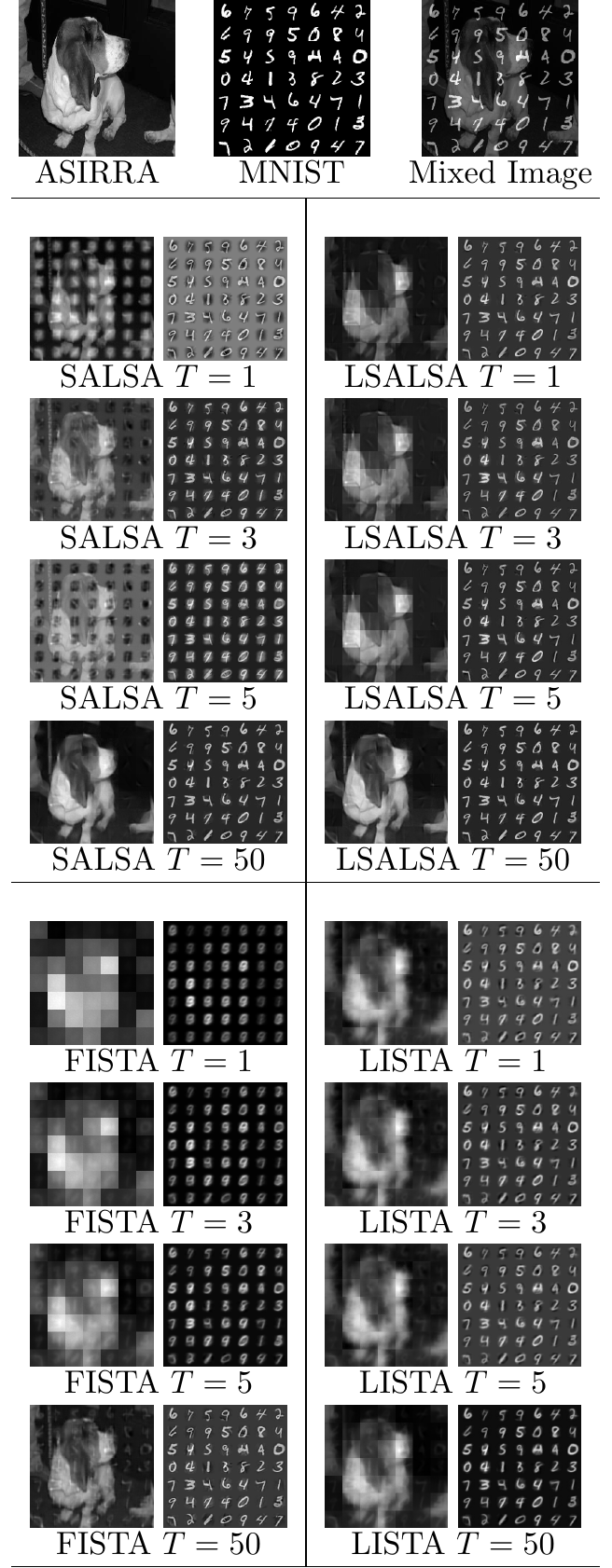}
	\caption{\label{recon: allmini346} MCA experiment. Image reconstructions obtained by SALSA, LSALSA, FISTA, LISTA for $T = 1,3,5,50$. Top row: original data (components and mixed).}
\end{figure}

\end{document}